\documentclass[11pt]{article}

\usepackage[T1]{fontenc}
\usepackage[utf8]{inputenc}
\usepackage[letterpaper,margin=1in]{geometry}
\usepackage[hyphens]{url}
\usepackage{graphicx}
\usepackage[round,authoryear]{natbib}
\usepackage{caption}
\usepackage{subcaption}
\usepackage{placeins}
\usepackage{float}
\usepackage{tikz}
\usetikzlibrary{arrows.meta,calc,shapes.geometric,decorations.pathreplacing}
\usepackage{booktabs}
\usepackage{amsmath,amssymb,amsthm,mathtools}
\usepackage{tabularx}
\usepackage{multirow}
\usepackage{microtype}
\usepackage{hyperref}
\hypersetup{
  colorlinks=true,
  linkcolor=blue,
  citecolor=blue,
  urlcolor=blue,
  pdftitle={Feed-Forward Steering in Transformer Residual Dynamics},
  pdfauthor={Timur Mudarisov, Mikhail Burtsev, Radu State}
}

\newtheorem{theorem}{Theorem}
\newtheorem{proposition}{Proposition}
\newtheorem{lemma}{Lemma}
\newtheorem{corollary}{Corollary}

\newcommand{\R}{\mathbb{R}}
\newcommand{\Sph}{\mathbb{S}}
\newcommand{\Prob}{\mathcal{P}}
\newcommand{\E}{\mathbb{E}}
\newcommand{\norm}[1]{\left\lVert #1 \right\rVert}
\newcommand{\ip}[2]{\left\langle #1,#2 \right\rangle}
\newcommand{\Proj}{\mathrm{P}}

\newcommand{\divS}{\mathrm{div}_{\Sph^{d-1}}}
\newcommand{\Spec}{\mathrm{Spec}}
\newcommand{\RePart}{\mathrm{Re}}

\newcommand{\id}{\mathrm{Id}}

\tikzset{
  tok/.style={circle,fill=black,inner sep=1.1pt},
  mem/.style={star,star points=5,star point ratio=2.3,fill=orange!85!red,draw=black,inner sep=1.4pt},
  attr/.style={circle,fill=blue!70!black,draw=black,inner sep=1.5pt},
  src/.style={circle,draw=black,fill=white,inner sep=1.5pt},
  fld/.style={-{Stealth[length=1.7mm]},gray!70,thin},
  big/.style={-{Stealth[length=2.6mm]},very thick},
  sphere/.style={draw=black!55,thick},
  lab/.style={font=\small},
}

\title{Feed-Forward Steering in Transformer Residual Dynamics}
\author{%
Timur Mudarisov$^{1}$ \quad Mikhail Burtsev$^{2}$ \quad Radu State$^{1}$\\[0.6em]
\small $^{1}$University of Luxembourg \qquad
$^{2}$London Institute of Mathematical Sciences
}
\date{}

\begin{document}
\maketitle

\begin{abstract}
Attention-only dynamical theories model Transformer residual directions as particles aggregating on a sphere. We extend this framework by incorporating the feed-forward network (FFN) term as a local steering field acting on each token state. The resulting theory predicts that the tangential component of the FFN field is necessary for motion in residual-direction space, that critical residual directions correspond to nonlinear projective equilibria, and that a direct sequential-to-parallel defect characterizes when a finite attention--FFN block can be accurately approximated by a parallel, additive map. Across GPT-2, Pythia, Mistral, and Llama models, the attention branch alone exhibits a substantial angular-alignment deficit relative to the realized block update, with the deficit increasing from GPT-2 to Llama-3-8B. Intervention experiments show that retaining only the tangential FFN component preserves most model quality, whereas retaining only the radial component causes performance to collapse. The tangential component also preserves output diversity under aggregation pressure. As a practical application, layers with small sequential-to-parallel defects can be approximately parallelized with only a modest increase in loss, whereas layers with large defects degrade rapidly. These findings support the interpretation of FFN layers as directional steering fields that shape Transformer residual geometry and govern the feasibility of block-level interventions.
\end{abstract}

\section{Introduction}
A transformer block combines a non-local mechanism represented by attention and a position-wise mechanism represented by fully connected layers (FFN). Existing continuous attention dynamics theories treat token directions as
interacting particles and explain clustering or rank collapse through aggregation
\citep{geshkovski2023mathematical,geshkovski2023emergence,karagodin2024causal}. In
parallel, interpretability and editing work show that FFNs store key--value memories
and are effective editing sites \citep{geva2021transformer,geva2022transformer,meng2022rome,meng2023memit}. We connect these views by making the FFN a term in the model of residual stream dynamics.

In our study we consider attention as aggregation, and FFN as steering.
Attention supplies a field $A_i(U)$ depending on other tokens, and the FFN supplies a
local field $\Phi_i(u_i)$ depending on the current token. Normalization makes
residual direction the natural state variable, while the residual magnitude
controls angular speed through the factor $1/r_i$
\citep{xiong2020layernorm,zhang2019rmsnorm,karagodin2025normalization}. The resulting model is
\begin{equation}
\dot u_i=\frac1{s_i(t)}P_{u_i}\big[A_i(U)+\Phi_i(u_i)\big].
\label{eq:flow}
\end{equation}
Here $s_i(t)>0$ is a prescribed angular-speed scale. When this sphere-valued
model is derived from an ambient residual dynamics $x_i=r_i u_i$, the exact
choice is $s_i(t)=r_i(t)=\|x_i(t)\|$. In the normalized-time models below we
set $s_i\equiv1$; this is a normalized-time direction-only approximation when
token norms vary. Setting $\Phi=0$ recovers attention-only particle dynamics, and nonzero
$\Phi$ turns it into an aggregation--steering system (Fig.~\ref{fig:concept-directions-a}). 
%Empirically, the one-step angular gain of the steering term increases with model scale, which suggests that the effect is not a small-model artifact but a stronger feature of larger decoder LLMs.

\begin{figure*}[!t]
\centering

\begin{subfigure}[t]{0.64\textwidth}
\centering
\resizebox{\textwidth}{!}{%
\begin{tikzpicture}[scale=0.9]

% tuning parameters for horizontal spacing
\def\sep{6.3}        % distance between neighboring circles
\def\halfsep{3.15}   % half of that, for + and = positions

% Panel 1: aggregation
\begin{scope}
  \node[lab] at (0,2.3) {\textbf{Aggregation} (attention)};
  \draw[sphere] (0,0) circle (1.55);
  \def\C{55}
  \draw[big,red!70!black] (0,0) -- (\C:1.55) node[above right]{$Vu$};
  \foreach \a in {10,30,80,120,150,200,250,300,330}{
    \node[tok] at (\a:1.55) {};
    \draw[fld,red!55] (\a:1.55) .. controls (\a:1.1) and (\C:1.1) .. (\C:1.4);}
  \node[lab,align=center] at (0,-2.2)
    {\scriptsize non-local pull toward\\ \scriptsize shared directions};
\end{scope}

\node at (\halfsep,0) {\Large $+$};

% Panel 2: steering
\begin{scope}[shift={(\sep,0)}]
  \node[lab] at (0,2.3) {\textbf{Steering} (FFN)};
  \draw[sphere] (0,0) circle (1.55);
  \foreach \w/\nm in {40/$w_1$,150/$w_2$,300/$w_3$}{
    \node[mem] at (\w:1.55) {};
    \node[lab] at (\w:1.9){\nm};
    \foreach \d in {-22,-11,11,22}{
      \pgfmathtruncatemacro\aa{\w+\d}
      \node[tok] at (\aa:1.55){};
      \draw[fld,orange!80!black] (\aa:1.55)
        .. controls (\aa:1.15) and (\w:1.15) .. (\w:1.45);
    }}
  \node[lab,align=center] at (0,-2.3)
    {\scriptsize local push toward\\ \scriptsize FFN value directions};
\end{scope}

\node at ({\sep+\halfsep},0) {\Large $=$};

% Panel 3: combined
\begin{scope}[shift={({2*\sep},0)}]
  \node[lab] at (0,2.3) {\textbf{Residual flow}};
  \draw[sphere] (0,0) circle (1.55);
  \node[attr] at (50:1.55){};
  \node[lab] at (50:1.9){\scriptsize attractor};
  \node[attr] at (175:1.55){};
  \node[src] at (300:1.55){};
  \node[lab,align=center] at (300:2.0){\scriptsize saddle direction};
  \foreach \a/\n in {20/50,70/50,120/175,150/175,210/175,250/175,330/50,10/50}{
     \draw[fld] (\a:1.55) .. controls (\a:1.2) and (\n:1.2) .. (\n:1.45);}
  \node[lab,align=center] at (0,-2.3)
    {\scriptsize zeros of $g$ $=$ critical directions\\
     \scriptsize stable $=$ residual attractors};
\end{scope}

\end{tikzpicture}}
\caption{\textbf{Aggregation--steering view of residual flow.}}
\label{fig:concept-directions-a}
\end{subfigure}
\hfill
\begin{subfigure}[t]{0.33\textwidth}
\centering
\resizebox{\textwidth}{!}{%
\begin{tikzpicture}[scale=0.95, every node/.style={font=\small}]
  \draw[gray!70, thick] (0,0) circle (1.55);
  \node at (-0.45,1.86) {unit sphere $\Sph^{d-1}$};

  \coordinate (O) at (0,0);
  \coordinate (X) at (38:1.55);
  \coordinate (R) at ($(X)+(0.90,0.65)$);
  \coordinate (F) at ($(X)+(1.20,0.18)$);
  \coordinate (T) at ($(X)+(0.55,-0.55)$);

  \draw[->, thick] (O) -- (X) node[pos=0.58, above left] {$x=ru$};
  \fill (X) circle (1.2pt) node[above left] {$u$};

  \draw[->, very thick, black!70] (X) -- (R)
    node[above right] {$\langle u,F\rangle u$ (radial)};
  \draw[->, very thick, red!80!black] (X) -- (F)
    node[right] {$F=Vu+\alpha\Phi$};
  \draw[->, very thick, blue!80!black] (X) -- (T)
    node[below right] {$g=P_uF$ (tangent)};

  \draw[->, blue!65!black, thick] ($(X)+(0.10,-0.05)$)
    .. controls ($(X)+(0.15,-0.35)$) and ($(X)+(0.35,-0.45)$) .. ($(X)+(0.52,-0.50)$);

  \node[align=center, font=\small] at (0,-2.35)
  {Only the \textbf{tangential} part $g=P_uF$ turns $u$;\\
   the radial part rescales $r$. Magnitude sets angular \emph{speed} only.};
\end{tikzpicture}}
\caption{\textbf{Why directions.}}
\label{fig:concept-directions-b}
\end{subfigure}

\caption{\textbf{Residual-direction dynamics.}
\textbf{(a)} Attention is a masked non-local aggregation field. The FFN is a local steering field whose value directions can create, move, or destabilize critical residual directions. The combined field $g(u)=P_u(Vu+\Phi(u))$ determines candidate critical residual directions and their stability.
\textbf{(b)} The state splits as $x=ru$. Only the tangential part $g=P_uF$ of the field turns $u$. The radial part rescales the magnitude $r$, which sets angular speed but not the critical directions.}
\label{fig:concept-directions}
\end{figure*}

In our paper we provide the following contributions: (i) We formulate residual dynamics as an aggregation--steering flow on the sphere, with attention as the non-local aggregation term and the FFN as a local steering term. (ii) We identify the tangential FFN field $P_u\Phi(u)$, nonlinear equilibria, tangent stability, FFN spectral shifts, and the attention--FFN commutator as mathematical objects. (iii) We validate the framework with four experiments: (1) an attention-only angular-deficit diagnostic with an explicit scale trend, (2) tangential/radial FFN ablation, (3) diversity under aggregation pressure, and (4) practical sequential-to-parallel defect-guided parallelization. Additional diagnostics -- FFN-gate calibration, prefix/Volterra sensitivity, local spectrum geometry of approximate critical candidates, OV non-symmetry, and editing locality -- are provided in the Appendix.

\section{Problem Setting}

Let $x_i^\ell=r_i^\ell u_i^\ell$, where $u_i^\ell\in\Sph^{d-1}$. For a continuous-depth idealization $\dot x_i=F_i$,
\begin{equation}
\dot r_i=\langle u_i,F_i\rangle,\quad
\dot u_i=\frac1{r_i}P_{u_i}F_i,\quad P_u=I-uu^\top,
\label{eq:split}
\end{equation}
where $P_u$ is the orthogonal projection onto the tangent space at $u$. Only the tangential component changes direction, and radial motion changes norm and therefore angular speed. Attention contributes a non-local aggregation field
\begin{equation}
A_i(U)=\sum_{j\in\mathcal N(i)}a_{ij}(U)Vu_j .
\end{equation}
For exposition one may write a standard FFN as
\begin{equation}
\Phi(u)=\sum_p\gamma_p w_p\sigma(\langle k_p,u\rangle),
\label{eq:phi}
\end{equation}
but the experiments use the actual checkpoint FFNs. For Llama/Mistral-style SwiGLU blocks \citep{shazeer2020glu,jiang2023mistral,dubey2024llama} this local steering has the form
\begin{equation}
\Phi(u)=W_{\rm down}\!\left[\sigma(W_{\rm gate}u)\odot(W_{\rm up}u)\right],
\end{equation}
which is still a Lipschitz position-wise field and therefore fits the same dynamical
framework. Thus the additive key--value notation in Eq.~\eqref{eq:phi} is a
mathematical proxy, not a claim that gated FFNs have literal key-equals-value rows;
our architecture-robust results use the actual checkpoint MLP/SwiGLU modules and
measure the induced direction field.

The full model is the particle system in Eq.~\eqref{eq:flow}. The simplified
single-particle field
\begin{equation}
g(u)=P_u\big(Vu+\alpha \Phi(u)\big)
\label{eq:single}
\end{equation}
is used only as a local phase-portrait approximation near a coherent token cluster and not a global model of a causal decoder. Zeros of this projected field are \emph{critical residual directions}. Stable zeros are \emph{residual attractors}, and mixed-stability zeros are \emph{saddle residual directions}.

% \begin{figure}[!t]\centering
% \begin{tikzpicture}[scale=0.95, every node/.style={font=\small}]
%   \draw[gray!70, thick] (0,0) circle (1.55);
%   \node at (-0.45,1.86) {unit sphere $\Sph^{d-1}$};

%   \coordinate (O) at (0,0);
%   \coordinate (X) at (38:1.55);
%   \coordinate (R) at ($(X)+(0.90,0.65)$);
%   \coordinate (F) at ($(X)+(1.20,0.18)$);
%   \coordinate (T) at ($(X)+(0.55,-0.55)$);

%   \draw[->, thick] (O) -- (X) node[pos=0.58, above left] {$x=ru$};
%   \fill (X) circle (1.2pt) node[above left] {$u$};

%   \draw[->, very thick, black!70] (X) -- (R)
%     node[above right] {$\langle u,F\rangle u$ (radial)};
%   \draw[->, very thick, red!80!black] (X) -- (F)
%     node[right] {$F=Vu+\alpha\Phi$};
%   \draw[->, very thick, blue!80!black] (X) -- (T)
%     node[below right] {$g=P_uF$ (tangent)};

%   \draw[->, blue!65!black, thick] ($(X)+(0.10,-0.05)$)
%     .. controls ($(X)+(0.15,-0.35)$) and ($(X)+(0.35,-0.45)$) .. ($(X)+(0.52,-0.50)$);

%   \node[align=center, font=\small] at (0,-2.35)
%   {Only the \textbf{tangential} part $g=P_uF$ turns $u$;\\
%    the radial part rescales $r$. Magnitude sets angular \emph{speed} only.};
% \end{tikzpicture}
% \caption{\textbf{Why directions.} The state splits as $x=r u$. Only the
% tangential part $g=P_uF$ of the field turns $u$; the radial part rescales the
% magnitude $r$, which sets angular speed but not the critical directions.}
% \label{fig:directions}
% \end{figure}

For full attention, the usual exchangeable mean-field limit applies. Decoder-only models are causal and instead induce a prefix-indexed Volterra flow: if $s=i/L$ and $\nu_{t,s}=s^{-1}\int_0^s\delta_{u(t,r)}dr$, then
in normalized time the continuum velocity is $P_u[A[\nu_{t,s}](u)+\Phi(u)]$. We treat the mean-field
theorem as the clean bidirectional case and test the causal prefix object empirically
in the appendix.

\section{Theory}

This section presents the mathematical objects used by the experiments and full proofs are given in the appendix. The construction follows the continuous attention-dynamics viewpoint of prior work \citep{geshkovski2023mathematical,karagodin2024causal,karagodin2025normalization}, but adds the FFN as a local steering field.

\begin{theorem}[Well-posedness; short form]\label{thm:wp}
Assume every attention mask is fixed and nonempty, the softmax temperature satisfies
$\tau>0$, each local field $\Phi_i:\Sph^{d-1}\to\R^d$ is Lipschitz, and the
prescribed speed factors are measurable with $s_i(t)\ge s_->0$ almost everywhere.
Then, for every initial condition in $(\Sph^{d-1})^L$, Eq.~\eqref{eq:flow} has a
unique global absolutely continuous solution that remains in
$(\Sph^{d-1})^L$. If the coefficients are continuous in time, the solution is
classical, and if the system is autonomous, these solutions define a global flow.
ReLU is covered because it is globally Lipschitz.
\end{theorem}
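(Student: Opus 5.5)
The plan is to use the standard Carathéodory approach: extend the vector field to a neighborhood of the sphere (or to all of $(\R^d)^L$), prove global Lipschitz bounds, get a unique global solution, and then show the sphere is invariant.

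\textbf{Step 1: extend the field.} Write the right-hand side of Eq.~\eqref{eq:flow} as $G_i(t,U)=s_i(t)^{-1}P_{u_i}[A_i(U)+\Phi_i(u_i)]$. Extend it to $U\in(\R^d)^L$ by replacing each $u_j$ inside $A_i$ and $\Phi_i$ with $\pi(u_j)$. Here $\pi$ is a Lipschitz retraction onto $\Sph^{d-1}$, for instance $\pi(x)=x/\max(\|x\|,1/2)$ composed with radial normalization on the annulus $1/2\le\|x\|\le 2$, with the annulus cut off by a smooth bump. Keep $P_{u_i}=I-u_iu_i^\top$ unchanged near the sphere. The key bounds are as follows.
\begin{itemize}
\item Softmax weights $a_{ij}(U)=\exp(\langle Qu_i,Ku_j\rangle/\tau)/\sum_{k\in\mathcal N(i)}\exp(\cdots)$ are smooth. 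Because the mask is nonempty, the denominator is bounded below on bounded sets. Hence $A_i$ is bounded and Lipschitz on the compact set $(\Sph^{d-1})^L$, with constants depending on $\|Q\|,\|K\|,\|V\|,\tau$.
\item $\Phi_i$ is Lipschitz and therefore bounded on the compact sphere.
\item Products of bounded Lipschitz maps are Lipschitz, so $P_{u_i}[\cdot]$ is Lipschitz on a bounded neighborhood.
\item The factor $s_i^{-1}\le s_-^{-1}$ is measurable and bounded.
\end{itemize}
With the cutoff, $G$ is globally bounded and globally Lipschitz in $U$, uniformly for a.e.\ $t$, and measurable in $t$.

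\textbf{Step 2: existence and uniqueness.} Apply the Carathéodory existence–uniqueness theorem, i.e.\ Picard iteration in $C([0,T])$ with the integral equation. This gives a unique absolutely continuous solution on every $[0,T]$, so the solution is global, since a bounded field rules out blow-up.

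\textbf{Step 3: invariance of the sphere.} For a.e.\ $t$,
$$\tfrac{d}{dt}\|u_i\|^2=2\langle u_i,G_i\rangle.$$
Near the sphere this equals $2s_i^{-1}\langle u_i,(I-u_iu_i^\top)w\rangle=2s_i^{-1}(1-\|u_i\|^2)\langle u_i,w\rangle$. So $\rho_i=\|u_i\|^2-1$ satisfies a linear equation $\dot\rho_i=-c_i(t)\rho_i$ with $c_i$ bounded. Starting from $\rho_i(0)=0$, Grönwall gives $\rho_i\equiv0$. The solution therefore never leaves the region where the extension agrees with the original field. This makes it a genuine solution of Eq.~\eqref{eq:flow}, and any sphere-valued solution of the original system is also a solution of the extended one, so uniqueness transfers.

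\textbf{Step 4: remaining claims.} If the coefficients are continuous in $t$, the integrand of the integral equation is continuous, so $U$ is $C^1$ and the solution is classical. In the autonomous case, uniqueness and global existence give the cocycle property $\varphi_{t+s}=\varphi_t\circ\varphi_s$. Continuity in initial data follows from Grönwall's Lipschitz estimate, so the solutions define a global flow. ReLU satisfies $|\sigma(a)-\sigma(b)|\le|a-b|$, so Eq.~\eqref{eq:phi} is Lipschitz and the hypotheses cover it.

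\textbf{Main obstacle.} The only delicate point is organizing the extension so that it is globally Lipschitz and the invariance computation is exact. Using the non-normalized extension $P_u=I-uu^\top$ together with the bump cutoff handles this: the cutoff equals one on a neighborhood of the sphere, and that neighborhood is all Step 3 uses.
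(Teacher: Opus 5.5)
Your argument is correct, but it enforces the sphere constraint differently from the paper. The paper stays on $M=(\Sph^{d-1})^L$ throughout. It shows that $X_i(U)=\Proj_{u_i}(A_i(U)+\Phi_i(u_i))$ is bounded and Lipschitz on the compact manifold, using the lower bound $|\mathcal N(i)|e^{-\norm{Q}\norm{K}/\tau}$ on the softmax denominator. It then applies Carath\'eodory in local charts of $M$, continues the local solution globally using compactness and the speed bound $\norm{\dot U}\le C_X/s_-$, and gets uniqueness from Gr\"onwall.

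You instead extend the field to $(\R^d)^L$ and apply Carath\'eodory once in Euclidean space, so global existence follows directly from the global Lipschitz bound with no continuation step. You then prove invariance from the exact identity $\ip{u}{\Proj_u w}=(1-\norm{u}^2)\ip{u}{w}$, which makes $\rho_i=\norm{u_i}^2-1$ solve a linear scalar equation with zero initial value, and finally transfer uniqueness back to $M$.

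The paper's route needs no extension, which is convenient because $\Phi_i$ is only defined on the sphere. In charts, invariance is automatic, so the paper's computation $\frac{d}{dt}\norm{u_i}^2=0$ is only a consistency check; it relies on $\ip{u}{\Proj_u y}=0$ and hence on $\norm{u_i}=1$. Your route has to prove invariance, and your $\rho_i$-equation does so without presupposing $\norm{u_i}=1$.

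Two small repairs are needed. First, $x/\max(\norm{x},1/2)$ is not a retraction onto the sphere. Use $\pi(x)=x/\norm{x}$ on $\{\norm{x}\ge 1/2\}$, and multiply the whole field by a smooth cutoff $\chi$ supported there and equal to $1$ near $M$. Then $\dot\rho_i=-\chi\,c_i\,\rho_i$ holds everywhere, and the bootstrap argument about staying where $\chi=1$ becomes unnecessary. Second, the paper's solution is $U:\R\to M$ and a flow is two-sided, so note that the same argument runs backward in time.
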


This theorem ensures that the proposed particle dynamics is a well-defined ODE on the product of spheres. 

\paragraph{Equilibria and stability.}
Next we analyze critical points of the projected system. A critical residual direction $u_\star$ satisfies
\begin{equation}
Vu_\star+\alpha\Phi(u_\star)=\rho_\star u_\star.
\label{eq:eq}
\end{equation}

Its tangent stability is controlled by
\begin{equation}
J_\star=P_{u_\star}\big(V+\alpha D\Phi(u_\star)-\rho_\star I\big)|_{T_{u_\star}\Sph^{d-1}}.
\label{eq:jac}
\end{equation}

The corresponding stability criterion is spectral:
\begin{proposition}[Stability test]\label{prop:stab}
Assume that $\Phi$ is $C^1$ in a neighborhood of $u_\star$. Then
Eq.~\eqref{eq:jac} is the intrinsic tangent linearization of $g_\alpha$ at
$u_\star$. If $u_\star$ is hyperbolic, it is a locally exponentially
asymptotically stable residual attractor iff $\max\Re\Spec(J_\star)<0$.
\end{proposition}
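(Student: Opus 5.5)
The plan is to extend $g_\alpha(u)=P_u F(u)$, with $F(u)=Vu+\alpha\Phi(u)$, to the ambient formula $G(x)=F(x)-\langle x,F(x)\rangle x$ on a neighborhood of $u_\star$ in $\R^d$. The extension $G$ is $C^1$ wherever $\Phi$ is, and it agrees with $g_\alpha$ on $\Sph^{d-1}$. For a tangent vector $h\in T_{u_\star}\Sph^{d-1}$ (so $\langle u_\star,h\rangle=0$), differentiate along a curve $c(t)\subset\Sph^{d-1}$ with $c(0)=u_\star$, $\dot c(0)=h$. This gives
\[
DG(u_\star)h = DF(u_\star)h-\big(\langle h,F(u_\star)\rangle+\langle u_\star,DF(u_\star)h\rangle\big)u_\star-\langle u_\star,F(u_\star)\rangle h .
\]
Now use the equilibrium relation \eqref{eq:eq}: $F(u_\star)=\rho_\star u_\star$, hence $\langle u_\star,F(u_\star)\rangle=\rho_\star$ and $\langle h,F(u_\star)\rangle=\rho_\star\langle h,u_\star\rangle=0$. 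Therefore
\[
DG(u_\star)h=(I-u_\star u_\star^\top)\big(DF(u_\star)-\rho_\star I\big)h ,
\]
which is exactly $J_\star h$ with $DF=V+\alpha D\Phi$. In particular $DG(u_\star)$ maps $T_{u_\star}\Sph^{d-1}$ into itself.

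Next, I will argue that this restricted derivative is the intrinsic linearization. At a zero of a vector field on a manifold, the derivative is coordinate-free: in any chart, the terms involving Christoffel symbols or chart second derivatives multiply $g_\alpha(u_\star)=0$ and vanish. Concretely, I would use the chart $\psi(v)=(u_\star+v)/\|u_\star+v\|$ for $v\in T_{u_\star}\Sph^{d-1}$. Here $D\psi(0)=\mathrm{id}$ on the tangent space, so the pulled-back field $\tilde g(v)=D\psi(v)^{-1}g_\alpha(\psi(v))$ satisfies $D\tilde g(0)=Dg_\alpha(u_\star)|_{T}=J_\star$. The term coming from differentiating $D\psi^{-1}$ is multiplied by $g_\alpha(u_\star)=0$. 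This establishes the first claim.

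For the stability equivalence, I will apply the standard Lyapunov linearization theorem and Hartman–Grobman (or its $C^1$ hyperbolic version) to the $C^1$ field $\tilde g$ on an open subset of $\R^{d-1}$; local well-posedness comes from Theorem~\ref{thm:wp}, or simply from local Lipschitzness. If $\max\RePart\Spec(J_\star)<0$, a quadratic Lyapunov function $\langle v,Pv\rangle$ solving $J_\star^\top P+PJ_\star=-I$ yields local exponential stability in chart coordinates. Because $\psi$ is a local diffeomorphism with bi-Lipschitz constants near $1$, this transfers to exponential stability in the geodesic or ambient distance on the sphere. Conversely, suppose $u_\star$ is hyperbolic but some eigenvalue has positive real part. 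Then the stable manifold theorem gives a nontrivial $C^1$ unstable manifold, so arbitrarily close initial points leave a fixed neighborhood, and $u_\star$ is not asymptotically stable. Hyperbolicity excludes eigenvalues on the imaginary axis, so these two cases are exhaustive, which proves the iff.

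The main obstacle is the bookkeeping in the intrinsic-linearization step: showing that the ambient projected Jacobian coincides with the chart derivative. The key is that the correction terms are multiples of $g_\alpha(u_\star)$ or of $\langle h,u_\star\rangle$, both of which vanish at an equilibrium. The converse direction needs only $C^1$ regularity, which is available by assumption, so citing the unstable manifold theorem (or the instability part of Lyapunov's indirect method) suffices.
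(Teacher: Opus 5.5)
Your proposal is correct and follows essentially the same route as the paper. You differentiate the ambient extension of $P_uF_\alpha(u)$, use $F_\alpha(u_\star)=\rho_\star u_\star$ and $\langle h,u_\star\rangle=0$ to reduce the derivative to $P_{u_\star}(V+\alpha D\Phi(u_\star)-\rho_\star I)h$, and then invoke the linearization theorem for one direction and the unstable manifold under hyperbolicity for the converse. Your explicit chart argument, showing that the correction terms multiply $g_\alpha(u_\star)=0$, adds detail where the paper only asserts the result because the image is tangent.
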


\paragraph{How FFN spectrum changes clustering.}
At a fixed critical direction $u_\star$ for a fixed value of $\alpha$, set
$\rho_A=\langle u_\star,Vu_\star\rangle$ and
$\rho_\Phi=\langle u_\star,\Phi(u_\star)\rangle$. On
$T=T_{u_\star}\Sph^{d-1}$ define
$J_A(u_\star)=P_{u_\star}(V-\rho_A I)|_T$ and
$J_\Phi(u_\star)=P_{u_\star}(D\Phi(u_\star)-\rho_\Phi I)|_T$. Then the exact
pointwise decomposition is
\begin{equation}
J_\star=J_A(u_\star)+\alpha J_\Phi(u_\star).
\label{eq:ffnspec}
\end{equation}
Along a moving equilibrium branch $u_\alpha$, both operators in
Eq.~\eqref{eq:ffnspec} generally depend on $\alpha$, so no affine eigenvalue law
follows. If instead $u_\star$ is a common critical direction,
$Vu_\star=\rho_Au_\star$ and $\Phi(u_\star)=\rho_\Phi u_\star$, and
$J_A(u_\star),J_\Phi(u_\star)$ are simultaneously diagonalizable over
$\mathbb C$, their common modes satisfy
$\lambda_k(\alpha)=\lambda_k^A+\alpha\lambda_k^\Phi$.

\paragraph{Steering, invariant regions, and order defects.}
Assume that $\Phi$ is $C^2$ near a hyperbolic attention-only critical direction $u_0$, and let $J_0$ denote its tangent Jacobian. The first-order expansion established in Theorem~\ref{thm:steer} below is
\begin{equation}
u_\alpha=u_0-\alpha J_0^{-1}P_{u_0}\Phi(u_0)+O(\alpha^2),
\label{eq:steer}
\end{equation}
which motivates FFN-value editing. If two separated regions carry masses $p$ and
$1-p$ and every cross-region angle is at least $\theta_0$, then
\begin{equation}D=1-\|\mathbb E u\|^2\ge 2p(1-p)(1-\cos\theta_0).
\label{eq:anti}\end{equation}
\begin{corollary}[Conditional anti-collapse; short form]\label{cor:anti}
Let $S_1,S_2$ be disjoint Borel sets, let $0\le\theta_0\le\pi$, and
let the law of $u_t$ be supported on $S_1\cup S_2$, with masses $p$ and $1-p$.
Assume $\langle v,w\rangle\le\cos\theta_0$ for every $v\in S_1$,
$w\in S_2$. Then Eq.~\eqref{eq:anti} holds. If $S_1$ and $S_2$
are forward-invariant and the initial law has these masses, the same lower bound
holds for every later time.
\end{corollary}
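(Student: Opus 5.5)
The plan is to write the diversity $D=1-\|\E u\|^2$ as a pair-correlation integral and then split it by regions. Let $\mu$ be the law of $u_t$, and let $u,u'$ be independent copies. Since $\|u\|=1$ almost surely,
\[
\|\E u\|^2=\E\langle u,u'\rangle ,
\]
so
\[
D=\E\bigl[1-\langle u,u'\rangle\bigr]=\tfrac12\E\|u-u'\|^2\ge 0 .
\]
The integrand is nonnegative because both vectors lie on the unit sphere.

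Next, partition the event space according to which of $S_1,S_2$ contains $u$ and $u'$; this is possible because $\mu$ is supported on the disjoint union $S_1\cup S_2$. The two same-region events contribute nonnegative amounts, so I drop them. The two cross-region events have total probability $2p(1-p)$ by independence. On these events the hypothesis gives $\langle u,u'\rangle\le\cos\theta_0$, so the integrand is at least $1-\cos\theta_0\ge 0$. Combining,
\[
D\ge 2p(1-p)(1-\cos\theta_0),
\]
which is Eq.~\eqref{eq:anti}.

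Measurability causes no trouble. The sets $S_k$ are Borel, so the events $\{u\in S_k,\,u'\in S_l\}$ are product-measurable. The integrand is continuous and bounded by $2$, so every expectation above is finite.

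For the time-propagation claim, I interpret forward invariance in the usual sense: the well-posed flow of Theorem~\ref{thm:wp} maps $S_k$ into $S_k$ for all later times. Let $\Psi_t$ denote the flow map, so that $u_t=\Psi_t(u_0)$. Then $\{u_0\in S_k\}\subseteq\{u_t\in S_k\}$. Hence $\Prob(u_t\in S_1)\ge p$ and $\Prob(u_t\in S_2)\ge 1-p$. These two probabilities sum to at most one, since the sets are disjoint, so both inequalities are equalities. The support condition therefore persists with the same masses, and the static bound applies at every $t$. The angular separation hypothesis concerns the sets themselves, not the law, so it is inherited automatically.

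The only point needing care is this mass-preservation step. In the particle (non-mean-field) setting I would apply it to the empirical law, one particle at a time: each trajectory stays in its initial region, so the region counts, and hence the empirical masses, are constant in time. Apart from that, the argument is a direct computation.
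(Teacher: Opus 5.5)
Your argument is correct and is essentially the paper's proof. The paper writes $\int u\,d\mu_t=pb_1+(1-p)b_2$ with conditional means $b_j$, then uses $\|b_j\|\le1$ and $\langle b_1,b_2\rangle=\E\langle U,V\rangle\le\cos\theta_0$. Expanding that expression is exactly your four-event split of $\E[1-\langle u,u'\rangle]$ with the same-region terms dropped, and your form has the small advantage of needing no separate case $p\in\{0,1\}$, where $b_j$ is undefined.

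Your mass-preservation step, that invariance makes $\mu_t(S_k)$ nondecreasing and disjointness then forces equality, is more explicit than the paper's one-line assertion. For the interacting system, forward invariance should be read trajectory-wise, or via the characteristic flow of Theorem~\ref{thm:mf}, rather than as invariance under the product-space flow of Theorem~\ref{thm:wp}; your closing paragraph already does this.
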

For the exact flows of smooth vector fields, the Lie--Trotter expansion is
\begin{equation}
\begin{aligned}
\varphi^\Phi_\tau\circ\varphi^A_\tau
&=\varphi^{A+\Phi}_\tau
  +\frac{\tau^2}{2}[A,\Phi]+O(\tau^3),\\[-1mm]
[A,\Phi]&=D\Phi\,A-DA\,\Phi.
\end{aligned}
\label{eq:comm}
\end{equation}
Thus the Lie bracket controls exact-flow splitting and order reversal. The direct
defect of the discrete sequential-to-parallel residual-map replacement is a
different directional-derivative term, stated in Proposition~\ref{prop:comm}.
\begin{lemma}[Invariant-region and equilibrium criterion; short form]\label{lem:basin}
Assume $d\ge2$, the sum in Eq.~\eqref{eq:phi} is finite, the value directions $w_q$ are
unit vectors, $\sigma$ is Lipschitz, and $\alpha>0$. Fix $p$ and
$0<\Delta<\pi/2$, and let
$B_p=\{u\in\Sph^{d-1}:\angle(u,w_p)\le\Delta\}$. Define
\[
\begin{aligned}
m_p
&=\inf_{u\in\partial B_p}
  \gamma_p\sigma(\langle k_p,u\rangle),\\
C_{pq}
&=\sup_{u\in\partial B_p}
  \bigl|\gamma_q\sigma(\langle k_q,u\rangle)\bigr|,\\
M_p
&=\sup_{u\in\partial B_p}\|P_uVu\|.
\end{aligned}
\]
If $m_p>0$ and
\begin{equation}
\alpha m_p\sin\Delta>M_p+\alpha\sum_{q\ne p}C_{pq},
\label{eq:inward}
\end{equation}
then $B_p$ is forward-invariant for $g_\alpha$ and contains at least one critical
residual direction in its interior. Forward invariance alone does not imply
attraction; any hyperbolic critical direction is classified by
Proposition~\ref{prop:stab}.
\end{lemma}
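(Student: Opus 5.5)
The plan has two parts. First, show that the field $g_\alpha(u)=P_u(Vu+\alpha\Phi(u))$ points strictly inward on $\partial B_p$. Second, combine the resulting forward invariance with a topological fixed-point argument on the geodesic cap $B_p$ to produce a critical direction.

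\emph{Step 1: inward-pointing boundary.} At $u\in\partial B_p$ write $u=\cos\Delta\, w_p+\sin\Delta\, e$, with $e$ a unit vector orthogonal to $w_p$. The outward normal of $B_p$ in $T_u\Sph^{d-1}$ is
\[
\nu(u)=-\frac{P_u w_p}{\|P_u w_p\|},\qquad \|P_u w_p\|=\sin\Delta .
\]
This is the direction that increases the angle $\angle(u,w_p)$; it is well defined because $0<\Delta<\pi/2$ and $d\ge2$. We estimate each piece of $\langle g_\alpha(u),\nu(u)\rangle$ separately:
\begin{itemize}
\item the attention part satisfies $|\langle P_uVu,\nu\rangle|\le M_p$;
\item the $p$-th FFN term gives $\gamma_p\sigma(\langle k_p,u\rangle)\langle P_uw_p,\nu\rangle=-\gamma_p\sigma(\langle k_p,u\rangle)\sin\Delta\le -m_p\sin\Delta$;
\item each $q\neq p$ term is bounded in absolute value by $C_{pq}\|P_uw_q\|\le C_{pq}$, since $w_q$ is a unit vector.
\end{itemize}
Summing, the strict inequality~\eqref{eq:inward} yields
\[
\langle g_\alpha(u),\nu(u)\rangle\le M_p+\alpha\textstyle\sum_{q\ne p}C_{pq}-\alpha m_p\sin\Delta<0
\]
uniformly on the compact set $\partial B_p$.

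\emph{Step 2: forward invariance.} Use the smooth function $h(u)=\langle u,w_p\rangle$, so that $B_p=\{h\ge\cos\Delta\}$. Along solutions, which exist and are unique by Theorem~\ref{thm:wp} (the field is Lipschitz), we have $\tfrac{d}{dt}h=\langle g_\alpha(u),w_p\rangle=\langle g_\alpha,P_uw_p\rangle$. On $\{h=\cos\Delta\}$ this equals $-\sin\Delta\,\langle g_\alpha,\nu\rangle>0$. A standard first-exit-time argument then shows that trajectories cannot leave: at a first exit time we would need $h=\cos\Delta$ with $\dot h\le0$, a contradiction. Strictness of the inequality avoids any Nagumo-type subtleties.

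\emph{Step 3: existence of a critical direction.} $B_p$ is a closed geodesic ball of radius less than $\pi/2$, hence homeomorphic to a closed $(d-1)$-disk; for instance, gnomonic projection from $w_p$ maps it onto a Euclidean ball. By Step 2 the time-$t$ flow map $\varphi_t$ sends $B_p$ into itself, so Brouwer gives a fixed point $u_t$ for each $t>0$. Take $t_n=2^{-n}$ and pass to a limit point $u_\ast$. The standard argument shows $u_\ast$ is an equilibrium: if $g_\alpha(u_\ast)\neq0$, the flow moves all points near $u_\ast$ a definite distance in short time without returning, contradicting the periodic orbits through $u_{t_n}$ with periods tending to zero. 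Alternatively, one can use the Poincar\'e--Hopf or degree argument, since an inward-pointing field on a disk has index $(-1)^{d-1}\neq 0$. Since $g_\alpha$ does not vanish on $\partial B_p$ by Step 1, the zero lies in the interior.

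\emph{Main obstacle.} The delicate point is the geometry in Step 1: correctly identifying $\langle P_u w_p,\nu\rangle=-\sin\Delta$ and checking that each cross term is controlled by $C_{pq}$, which needs the unit-norm assumption on $w_q$. The closing remark of the lemma needs no proof beyond citing Proposition~\ref{prop:stab}, since invariance alone says nothing about the sign of $\Spec(J_\star)$.
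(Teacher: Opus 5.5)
Your proposal is correct and follows essentially the same route as the paper. It uses the same outward normal $-P_uw_p/\|P_uw_p\|$ with the same three term-by-term boundary bounds, a first-exit-time argument for forward invariance (your $h=\langle u,w_p\rangle$ is a monotone reparametrization of the paper's $r_p=\arccos\langle u,w_p\rangle$), and Brouwer applied to $\varphi_{t_n}$ with $t_n\downarrow 0$ followed by a limit, where your flow-box contradiction is the paper's identity $0=t_n^{-1}\int_0^{t_n}g_\alpha(\varphi_s(u_n))\,ds\to g_\alpha(u_\ast)$ in different words and the degree argument you mention is a valid alternative.
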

The invariant-region condition is an existence statement, not a typical-regime
claim. It describes a separated, sufficiently strong-gating corner in which FFN
steering can isolate regions of state space. The empirical saddle study in the
appendix maps the more common operating regime, where the FFN often
preserves diversity through saddle/rotational transport rather than by creating many
residual attractors.
\begin{theorem}[Mean-field limit; short form]\label{thm:mf}
Consider normalized-time ($s_i\equiv1$), homogeneous all-to-all full attention
(including self-interaction) with shared
$Q,K,V,\Phi$, temperature $\tau>0$, and Lipschitz $\Phi$. Let
$\mu_t^L=L^{-1}\sum_{i=1}^L\delta_{u_i^L(t)}$. For every initial law
$\bar\mu_0$ there is a unique global solution of
\begin{equation}
\dot{\bar u}_t=P_{\bar u_t}\big(A[\bar\mu_t](\bar u_t)+\alpha\Phi(\bar u_t)\big),
\qquad \bar\mu_t=\mathrm{Law}(\bar u_t),
\label{eq:mckean}
\end{equation}
and, for every $T<\infty$, a constant $C_T$ independent of $L$ such that
\[
\sup_{0\le t\le T}W_1(\mu_t^L,\bar\mu_t)
\le C_T W_1(\mu_0^L,\bar\mu_0).
\]
Consequently, $W_1(\mu_0^L,\bar\mu_0)\to0$ implies uniform convergence on
compact time intervals. In particular, the conclusion holds almost surely for
i.i.d.\ initial particles with law $\bar\mu_0$.
\end{theorem}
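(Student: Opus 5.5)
The plan is the standard Dobrushin coupling argument for McKean--Vlasov flows on a compact manifold. The first step is to define the interaction velocity field $b(u,\mu)=P_u\big(A[\mu](u)+\alpha\Phi(u)\big)$ for $u\in\Sph^{d-1}$ and $\mu\in\Prob(\Sph^{d-1})$. Here
\[
A[\mu](u)=\frac{\int e^{\langle Qu,Kv\rangle/\tau}Vv\,d\mu(v)}{\int e^{\langle Qu,Kv\rangle/\tau}\,d\mu(v)} .
\]
I would establish a joint Lipschitz estimate
\[
\|b(u,\mu)-b(u',\mu')\|\le C_b\big(\|u-u'\|+W_1(\mu,\mu')\big),
\]
with $C_b$ depending only on $\|Q\|,\|K\|,\|V\|,\tau,\alpha$ and $\mathrm{Lip}(\Phi)$. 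Because the sphere is compact, the kernel $k(u,v)=e^{\langle Qu,Kv\rangle/\tau}$ is bounded above and below by $e^{\pm\|Q\|\|K\|/\tau}$ and is smooth in both arguments. Numerator and denominator are therefore Lipschitz in $u$ and, by Kantorovich--Rubinstein duality applied to the Lipschitz test functions $v\mapsto k(u,v)$ and $v\mapsto k(u,v)Vv$, Lipschitz in $\mu$ for $W_1$. The quotient rule, with the denominator bounded below, then gives the bound for $A$. The projection $u\mapsto P_u$ is Lipschitz and bounded on the sphere, and $\Phi$ is Lipschitz and hence bounded there, so $b$ inherits the estimate. This constant-tracking step is the one to get right, since everything downstream depends only on $C_b$ and not on $L$.

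The second step is existence and uniqueness for Eq.~\eqref{eq:mckean}. I would fix a horizon $T$ and work in $C([0,T];\Prob(\Sph^{d-1}))$ with the metric $\sup_t e^{-\lambda t}W_1(\mu_t,\nu_t)$. Given a measure path $\nu$, the non-autonomous ODE $\dot u=b(u,\nu_t)$ is Lipschitz in $u$, continuous in $t$, and tangent to the sphere. By the argument of Theorem~\ref{thm:wp} it has a global flow $X^\nu_t$ preserving $\Sph^{d-1}$. Set $\Gamma(\nu)_t=(X^\nu_t)_\#\bar\mu_0$. Gr\"onwall gives
\[
\|X^\nu_t(u)-X^{\nu'}_t(u)\|\le C_b\int_0^t e^{C_b(t-s)}W_1(\nu_s,\nu'_s)\,ds,
\]
and coupling the two pushforwards through the same initial point bounds $W_1(\Gamma(\nu)_t,\Gamma(\nu')_t)$ by the same quantity. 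For $\lambda$ large, $\Gamma$ is a contraction, and Banach's theorem yields a unique fixed point $\bar\mu$. Since $T$ is arbitrary and solutions on different horizons agree by uniqueness, the solution is global. The process $\bar u_t=X^{\bar\mu}_t(\bar u_0)$ then solves the equation.

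The third step is stability, which gives the quantitative estimate. The empirical measure $\mu^L_t$ is itself a solution of the same nonlinear continuity equation: with self-interaction included, particle $i$ moves exactly with velocity $b(u_i,\mu^L_t)$. So it suffices to prove
\[
W_1(\mu_t,\nu_t)\le e^{Ct}W_1(\mu_0,\nu_0)
\]
for any two solutions. I would take an optimal coupling $\pi_0$ of $(\mu_0,\nu_0)$ and transport it along the two flows, $\pi_t=(X^\mu_t\times X^\nu_t)_\#\pi_0$. Setting $D(t)=\int\|u-v\|\,d\pi_t$ gives $W_1(\mu_t,\nu_t)\le D(t)$. Differentiating,
\[
\dot D\le\int\|b(u,\mu_t)-b(v,\nu_t)\|\,d\pi_t\le C_b\big(D+W_1(\mu_t,\nu_t)\big)\le 2C_bD.
\]
Gr\"onwall then gives $C_T=e^{2C_bT}$, independent of $L$. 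The non-differentiability of the norm at zero is handled by working with the integral form of $D$.

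Finally, for i.i.d.\ initial particles with law $\bar\mu_0$, Varadarajan's theorem gives $\mu^L_0\to\bar\mu_0$ weakly almost surely. On a compact space weak convergence is equivalent to $W_1$ convergence, so the stability estimate yields the almost-sure conclusion.
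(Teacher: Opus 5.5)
Your proposal is correct and follows essentially the same route as the paper. Both arguments use the joint Lipschitz bound on $b$ (kernel bounds, Kantorovich--Rubinstein, quotient rule), a fixed point of the characteristic map $\nu\mapsto (X^\nu_t)_\#\bar\mu_0$, the synchronous-coupling Gr\"onwall estimate, and the fact that the empirical measure is itself a characteristic solution. The only cosmetic difference is that you get a contraction on all of $[0,T]$ via the weighted metric $\sup_t e^{-\lambda t}W_1$, whereas the paper contracts on a short interval and iterates.
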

\begin{proposition}[Exact-flow commutator and residual-map defect]\label{prop:comm}
Let $A$ and $\Phi$ be $C^2$ vector fields on a neighborhood of the states under
consideration. For their exact flows, Eq.~\eqref{eq:comm} holds locally uniformly,
and
\[
\varphi^\Phi_\tau\circ\varphi^A_\tau-
\varphi^A_\tau\circ\varphi^\Phi_\tau
=\tau^2[A,\Phi]+O(\tau^3).
\]
For the residual maps
\[
\begin{aligned}
S_\tau(x)&=x+\tau A(x)+\tau\Phi(x+\tau A(x)),\\
P_\tau(x)&=x+\tau A(x)+\tau\Phi(x),\\
R_\tau(x)&=x+\tau\Phi(x)+\tau A(x+\tau\Phi(x)),
\end{aligned}
\]
one has
\[
\begin{aligned}
(S_\tau-P_\tau)(x)&=\tau^2D\Phi(x)A(x)+O(\tau^3),\\
(R_\tau-P_\tau)(x)&=\tau^2DA(x)\Phi(x)+O(\tau^3),\\
(S_\tau-R_\tau)(x)&=\tau^2[A,\Phi](x)+O(\tau^3).
\end{aligned}
\]
Thus the Lie bracket controls order reversal, whereas the direct
sequential-to-parallel defect is governed to leading order by $D\Phi\,A$ and is
not determined by the bracket alone.
\end{proposition}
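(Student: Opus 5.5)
The plan is to Taylor-expand everything to second order in $\tau$, uniformly on a compact neighborhood $K$ of the states under consideration. The $C^2$ hypothesis on $A$ and $\Phi$ makes all remainders $O(\tau^3)$ with constants depending only on $C^2$ bounds on a slightly larger compact set.

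\textbf{Exact-flow expansions.} For an exact flow of a $C^2$ field $X$, differentiating $\dot\varphi^X_t=X(\varphi^X_t)$ once more gives $\ddot\varphi^X_t=DX\,X$ evaluated along the trajectory. Taylor's theorem with integral remainder then yields
\[
\varphi^X_\tau(x)=x+\tau X(x)+\tfrac{\tau^2}{2}DX(x)X(x)+O(\tau^3)
\]
uniformly on $K$. The $O(\tau^3)$ term needs only continuity of $D(DX\,X)$, which holds because $X\in C^2$. For small $\tau$ the trajectory stays in a compact enlargement of $K$, since $\|X\|$ is bounded there.

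Apply this to $\varphi^\Phi_\tau$ evaluated at $y=\varphi^A_\tau(x)=x+\tau A+\tfrac{\tau^2}{2}DA\,A+O(\tau^3)$, and expand $\Phi(y)=\Phi(x)+\tau D\Phi(x)A(x)+O(\tau^2)$. This gives
\[
\varphi^\Phi_\tau\circ\varphi^A_\tau(x)=x+\tau(A+\Phi)+\tfrac{\tau^2}{2}\big(DA\,A+D\Phi\,\Phi\big)+\tau^2 D\Phi\,A+O(\tau^3).
\]
The flow of $A+\Phi$ has second-order term
\[
\tfrac{\tau^2}{2}D(A+\Phi)(A+\Phi)=\tfrac{\tau^2}{2}\big(DA\,A+DA\,\Phi+D\Phi\,A+D\Phi\,\Phi\big).
\]
Subtracting, the difference is $\tfrac{\tau^2}{2}(D\Phi\,A-DA\,\Phi)=\tfrac{\tau^2}{2}[A,\Phi]$, which is Eq.~\eqref{eq:comm}. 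Exchanging the roles of $A$ and $\Phi$ gives the analogous formula with $-[A,\Phi]$. Subtracting the two formulas yields $\tau^2[A,\Phi]+O(\tau^3)$.

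\textbf{Residual maps.} These are direct one-step Taylor expansions. In $S_\tau$,
\[
\Phi(x+\tau A(x))=\Phi(x)+\tau D\Phi(x)A(x)+O(\tau^2),
\]
where the remainder is controlled by $\sup\|D^2\Phi\|\,\|A\|^2$ on the enlarged compact set. Multiplying by $\tau$ gives $S_\tau-P_\tau=\tau^2D\Phi\,A+O(\tau^3)$. The same argument with roles swapped gives $R_\tau-P_\tau=\tau^2DA\,\Phi+O(\tau^3)$. Subtracting these two yields $S_\tau-R_\tau=\tau^2(D\Phi\,A-DA\,\Phi)+O(\tau^3)=\tau^2[A,\Phi]+O(\tau^3)$. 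Strictly, $C^1$ with locally Lipschitz derivative would suffice for this part.

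\textbf{Final claim.} To show that $D\Phi\,A$ is not determined by the bracket alone, I would give a simple example: take $A$ and $\Phi$ to be constant-coefficient linear fields that commute but have $D\Phi\,A\neq0$, for instance $A(x)=\Phi(x)=x$. Then $[A,\Phi]=0$, while $S_\tau-P_\tau=\tau^2x+O(\tau^3)\neq0$ for $x\neq0$.

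\textbf{Main obstacle.} The only delicate point is the bookkeeping for uniformity of the remainders. I would fix a compact $K$ and $\tau_0>0$ such that all intermediate points ($\varphi^A_\tau(x)$, $x+\tau A(x)$, and so on) remain in a fixed compact $K'$ for $|\tau|\le\tau_0$. All constants are then bounds on the first and second derivatives of $A$ and $\Phi$ over $K'$. I would state this once and use it for every expansion.
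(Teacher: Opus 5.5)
Your proposal is correct and follows essentially the same route as the paper. Both arguments expand each exact flow to second order, compose the flows, and subtract $\varphi^{A+\Phi}_\tau$ (swapping the order flips the sign of the bracket), then handle the residual maps by a one-step Taylor expansion of $\Phi(x+\tau A(x))$ and $A(x+\tau\Phi(x))$. Your explicit example $A(x)=\Phi(x)=x$, where $[A,\Phi]=0$ but $S_\tau-P_\tau=\tau^2x$, and your compact-set bookkeeping for uniform remainders make concrete what the paper only asserts.
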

\begin{theorem}[First-order steering]\label{thm:steer}
Assume that $\Phi$ is $C^2$ near a hyperbolic critical direction $u_0$ of the
attention-only field $g_0(u)=P_uVu$, and let $J_0$ be its tangent Jacobian. Then
there are $\alpha_0>0$ and a unique $C^2$ branch of critical directions
$u_\alpha$ near $u_0$ for $|\alpha|<\alpha_0$, and the expansion in Eq.~\eqref{eq:steer}
holds in ambient Euclidean norm. If $u_0$ is a residual attractor, then
$u_\alpha$ remains a residual attractor for all sufficiently small $|\alpha|$.
\end{theorem}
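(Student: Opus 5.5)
The plan is to apply the implicit function theorem in a chart of the sphere around $u_0$ instead of in ambient coordinates. Let $T=T_{u_0}\Sph^{d-1}$ and use the retraction $\psi(v)=(u_0+v)/\|u_0+v\|$ for $v\in T$ near $0$; $\psi$ is smooth, $\psi(0)=u_0$, and $D\psi(0)=\id_T$. A vector $g_\alpha(u)$ tangent at $u$ can be pulled back to $T$ by $P_{u_0}$, because $P_{u_0}$ restricted to $T_u\Sph^{d-1}$ is an isomorphism for $u$ near $u_0$. Define
\[
G(v,\alpha)=P_{u_0}\,g_\alpha(\psi(v)),\qquad g_\alpha(u)=P_u\bigl(Vu+\alpha\Phi(u)\bigr).
\]
Then $G:T\times\R\to T$ is $C^2$ near $(0,0)$, since $\Phi$ is $C^2$ and $P_u$, $\psi$ are smooth. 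Its zeros are exactly the critical directions near $u_0$. We have $G(0,0)=P_{u_0}g_0(u_0)=0$.

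\textbf{Step 1: the derivative in $v$.} At a zero $u_0$, the derivative $D_vG(0,0)$ equals the intrinsic tangent linearization. This is the first part of Proposition~\ref{prop:stab} with $\alpha=0$. Concretely, $\rho_0=\langle u_0,Vu_0\rangle$, and differentiating $P_uVu$ at $u_0$ in a direction $h\in T$ gives
\[
P_{u_0}Vh-(\langle u_0,Vu_0\rangle h+u_0\langle h,Vu_0\rangle).
\]
Applying $P_{u_0}$ leaves $P_{u_0}(V-\rho_0 I)h=J_0h$. Hyperbolicity means $J_0$ has no eigenvalue with zero real part; in particular $J_0$ is invertible on $T$.

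\textbf{Step 2: the branch and the expansion.} The $C^2$ implicit function theorem now yields $\alpha_0>0$, a neighborhood $N$ of $0$ in $T$, and a unique $C^2$ map $v(\alpha)$ with $G(v(\alpha),\alpha)=0$ in $N$. Setting $u_\alpha=\psi(v(\alpha))$ gives the unique $C^2$ branch near $u_0$.

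Differentiating the identity $G(v(\alpha),\alpha)=0$ at $\alpha=0$ gives $J_0v'(0)+\partial_\alpha G(0,0)=0$, and $\partial_\alpha G(0,0)=P_{u_0}\Phi(u_0)$. Hence $v'(0)=-J_0^{-1}P_{u_0}\Phi(u_0)$.

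Taylor's theorem with $C^2$ regularity gives $v(\alpha)=\alpha v'(0)+O(\alpha^2)$. Since $\psi(v)=u_0+v+O(\|v\|^2)$, this yields Eq.~\eqref{eq:steer} in the ambient norm, with a uniform constant on a compact $\alpha$-interval.

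\textbf{Step 3: persistence of stability.} Assume $u_0$ is an attractor, so by Proposition~\ref{prop:stab} we have $\max\RePart\Spec(J_0)<0$. The Jacobian $J_\alpha$ at $u_\alpha$ from Eq.~\eqref{eq:jac} acts on the moving space $T_{u_\alpha}\Sph^{d-1}$. To compare it with $J_0$, conjugate by the isomorphism $P_{u_0}|_{T_{u_\alpha}}$, or equivalently use $D_vG(v(\alpha),\alpha)$, which is similar to $J_\alpha$ because both are the linearization of the same field read through different charts at a zero.

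The resulting operator on $T$ depends continuously on $\alpha$: it is built from $V$, $D\Phi(u_\alpha)$ (continuous since $\Phi\in C^1$), $\rho_\alpha=\langle u_\alpha,Vu_\alpha+\alpha\Phi(u_\alpha)\rangle$, and $u_\alpha$. Eigenvalues depend continuously on the matrix, so for small $|\alpha|$ the spectrum stays in the open left half-plane. Hence $u_\alpha$ is hyperbolic and, by Proposition~\ref{prop:stab}, a residual attractor.

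\textbf{Main obstacle.} I expect the delicate point to be justifying that linearizations in different charts at a zero are similar. The derivative of a vector field at a zero transforms by conjugation under changes of chart, because the term involving second derivatives of the chart multiplies the vanishing field value. I would verify this directly from the zero condition $g_\alpha(u_\alpha)=0$ before invoking spectral continuity.
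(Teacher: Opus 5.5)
Your proposal is correct and is essentially the paper's proof. It uses the same implicit-function argument for $G(v,\alpha)=P_{u_0}g_\alpha(\psi(v))$, with $D_vG(0,0)=J_0$ and $\partial_\alpha G(0,0)=P_{u_0}\Phi(u_0)$, followed by persistence via spectral continuity. The only differences are your normalization retraction in place of the exponential chart, and a more explicit version of the paper's brief ``identify nearby tangent spaces'' step.

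One small correction in Step~3: rely on the conjugation $Q_\alpha J_\alpha Q_\alpha^{-1}$ with $Q_\alpha=P_{u_0}|_{T_{u_\alpha}\Sph^{d-1}}$, not on the claimed similarity of $D_vG(v(\alpha),\alpha)$ to $J_\alpha$. The reason is that $G$ pulls back by $P_{u_0}$ rather than by $D\psi(v)^{-1}$, so it is not a chart representative. At a zero one gets $D_vG(v(\alpha),\alpha)=Q_\alpha J_\alpha D\psi(v(\alpha))$, which is similar to $J_\alpha D\psi(v(\alpha))Q_\alpha$ rather than to $J_\alpha$. This is still harmless for small $|\alpha|$, but it is not an exact similarity.
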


\section{Experiments}
\label{sec:exp}
\paragraph{Protocol.} We evaluate GPT-2 and GPT-2-large \citep{radford2019language}, Pythia-410M and Pythia-1.4B \citep{biderman2023pythia}, Mistral-7B \citep{jiang2023mistral}, and Llama-3-8B \citep{dubey2024llama} on OpenWebText \citep{gokaslan2019openwebtext}. The main text reports four experiment groups: (i) an attention-only angular-deficit diagnostic, (ii) radial/tangential FFN ablations, (iii) diversity under aggregation pressure, and (iv) sequential-to-parallel defect-guided approximate parallelization. Additional diagnostics and proofs are presented in the appendix.
\paragraph{Attention-only angular deficit.}
We ask whether the attention output alone aligns with the realized angular update.
For each token and layer, write $x_i^\ell=r_i^\ell u_i^\ell$, with
$u_i^\ell=x_i^\ell/\|x_i^\ell\|$, and let $A_i^\ell$ and $\Phi_i^\ell$
denote the realized attention and FFN branch outputs, so that
$x_i^{\ell+1}=x_i^\ell+A_i^\ell+\Phi_i^\ell$. The observed tangent
displacement is
\begin{equation}
\Delta u_i^\ell
=
P_{u_i^\ell}\left(u_i^{\ell+1}-u_i^\ell\right),
\end{equation}
and the attention-only tangent direction is
\begin{equation}
\widehat v_i^{\,A}=P_{u_i^\ell}A_i^\ell.
\end{equation}
Because $P_{u_i^\ell}x_i^\ell=0$, the realized full branch sum satisfies
\begin{equation}
\Delta u_i^\ell
=
\frac{1}{\|x_i^{\ell+1}\|}
P_{u_i^\ell}\left(A_i^\ell+\Phi_i^\ell\right).
\end{equation}
Thus the full-update tangent direction is algebraically collinear with the
observed displacement and is not an independent prediction. We therefore report
the attention-only angular deficit
\begin{equation}
\label{eq:att_def}
D_A
=
1-
\cos\left(\widehat v_i^{\,A},\Delta u_i^\ell\right),
\end{equation}
which is numerically identical to the previously written alignment difference,
averaged over tokens, layers, and texts.

The deficit is positive in all tested models (Fig. \ref{fig:onestep}) and ranges from $0.41$ for GPT-2 to
$0.63$ for Llama-3-8B. Hence attention alone becomes increasingly misaligned
with the realized block update in the tested scale range. This is an attribution
diagnostic rather than an independent validation of the complete field, but it
is consistent with the FFN contributing directly to the realized angular update
in the aggregation--steering model.
% \begin{equation}
% \dot u_i
% =
% \frac{1}{s_i(t)}P_{u_i}
% \left(A_i(U)+\Phi_i(u_i)\right).
% \end{equation}

\begin{figure}[H]\centering
\includegraphics[width=0.95\linewidth]{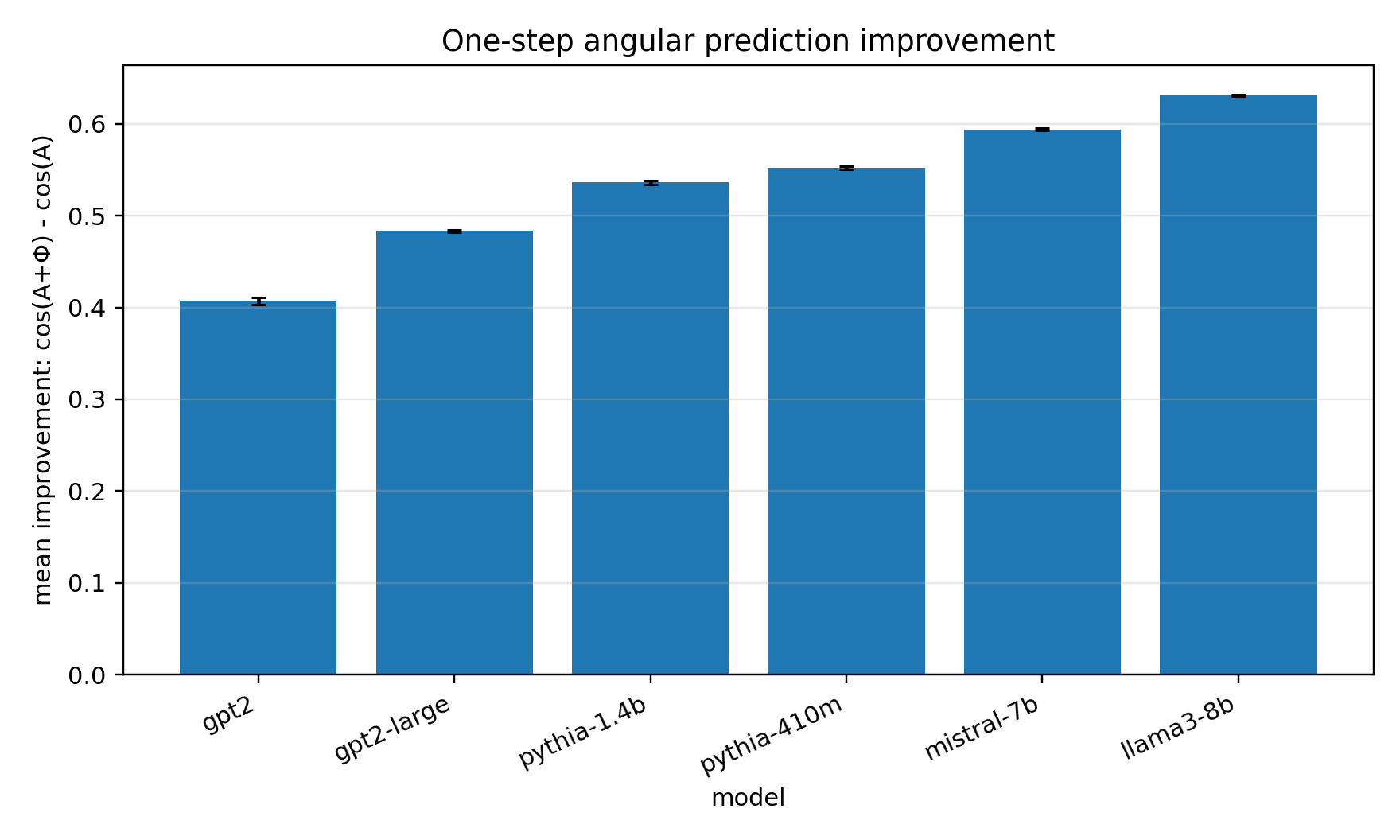}
\caption{\textbf{Attention-only angular deficit is positive in all tested models.} The plotted quantity is defined by \eqref{eq:att_def}.}
\label{fig:onestep}
\end{figure}
\FloatBarrier

\paragraph{Tangential effect of FFN.}
The angular decomposition in Eq.~\eqref{eq:split} predicts that only the
tangential component of the FFN output can change the residual direction.
Indeed, for a residual direction $u$, we can decompose the FFN update as
\begin{equation}
\Phi(u)
=
\underbrace{
\left\langle u,\Phi(u)\right\rangle u
}_{\text{radial component}}
+
\underbrace{
P_u\Phi(u)
}_{\text{tangential component}}
\end{equation}
%The radial part changes the norm scale of the residual stream, while the tangential part changes the direction on the sphere. 
In the finite-block
experiments, $u$ denotes the normalized residual state immediately before the FFN
output is added. A finite radial
update preserves direction only when its total scalar coefficient does not cross
zero. This leads to a direct ablation test: if FFN is useful mainly through
angular steering, then preserving only $P_u\Phi(u)$ should retain most of the
model quality, whereas preserving only the radial component should behave
similarly to removing full FFN.

We therefore compare five forward-pass variants. The \emph{full} model uses the
original block update. The \emph{zero-FFN} variant removes the FFN output. The
\emph{radial-only} variant keeps only
$\left\langle u,\Phi(u)\right\rangle u$. The \emph{tangential-only} variant keeps
only $P_u\Phi(u)$. Finally, the \emph{attention-zero} variant removes the
attention output as a separate control. We evaluate each variant by the increase
in language-modeling loss relative to the full model.

Across all tested models (Fig. \ref{fig:ablation}), the tangential-only FFN is consistently the closest
to the full model. In contrast,
the radial-only FFN is catastrophic comparable to removing the FFN entirely. Thus the useful part of the
FFN update is primarily tangential steering of residual directions.

\begin{figure}[H]\centering
\includegraphics[width=0.8\linewidth]{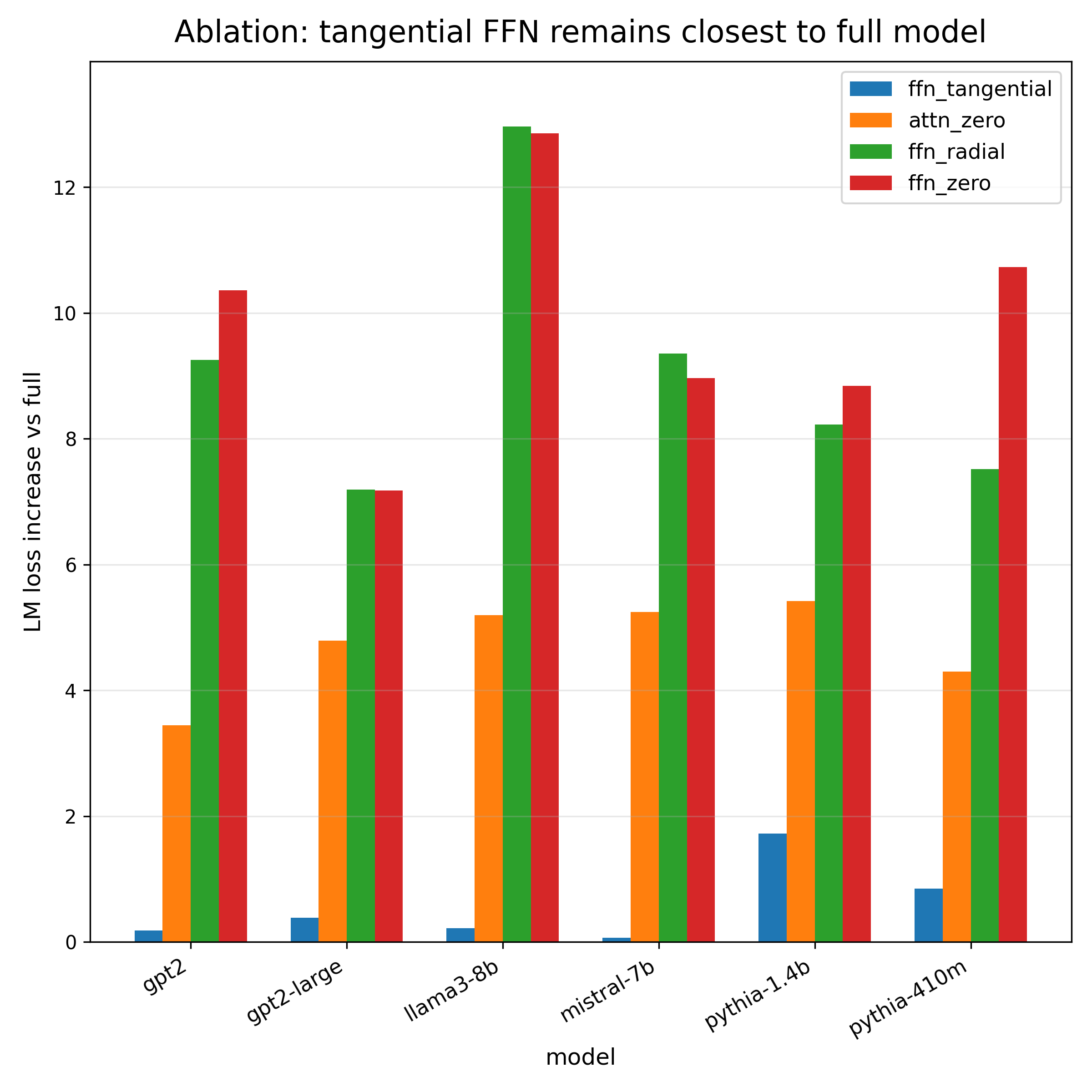}
\caption{\textbf{Tangential FFN removal has similar effect to full FFN removal.} Loss increase relative
to the full model under component ablations.}
\label{fig:ablation}
\end{figure}
\FloatBarrier

% \begin{figure*}[t]\centering
% \includegraphics[width=0.8\linewidth]{figures/res_diversity_collapse.png}
% \caption{\textbf{Attention contracts and FFN often restores diversity.} Left:
% change in spherical diversity (eq. \eqref{eq:sphere_div}). Right: under
% amplified attention, tangential FFN preserves effective rank far better than
% radial-only or zero-FFN variants. This is consistent with the saddle study: FFN
% preserves diversity by adding transport/rotational modes, not merely by creating new
% attractors.}
% \label{fig:divcollapse}
% \end{figure*}

\paragraph{FFN controls diversity under aggregation pressure.}
We next test whether the tangential FFN component helps prevent the residual
directions from over-concentrating under attention aggregation. For a collection
of residual directions with empirical measure $\mu$, we use the spherical
diversity statistic
\begin{equation}
D(\mu)
=
1-\left\|
\mathbb{E}_{u\sim\mu} u
\right\|^2.
\label{eq:sphere_div}
\end{equation}
This quantity is small when many directions concentrate around a common mean
direction, and larger when the directions remain spread out on the sphere. We
measure $D(\mu)$ at three points in each block: before attention, after
attention, and after the FFN (fig. \ref{fig:divcollapse}). This lets us separate the effect of aggregation
from the effect of the local FFN steering.

Empirically, the attention step often decreases
$D(\mu)$, indicating that aggregation pulls residual directions toward a more
concentrated configuration. On the other hand, in many cases the FFN increases the diversity after attention, suggesting that it acts against excessive
aggregation in the angular variables.

% To make this effect causal rather than purely descriptive, we introduce an
% over-aggregation intervention. For a sequential block, we amplify the attention
% output by a factor $\beta_A=2$ and use
% \begin{equation}
% y_\ell^{(\beta)}=x_\ell+\beta_A A_\ell(x_\ell),
% x_{\ell+1}=y_\ell^{(\beta)}+\Phi_\ell\!\left(y_\ell^{(\beta)}\right).
% \end{equation}
% The tangential/radial ablations decompose the realized FFN output relative to
% $y_\ell^{(\beta)}/\|y_\ell^{(\beta)}\|$. For Pythia-style parallel-residual
% blocks, both branches read $x_\ell$, so the native intervention is
% $x_{\ell+1}=x_\ell+\beta_A A_\ell(x_\ell)+\Phi_\ell(x_\ell)$ and is reported
% separately. We then measure not only loss, but also the effective rank of the
% residual representation. If the tangential FFN component is responsible for
% preserving representation geometry, then the tangential-only variant should
% retain substantially more effective rank than the radial-only or zero-FFN
% variants under the same amplified attention pressure.

\begin{figure}[H]\centering
\includegraphics[width=0.9\linewidth]{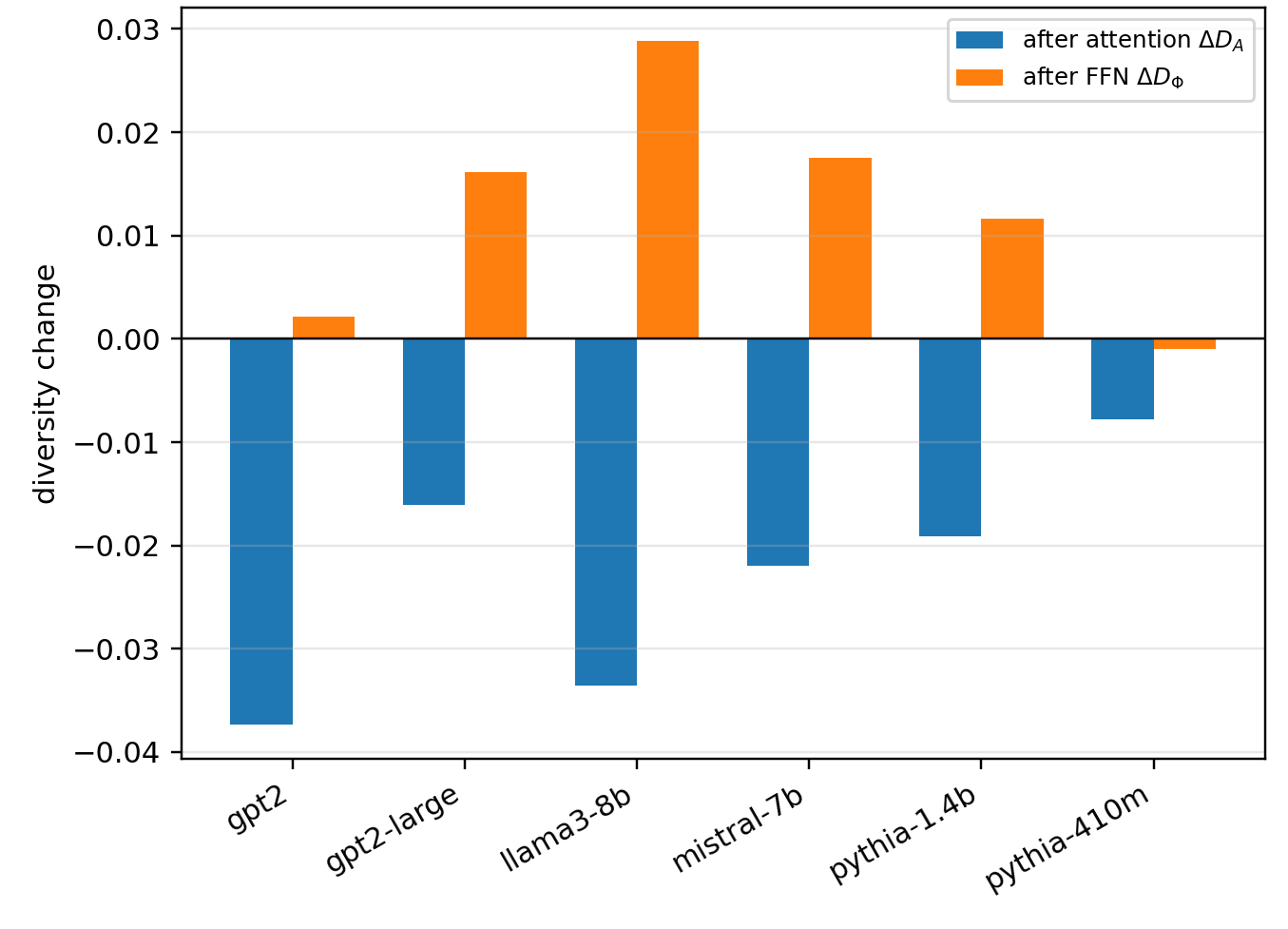}
\caption{\textbf{Attention contracts and FFN often restores diversity.} 
Change in spherical diversity (eq. \eqref{eq:sphere_div}) after attention and FFN for different models.}
\label{fig:divcollapse}
\end{figure}

Together with the saddle-geometry appendix results (see Table 4), the interpretation is not that FFN
simply creates more stable attractors. Rather, the tangential FFN component
injects non-collapsing saddle-like and rotational transport modes. These modes
keep the residual directions from concentrating too aggressively, allowing the
model to maintain a higher-dimensional representation geometry even when
attention applies strong aggregation pressure.

%\FloatBarrier
\paragraph{Sequential-to-parallel defect-guided parallelization.}
The residual-map part of Proposition~\ref{prop:comm} suggests a practical
engineering test. In a standard sequential Transformer block, the FFN does not
read the same state as attention. Instead, attention is applied first, and the FFN
is evaluated on the attention-updated residual stream:
\begin{equation}
x
\longmapsto
x+A(x)+\Phi(x+A(x)).
\end{equation}
If the FFN output is only weakly sensitive to the attention-induced change of its
input, then replacing this sequential computation by a same-input parallel
surrogate should introduce only a small error:
\begin{equation}
x
\longmapsto
x+A(x)+\Phi(x).
\end{equation}
In this surrogate, the attention branch and the FFN branch read the same input
$x$ and can in principle be evaluated concurrently.

\begin{figure*}[!t]
\centering
\includegraphics[width=0.9\linewidth]{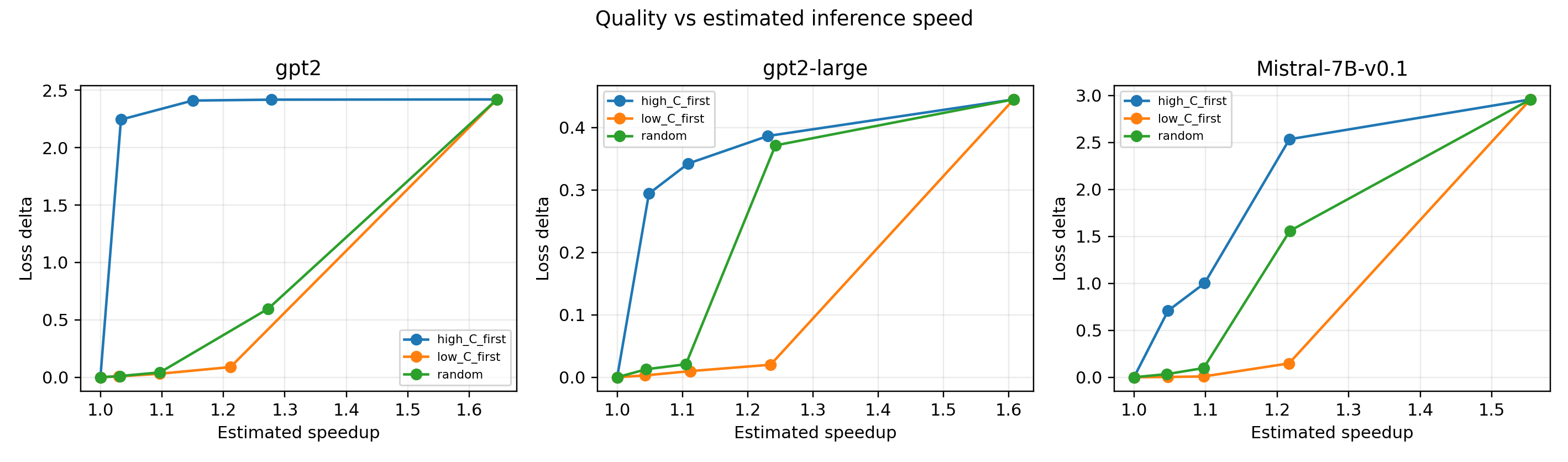}
\caption{\textbf{Practical sequential-to-parallel defect-guided parallelization.}
We replace selected sequential attention-FFN blocks
$x+A(x)+\Phi(x+A(x))$ by the same-input parallel surrogate
$x+A(x)+\Phi(x)$. Layers are selected by the measured FFN input-sensitivity
score $C_\ell$. Low-defect layers preserve quality much better than high-defect
layers at the same estimated submodule-time reduction. Each point represent the corresponding relative change of layers in $\{0\%, 25\%, 50\%, 75\%, 100\%\}$.}
\label{fig:practicalparallel}
\end{figure*}

% The relevant question is therefore not whether every layer can be parallelized,
% but whether the direct sequential-to-parallel defect identifies the layers where
% this replacement is least harmful. 
For each layer $\ell$, we estimate an FFN
input-sensitivity score $C_\ell$ by measuring how much the FFN output changes
when its sequential post-attention input is replaced by the pre-attention input.
Concretely, we use a normalized proxy of the form
\begin{equation}
C_\ell
\approx
\frac{
\left\|
\Phi_\ell\left(x+A_\ell(x)\right)-\Phi_\ell(x)
\right\|
}{
\left\|
\Phi_\ell(x)
\right\|+\varepsilon
}.
\end{equation}
Small $C_\ell$ means that the FFN is relatively insensitive to whether attention
has already been applied, so the parallel surrogate should be accurate. Large
$C_\ell$ means that the attention--FFN order is itself an important part of the
computation.

We test this by ranking layers according to $C_\ell$ and replacing selected
sequential blocks by the parallel surrogate. We compare three schedules:
low-$C_\ell$-first, random, and high-$C_\ell$-first. For each schedule, we
measure the language-modeling loss after replacing an increasing number of
layers. We also report an estimated inference-time reduction based on replacing
the sequential layer cost $T_A^\ell+T_\Phi^\ell$ by the idealized parallel cost
\begin{equation}
T_{\mathrm{parallel}}^\ell
=
\max\left(T_A^\ell,T_\Phi^\ell\right),
\end{equation}
so that the estimated saving in a parallelized layer is
\begin{equation}
\Delta T^\ell
=
\min\left(T_A^\ell,T_\Phi^\ell\right).
\end{equation}

This speedup is therefore a submodule-time estimate rather than a claim about an
optimized production kernel.

The results (Fig. \ref{fig:practicalparallel}) show that the direct defect score is predictive of practical
replaceability. The low-$C_\ell$ strategy preserves quality substantially better
than random or high-$C_\ell$ selection in GPT-2, GPT-2-large, and Mistral. For
GPT-2-large, parallelizing $18$ low-$C_\ell$ layers gives an estimated
$1.24\times$ speedup with only $+0.02$ loss, while choosing high-$C_\ell$ layers
already incurs $+0.29$ loss after only four layers. For Mistral, parallelizing
eight low-$C_\ell$ layers gives an estimated $1.10\times$ speedup with
$+0.009$ loss, whereas four high-$C_\ell$ layers give $+0.71$ loss. Thus
$C_\ell$ is not only an ODE-validity diagnostic but an actionable
layer-selection quantity.

% Pythia is treated separately. In Pythia-style parallel-residual architectures,
% the attention and FFN submodules already read the same residual input. Therefore
% the sequential-versus-parallel replacement is much less meaningful as an
% intervention: a flatter reordering curve is the expected architectural behavior,
% not a counterexample to the sequential-composition interpretation. This supports
% the interpretation that $C_\ell$ measures FFN input sensitivity specifically in
% architectures where attention and FFN are composed sequentially.  

%\FloatBarrier
\paragraph{FFN gate diagnostics.}
As a complementary sensitivity diagnostic, we freeze the pretrained checkpoint
and learn only scalar layerwise FFN gates. For a sequential block, the update is
\begin{equation}
y_\ell=x_\ell+A_\ell(x_\ell),
\qquad
x_{\ell+1}=y_\ell+\alpha_\ell\Phi_\ell(y_\ell).
\end{equation}
For Pythia-style parallel-residual blocks, both gated and attention branches read
$x_\ell$, following the native architecture. All model weights are fixed and only
the parameters
$\{\alpha_\ell\}_{\ell=1}^L$ are optimized on a small calibration set. The
natural pretrained model corresponds to the baseline $\alpha_\ell=1$ for all
layers. Thus this experiment asks whether the pretrained FFN steering scale is
locally over-used, under-used, or already close to a good operating point.

The comparison is made against the natural checkpoint. We report the learned
mean gate, the range of learned gates across layers, and the loss difference
\begin{equation}
\Delta L
=
L_{\mathrm{learned\ gates}}
-
L_{\alpha_\ell=1}.
\end{equation}
Negative $\Delta L$ means that the learned gates slightly improve over the
natural FFN scale, while positive $\Delta L$ means that the learned gate profile
does not improve the frozen checkpoint.

\begin{table}[H]
\centering
\small
\caption{\textbf{Layerwise FFN gate diagnostics.}
We freeze all checkpoint weights and learn only scalar FFN gates
$\alpha_\ell$ in the native block: sequential models use
$y_\ell=x_\ell+A_\ell(x_\ell)$ and
$x_{\ell+1}=y_\ell+\alpha_\ell\Phi_\ell(y_\ell)$, while Pythia retains its
parallel-residual input convention. The table reports the mean learned gate, its
layerwise range, and the loss
difference relative to the natural baseline $\alpha_\ell=1$. Negative
$\Delta L$ indicates a small improvement over the natural checkpoint.}
\label{tab:ffn-gate-diagnostics}
\begin{tabular}{lccc}
\toprule
Model & Mean $\alpha_\ell$ & Range of $\alpha_\ell$ & $\Delta L$ vs. $\alpha_\ell=1$ \\
\midrule
GPT-2        & $0.852$ & $0.667$--$0.950$ & $-0.028$ \\
GPT-2-large  & $0.917$ & $0.655$--$1.119$ & $-0.005$ \\
Pythia-410M  & $1.063$ & $0.709$--$1.412$ & $+0.077$ \\
Pythia-1.4B  & $1.048$ & $0.760$--$1.207$ & $+0.017$ \\
Mistral-7B   & $1.009$ & $0.861$--$1.098$ & $+0.006$ \\
\bottomrule
\end{tabular}
\end{table}

% Place the wide table early enough for top-of-page placement in two-column mode.
\begin{table*}[!t]
\centering
\small
\caption{\textbf{Component ablation of approximate critical-candidate geometry.}
We compare Attention-only $P_uVu$, FFN-only $P_u\Phi(u)$, and the full field
$P_u(Vu+\Phi(u))$. Counts are reported at $\alpha=1$ with zero-score threshold
$\|g(u_\star)\|/(\|F(u_\star)\|+\varepsilon)\leq 0.05$. Each entry reports
A-like/S-like/R-like, the number of locally contracting, mixed-sign, and locally
expanding tangent-spectrum candidates, followed by the S-like fraction in
parentheses. These are descriptive local-linearization labels, not equilibrium
stability classifications.}
\label{tab:ffn-restructure-critical-geometry-wide}
\begin{tabular}{lrrrr}
\toprule
Model
& Total
& Attention-only
& FFN-only
& Full \\
\midrule
GPT-2
& $34/21/21$
& $6/25/3$ \;($73.5\%$)
& $3/14/4$ \;($66.7\%$)
& $3/13/5$ \;($61.9\%$) \\
GPT-2-large
& $22/37/36$
& $2/17/3$ \;($77.3\%$)
& $2/32/3$ \;($86.5\%$)
& $2/30/4$ \;($83.3\%$) \\
Pythia-410M
& $26/36/36$
& $2/21/3$ \;($80.8\%$)
& $1/30/5$ \;($83.3\%$)
& $1/29/6$ \;($80.6\%$) \\
Pythia-1.4B
& $18/46/47$
& $5/10/3$ \;($55.6\%$)
& $2/42/2$ \;($91.3\%$)
& $2/44/1$ \;($93.6\%$) \\
\midrule
All models
& $100/140/140$
& $15/73/12$ \;($73.0\%$)
& $8/118/14$ \;($84.3\%$)
& $8/116/16$ \;($82.9\%$) \\
\bottomrule
\end{tabular}
\end{table*}

The learned gates are nontrivial and architecture-dependent. GPT-2 and
GPT-2-large reduce the average FFN gain, with mean gates $0.852$ and $0.917$,
respectively. In contrast, the Pythia models slightly increase the average FFN
gain, with mean gates $1.063$ and $1.048$. Mistral remains very close to the
natural scale, with mean gate $1.009$. The loss changes are small and not
universal: GPT-2 obtains a modest improvement, GPT-2-large improves only
slightly, while the Pythia and Mistral runs do not improve over the natural
baseline.

We therefore interpret the gate profile as a cheap diagnostic of layerwise FFN
scale sensitivity rather than as a competitive fine-tuning method. The fact that
the learned gates remain close to one, and that improvements are small, suggests
that pretraining already calibrates the FFN steering scale near a good operating
point. At the same time, the non-flat and architecture-dependent gate profiles
show that the steering strength is not arbitrary: different model families use
different layerwise balances between aggregation and FFN steering.

\paragraph{FFN reshapes the local critical geometry.}
To isolate how the FFN reorganizes residual directions, we perform a component
ablation of the reduced critical-direction field. We compare three fields:
\begin{equation}
g_{\mathrm{OV}}(u)
=
P_uVu,
\end{equation}
\begin{equation}
g_{\mathrm{FFN}}(u)
=
P_u\Phi(u),
\end{equation}
and
\begin{equation}
g_{\mathrm{full}}(u)
=
P_u\left(Vu+\Phi(u)\right).
\end{equation}
For each field, we search for candidate critical residual directions
$u_\star$ satisfying $g(u_\star)\approx 0$, using the normalized zero-score
criterion
\begin{equation}
\frac{\|g(u_\star)\|}{\|F(u_\star)\|+\varepsilon}
\leq 0.05.
\end{equation}
At each approximate candidate, we compute the tangent-projected local Jacobian
and assign an attractor-like, saddle-like, or repeller-like spectrum label.
Because the zero-score threshold does not refine candidates to exact roots, these
are descriptive local-linearization types rather than equilibrium-stability
classifications. Neutral/rotational cases were not observed in this run.

Table~\ref{tab:ffn-restructure-critical-geometry-wide} reports both the
aggregate comparison and the model-level breakdown. The key pattern is that
the FFN-only candidate set is already strongly mixed-sign dominated: aggregated
across models, $118/140$ FFN-only candidates are saddle-like. The full candidate
set remains similarly saddle-like dominated, with $116/140$ candidates. Thus the
FFN does not merely shift candidates toward locally contracting spectra. Rather,
$P_u\Phi(u)$ reorganizes the approximate critical-candidate geometry toward
mixed-sign/transport-like local linearizations.

The model-level comparison shows that the FFN effect is not identical across
architectures. In GPT-2, the full field shifts some saddle-like candidates toward
repeller-like spectra, suggesting more locally expanding geometry. In GPT-2-large
and both Pythia models, the FFN-only and full candidate sets are more strongly
saddle-like dominated than the OV-only set. The clearest case is Pythia-1.4B:
the OV-only field has only $55.6\%$ saddle-like candidates, while the FFN-only
and full fields have $91.3\%$ and $93.6\%$, respectively. This supports the
interpretation that FFN steering actively restructures the local linearization
geometry of approximate critical candidates rather than merely supplying radial
scaling.
\section{Discussion}

The main conclusion is that Transformer residual dynamics are not well described
by attention-only aggregation. Across the main experiments, the FFN consistently
appears as a tangential steering field that is needed to account for, preserve,
and control the geometry of residual directions.

First, attention alone has a substantial angular-alignment deficit relative to
the realized block update, ranging from $0.41$ for GPT-2 to $0.63$ for
Llama-3-8B. The full realized branch sum is algebraically aligned with the
observed tangent displacement, so this result is an attribution diagnostic rather
than an independent velocity prediction. It nevertheless shows that the FFN is
not merely a secondary norm correction: it contributes directly to the realized
angular update.

Second, the useful FFN effect is tangential. In the radial/tangential ablation,
the tangential-only FFN remains closest to the full model, with loss increases
between $0.07$ and $1.72$. In contrast, the radial-only FFN is catastrophic,
with loss increases between $7.19$ and $12.96$, close to removing the FFN
entirely. This supports the decomposition in Eq.~\eqref{eq:split}: radial
updates control scale, while $P_u\Phi(u)$ controls direction.

Third, the tangential FFN helps preserve representation geometry under
aggregation pressure. Attention usually contracts the residual-direction
distribution, while the FFN often restores or reshapes it.
% Under amplified
% attention $\beta_A=2$, the tangential-only FFN preserves substantially more
% effective rank than radial-only or zero-FFN variants. Thus $P_u\Phi(u)$ acts as
% a non-collapsing steering mechanism when attention over-aggregates.

Fourth, the sequential-to-parallel defect experiment gives the theory a practical
consequence. The FFN input-sensitivity score $C_\ell$ identifies layers where the
sequential block
$x+A(x)+\Phi(x+A(x))$ can be safely approximated by the parallel surrogate
$x+A(x)+\Phi(x)$. Low-$C_\ell$ layers preserve quality much better than
high-$C_\ell$ layers. For GPT-2-large, parallelizing $18$ low-$C_\ell$ layers
gives an estimated $1.24\times$ speedup with only $+0.02$ loss; for Mistral,
eight low-$C_\ell$ layers give $1.10\times$ with $+0.009$ loss.

These results support the aggregation--steering view: attention provides
contextual aggregation, while the FFN supplies a local tangential steering that
steers residual directions and prevents excessive collapse. Additional
diagnostics in the appendix further support this picture. Component ablations
show that the FFN-only approximate critical candidates are already mostly
saddle-like in their local tangent spectra. Top-$k$ and hard-value experiments suggest that
attention aggregation is often dominated by a small number of value anchors.
Prefix and anchor interventions show that decoder residual geometry is
prefix-sensitive. Norm-CV measurements characterize the complementary radial
scale channel. These appendix studies refine the mechanism, but the main
evidence comes from the four experiments above.

\paragraph{Limitations.}
The theoretical mean-field results describe full attention, while causal
decoder-only models require prefix-indexed dynamics. The anti-collapse claim is
conditional, so we test it by creating aggregation pressure rather than claiming
a universal guarantee. The phase-portrait and critical-point analyses are
reduced local diagnostics, not global descriptions of the full residual space.
The simplified FFN notation is analytical; the experiments use the actual
checkpoint MLP/SwiGLU modules. Approximate-zero labels in Table~\ref{tab:ffn-restructure-critical-geometry-wide}
are local-spectrum diagnostics and should not be read as equilibrium-stability
claims without numerical root refinement. Finally, the practical experiments are
diagnostics rather than production systems: the parallelization result is an
estimated submodule-time speedup, and the gate/editing studies are not compared
against full fine-tuning or editing baselines.

\clearpage
\appendix
% Technical supplementary material integrated as the paper appendix.
% It is included from main.tex after \appendix.

\section{Supplementary Theory and Proofs}\label{app:theory-proofs}

\subsection{Notation and Main Statements}
We use the same notation as in the main paper.  Let $M=(\Sph^{d-1})^L$ and
$\Proj_u=\id-uu^\top$.  The projected aggregation--adjustment flow is
\begin{equation}
\dot u_i=\frac{1}{s_i(t)}\Proj_{u_i}\big[A_i(U)+\Phi_i(u_i)\big].
\label{supp:eq:flow}
\end{equation}
For the reduced single-particle analysis we write
\begin{equation}
F_\alpha(u)=Vu+\alpha\Phi(u),\quad
 g_\alpha(u)=\Proj_uF_\alpha(u).
\label{supp:eq:single}
\end{equation}
For a standard FFN proxy we use
\begin{equation}
\Phi(u)=\sum_p \gamma_p w_p\sigma(\langle k_p,u\rangle),
\label{supp:eq:phi}
\end{equation}
while all experiments use the actual checkpoint MLP/SwiGLU modules.

\begin{theorem}[Well-posedness; Theorem 1]\label{supp:thm:wp}
Assume every attention mask $\mathcal N(i)$ is fixed and nonempty, the softmax
temperature satisfies $\tau>0$, each local field
$\Phi_i:\Sph^{d-1}\to\R^d$ is Lipschitz, and the prescribed speed factors are
measurable with $s_i(t)\ge s_->0$ almost everywhere. Then, for every initial
condition $U_0\in M$, Eq.~\eqref{supp:eq:flow} has a unique global absolutely
continuous solution $U:\R\to M$. If the coefficients are continuous in time,
the solution is classical; if the system is autonomous, these solutions define
a global flow. ReLU activations are covered because they are globally Lipschitz.
\end{theorem}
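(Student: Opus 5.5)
Our plan is to extend the vector field to all of $(\R^d)^L$, apply Carathéodory existence and uniqueness there with a \emph{global} Lipschitz bound, and then show that the solution never leaves $M$.

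\textbf{Step 1: a globally Lipschitz ambient extension.} Fix a retraction-type map $\pi:\R^d\to\R^d$, for instance $\pi(x)=x/\max(\|x\|,1/2)$. It is globally Lipschitz, equals the identity on $\Sph^{d-1}$, and takes values in the closed unit ball. We define $\tilde\Phi_i(x)=\Phi_i(x/\max(\|x\|,1/2))$ by composition; when the domain of $\Phi_i$ is only the sphere, we first extend $\Phi_i$ to the ball as a Lipschitz bounded field, e.g.\ via McShane/Kirszbraun. On the compact set $\pi(\R^d)^L$ the scores $\langle Qu_i,Ku_j\rangle/\tau$ are bounded and smooth. Hence each softmax weight $a_{ij}(U)$ over the fixed nonempty mask is smooth, with denominator bounded below by a positive constant, and is Lipschitz in $U$. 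Each $A_i$ is then bounded and Lipschitz on that compact set. The projection $(x,v)\mapsto(I-\pi(x)\pi(x)^\top)v$ is Lipschitz on (bounded set)$\times$(bounded set). It follows that
\[
G_i(t,X)=\frac{1}{s_i(t)}\bigl(I-\pi(x_i)\pi(x_i)^\top\bigr)\bigl[A_i(\pi(X))+\tilde\Phi_i(x_i)\bigr]
\]
is measurable in $t$ and globally Lipschitz and bounded in $X$, with constants proportional to $1/s_-$, uniformly in $t$ almost everywhere.

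\textbf{Step 2: existence, uniqueness, and globality.} We apply the Carathéodory theorem (Picard iteration in $C([0,T];\R^{dL})$ with integral form) to obtain, for each initial datum, a unique absolutely continuous solution on $\R$. Blow-up cannot occur because $\|G\|\le C/s_-$; global Lipschitz continuity then gives uniqueness by Grönwall, forward and backward in time.

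\textbf{Step 3: invariance of $M$ (main obstacle).} This is where care is needed, since the extended field is not tangent to the sphere away from it. Let $n_i(t)=\|x_i(t)\|^2$. Then for almost every $t$,
\[
\dot n_i=2\langle x_i,(I-\pi(x_i)\pi(x_i)^\top)w\rangle .
\]
With the choice of $\pi$ above, $\pi(x_i)$ is parallel to $x_i$ whenever $x_i\neq0$, so $\langle x_i,(I-\pi\pi^\top)w\rangle=\langle x_i,w\rangle(1-\|\pi(x_i)\|^2)$. In the region $\|x_i\|\ge 1/2$ we have $\|\pi(x_i)\|=1$, so $\dot n_i=0$ there. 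Since $n_i(0)=1$ and $n_i$ is absolutely continuous, it stays at $1$ for all $t$. More formally, the set of times with $n_i\ge1/2$ is relatively open and closed: on it $\dot n_i=0$ a.e.\ by continuity, so $n_i$ is constant there. Consequently the solution remains in $M$ and solves Eq.~\eqref{supp:eq:flow}, because the extension agrees with the original field on $M$. Uniqueness in $M$ follows from uniqueness of the ambient problem.

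\textbf{Step 4: regularity and flow.} If $s_i$ and the other coefficients are continuous in $t$, then $G$ is continuous, so $\dot U$ is continuous and the solution is classical. In the autonomous case, uniqueness gives the group property $\varphi_{t+s}=\varphi_t\circ\varphi_s$. Grönwall gives continuous dependence on initial data, so these maps define a global flow on $M$. ReLU is covered because it is $1$-Lipschitz, which makes the proxy in Eq.~\eqref{supp:eq:phi} Lipschitz.
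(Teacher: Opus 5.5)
Your proof is correct, but it takes a genuinely different (extrinsic) route from the paper's.

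\textbf{The paper's route.} The paper argues intrinsically. It shows that $X_i(U)=\Proj_{u_i}(A_i(U)+\Phi_i(u_i))$ is bounded and Lipschitz on the compact manifold $M$. It then applies Carath\'eodory in local charts of $M$ to get a unique local absolutely continuous solution. Finally it extends this globally by the continuation theorem, using only compactness of $M$ and the speed bound $\norm{\dot U}\le C_X/s_-$; Gr\"onwall gives uniqueness and continuous dependence. In that setting the solution lies in $M$ by construction, so the paper's norm computation is only a consistency check. It also never needs to extend $\Phi_i$ off the sphere.

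\textbf{Your route.} You build a bounded, globally Lipschitz field $G$ on all of $(\R^d)^L$. This gives global existence, uniqueness and Euclidean continuous dependence in a single Picard/Carath\'eodory step, with explicit constants proportional to $1/s_-$. You need no charts, no patching, and no continuation argument.

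\textbf{The price.} First, you must extend $\Phi_i$ into the ball, via Kirszbraun/McShane or alternatively a radial cutoff $\chi(\norm{x})\Phi_i(x/\norm{x})$ with $\chi\equiv1$ on $[1/2,\infty)$ and $\chi\equiv0$ near $0$. Second, invariance of $M$ becomes a genuine step. Your choice $\pi(x)=x/\max(\norm{x},1/2)$ handles this well:
\begin{itemize}
\item it is twice the metric projection onto the ball of radius $1/2$, hence $2$-Lipschitz;
\item since $\pi(x)$ is parallel to $x$, the quantity $\norm{x_i}^2$ is conserved on the whole shell $\norm{x_i}\ge 1/2$, not merely on the sphere.
\end{itemize}
This makes the invariance argument non-circular. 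For an arbitrary extension, the identity $\ip{u}{\Proj_u y}=0$ alone would not suffice, since it holds only when $\norm{u}=1$.

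\textbf{A small tightening.} Your clopen step should argue openness of $\{t:n_i(t)\ge 1/2\}$ via intervals. Near any $t_0$ with $n_i(t_0)\ge 1/2$, continuity gives an open interval on which $\norm{x_i}>1/2$. On that interval $\dot n_i=0$ a.e., so absolute continuity forces $n_i$ to be constant there. As written, ``by continuity'' compresses this step.
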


\begin{proposition}[Stability test; Proposition 1]\label{supp:prop:stab}
Assume $\Phi$ is $C^1$ in a neighborhood of a critical residual direction
$u_\star$, where $F_\alpha(u_\star)=\rho_\star u_\star$. The intrinsic tangent
Jacobian of $g_\alpha$ at $u_\star$ is
\begin{equation}
J_\star=\Proj_{u_\star}\big(V+\alpha D\Phi(u_\star)-\rho_\star\id\big)
\big|_{T_{u_\star}\Sph^{d-1}}.
\label{supp:eq:jac}
\end{equation}
If $u_\star$ is hyperbolic, then it is locally exponentially asymptotically
stable (a residual attractor) iff $\max\RePart\Spec(J_\star)<0$.
\end{proposition}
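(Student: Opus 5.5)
The plan has two parts. First we compute the intrinsic linearization of $g_\alpha$ at $u_\star$ directly. Then we reduce the stability claim to the classical Lyapunov linearization (Hartman--Grobman) theorem in a chart.

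For the Jacobian, extend $g_\alpha(u)=\Proj_uF_\alpha(u)=F_\alpha(u)-\langle u,F_\alpha(u)\rangle u$ to an open neighborhood of $u_\star$ in $\R^d$ by the same formula. Since $\Phi$ is $C^1$ there, so is the extension. Differentiating along a tangent vector $h\in T=T_{u_\star}\Sph^{d-1}$ (so $\langle u_\star,h\rangle=0$) gives
\[
Dg_\alpha(u_\star)h=DF_\alpha h-\langle h,F_\alpha\rangle u_\star-\langle u_\star,DF_\alpha h\rangle u_\star-\langle u_\star,F_\alpha\rangle h .
\]
Now substitute $F_\alpha(u_\star)=\rho_\star u_\star$. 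The second term vanishes because $\langle h,u_\star\rangle=0$, and the last term becomes $-\rho_\star h$. What remains is
\[
Dg_\alpha(u_\star)h=\Proj_{u_\star}\big(DF_\alpha(u_\star)-\rho_\star\id\big)h,
\]
with $DF_\alpha=V+\alpha D\Phi(u_\star)$. This vector lies in $T$, so the map $T\to T$ is exactly Eq.~\eqref{supp:eq:jac}.

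To see that this is intrinsic, note that $g_\alpha$ is a vector field on the sphere and vanishes at $u_\star$. At a zero, the derivative of a vector field is chart-independent: the connection terms multiply $g_\alpha(u_\star)=0$. Equivalently, take a local chart such as $\psi(v)=(u_\star+v)/\|u_\star+v\|$ for $v\in T$, which satisfies $D\psi(0)=\id_T$. The pulled-back field $\tilde g(v)=D\psi(v)^{-1}g_\alpha(\psi(v))$ has derivative at $0$ equal to $\Proj_{u_\star}Dg_\alpha(u_\star)|_T$, because the term involving the derivative of $D\psi^{-1}$ is multiplied by $g_\alpha(u_\star)=0$.

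For stability, work with the $C^1$ planar-coordinate ODE $\dot v=\tilde g(v)$ on an open set of $T\cong\R^{d-1}$. It has an equilibrium at $0$ with linearization $J_\star$. The chart is a $C^1$ diffeomorphism whose derivative at $0$ is the identity, so local exponential asymptotic stability is the same in chart and manifold coordinates, with constants changed by bounded factors. The two directions then follow from standard results.
\begin{itemize}
\item[($\Leftarrow$)] If $\max\RePart\Spec(J_\star)<0$, pick a quadratic Lyapunov function $v^\top Pv$ with $J_\star^\top P+PJ_\star=-\id$. The remainder $\tilde g(v)-J_\star v=o(|v|)$, guaranteed by $C^1$, is dominated near $0$, which yields exponential decay.
\item[($\Rightarrow$)] Suppose $u_\star$ is hyperbolic but some eigenvalue has positive real part. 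Then the unstable manifold theorem, or Hartman--Grobman for $C^1$ fields, produces trajectories starting arbitrarily close to $0$ that leave a fixed neighborhood. So $u_\star$ is not stable. Hyperbolicity excludes the remaining case of eigenvalues with zero real part.
\end{itemize}

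The main obstacle is regularity bookkeeping in the necessity direction. The unstable manifold theorem needs only a $C^1$ field, so it suffices. Alternatively, instability follows from Chetaev's theorem, using a quadratic form built on the splitting into stable and unstable spectral subspaces. That route avoids Hartman--Grobman entirely and is the one to try first.
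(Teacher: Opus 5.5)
Your proposal is correct and follows the paper's proof. Both differentiate $g_\alpha(u)=F_\alpha(u)-\langle u,F_\alpha(u)\rangle u$ by the product rule and use $F_\alpha(u_\star)=\rho_\star u_\star$ and $\langle h,u_\star\rangle=0$ to obtain the projected operator; both then invoke the standard linearization theorem for sufficiency and the unstable-manifold argument (with hyperbolicity excluding zero real parts) for necessity. Your chart-based intrinsicness check and the explicit Lyapunov-equation and Chetaev details only spell out what the paper cites as standard.
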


At a fixed critical direction $u_\star$ for a fixed value of $\alpha$, set
\begin{equation}
\rho_A=\ip{u_\star}{Vu_\star},\quad
\rho_\Phi=\ip{u_\star}{\Phi(u_\star)},
\end{equation}
and, on $T=T_{u_\star}\Sph^{d-1}$, define
\begin{equation}
\begin{split}
J_A(u_\star)&=\Proj_{u_\star}(V-\rho_A\id)|_T,
\\
J_\Phi(u_\star)&=\Proj_{u_\star}(D\Phi(u_\star)-\rho_\Phi\id)|_T.
\end{split}
\end{equation}
Then
\begin{equation}
J_\star=J_A(u_\star)+\alpha J_\Phi(u_\star).
\label{eq:spectral-decomp}
\end{equation}
Along a moving critical branch $u_\alpha$, the two terms generally depend on
$\alpha$. If $u_\star$ is a common critical direction of the two fields and
$J_A(u_\star),J_\Phi(u_\star)$ are simultaneously diagonalizable over
$\mathbb C$, their common modes satisfy
$\lambda_k(\alpha)=\lambda_k^A+\alpha\lambda_k^\Phi$.

\begin{lemma}[Invariant-region and equilibrium criterion; Lemma 1]\label{supp:lem:basin}
Assume $d\ge2$, the sum in Eq.~\eqref{supp:eq:phi} is finite, the value directions $w_q$ are
unit vectors, $\sigma$ is Lipschitz, and $\alpha>0$. Fix $p$ and
$0<\Delta<\pi/2$, and set
\begin{equation}
B_p=\{u\in\Sph^{d-1}:\angle(u,w_p)\le\Delta\}.
\end{equation}
Define
\begin{equation}
\begin{split}
m_p&=\inf_{u\in\partial B_p}\gamma_p\sigma(\ip{k_p}{u}),\\
C_{pq}&=\sup_{u\in\partial B_p}|\gamma_q\sigma(\ip{k_q}{u})|,\\
M_p&=\sup_{u\in\partial B_p}\norm{\Proj_uVu}.
\end{split}
\end{equation}
If $m_p>0$ and
\begin{equation}
\alpha m_p\sin\Delta>M_p+\alpha\sum_{q\ne p}C_{pq},
\label{supp:eq:inward}
\end{equation}
then $B_p$ is forward-invariant for $\dot u=g_\alpha(u)$ and contains at least
one critical residual direction in its interior. The lemma makes no attraction
claim; stability of a hyperbolic critical direction is determined by
Proposition~\ref{supp:prop:stab}.
\end{lemma}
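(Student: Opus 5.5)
The plan has two parts: first show that $g_\alpha$ points strictly into $B_p$ everywhere on $\partial B_p$, then deduce forward invariance and existence of a zero from this inward-pointing property. $B_p$ is a closed geodesic cap, and since $\Delta<\pi/2$ it is geodesically convex and homeomorphic to a closed $(d-1)$-ball. Use the boundary function $h(u)=\ip{u}{w_p}$, so that $B_p=\{h\ge\cos\Delta\}$. Its spherical gradient is $\nabla_{\Sph}h(u)=\Proj_u w_p$. On $\partial B_p$ this vector has norm $\sin\Delta>0$ and is the inward conormal direction. So it suffices to prove
\[
\ip{g_\alpha(u)}{\Proj_u w_p}>0\quad\text{for all }u\in\partial B_p .
\]

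To establish the inequality, decompose $g_\alpha(u)=\Proj_uVu+\alpha\sum_q\gamma_q\sigma(\ip{k_q}{u})\Proj_uw_q$ and bound each piece against $\Proj_uw_p$, using $\norm{\Proj_uw_p}=\sin\Delta$.
\begin{itemize}
\item The $p$-term contributes $\alpha\gamma_p\sigma(\ip{k_p}{u})\sin^2\Delta\ge\alpha m_p\sin^2\Delta$.
\item The attention term is bounded by Cauchy--Schwarz: $|\ip{\Proj_uVu}{\Proj_uw_p}|\le M_p\sin\Delta$.
\item Each $q\ne p$ term satisfies $|\gamma_q\sigma(\cdot)\ip{\Proj_uw_q}{\Proj_uw_p}|\le C_{pq}\,\norm{\Proj_uw_q}\sin\Delta\le C_{pq}\sin\Delta$, because $w_q$ is a unit vector.
\end{itemize}
Putting these together,
\[
\ip{g_\alpha}{\Proj_uw_p}\ge\sin\Delta\Big(\alpha m_p\sin\Delta-M_p-\alpha\sum_{q\ne p}C_{pq}\Big),
\]
which is positive by Eq.~\eqref{supp:eq:inward}.

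Forward invariance follows from the strict boundary inequality. Theorem~\ref{supp:thm:wp}, in its single-particle autonomous case with $\Phi$ Lipschitz, gives unique global trajectories. Suppose a trajectory leaves $B_p$. Let $t_0$ be the last time it lies in $B_p$. Then $h(u(t_0))=\cos\Delta$ and $\frac{d}{dt}h(u(t))|_{t_0}=\ip{g_\alpha}{\Proj_uw_p}>0$, so $h$ increases through $t_0$ and the trajectory enters the interior, a contradiction. A Nagumo-type argument would do equally well here.

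For existence of a critical direction, use a fixed-point argument, which is the step I expect to need the most care. The time-$t$ map $\varphi_t$ maps $B_p$ into itself for every $t>0$, and $B_p$ is a topological ball, so Brouwer gives a fixed point $u_t$, a point of period $t$. Take $t_n=2^{-n}$. By compactness, pass to a convergent subsequence $u_{t_n}\to u_\ast\in B_p$. Standard reasoning then gives $g_\alpha(u_\ast)=0$. To see this, suppose $g_\alpha(u_\ast)\neq0$. Then there are a neighbourhood $N$ of $u_\ast$ and a time $\epsilon>0$ such that no orbit starting in $N$ returns within $(0,\epsilon]$, because the flow is locally a translation. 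This contradicts the existence of periodic points in $N$ with periods $t_n\to0$.

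The zero does not lie on $\partial B_p$, because there $g_\alpha$ has a strictly positive inward component and so is nonzero. It therefore lies in the interior. Equivalently, one could use Poincaré--Hopf, or a degree argument after projecting stereographically onto a ball: the inward-pointing field is homotopic to $-\mathrm{id}$ on the boundary, so its degree is nonzero. Either route gives a zero in the interior.
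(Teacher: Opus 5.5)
Your proposal is correct and follows the paper's proof essentially step for step. Your test function $\langle u,w_p\rangle=\cos r_p(u)$ reproduces the paper's boundary bound on $\dot r_p$, rescaled by $\sin\Delta$; you then use the same first-exit-time contradiction and the same Brouwer argument for $\varphi_{t_n}$ with $t_n\downarrow 0$. The only difference is the last step, where the paper gets $g_\alpha(u_\star)=0$ directly from $0=t_n^{-1}\int_0^{t_n}g_\alpha(\varphi_s(u_n))\,ds\to g_\alpha(u_\star)$. Your flow-box ``no short return'' claim is true, but since $g_\alpha$ is only Lipschitz (e.g.\ ReLU), it is best justified by that same integral estimate rather than by a rectification theorem.
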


\begin{corollary}[Conditional anti-collapse; Corollary 1]\label{supp:cor:anti}
Let $S_1,S_2$ be disjoint Borel sets, let $0\le\theta_0\le\pi$, and let
$\mu_t$ be supported in $S_1\cup S_2$, with $\mu_t(S_1)=p$ and
$\mu_t(S_2)=1-p$. Suppose $\ip{u}{v}\le\cos\theta_0$ for every
$u\in S_1$, $v\in S_2$. Then
\begin{equation}
D(\mu_t)=1-\norm{\int u\,d\mu_t(u)}^2
\ge 2p(1-p)(1-\cos\theta_0).
\label{supp:eq:anti}
\end{equation}
If $S_1,S_2$ are forward-invariant and the initial law is supported in their
union, then the same bound holds for all times, with the initial masses $p$ and
$1-p$.
\end{corollary}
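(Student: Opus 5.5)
The plan is to write $m=\int u\,d\mu_t(u)$ as the mixture $m=p\,m_1+(1-p)\,m_2$ of the two conditional means. Here $m_k=\mu_t(S_k)^{-1}\int_{S_k}u\,d\mu_t$ when $\mu_t(S_k)>0$; if $p\in\{0,1\}$ the bound reads $D\ge 0$, which is trivial since every $m_k$ lies in the closed unit ball. The core identity is
\[
\norm{m}^2=\iint \ip{u}{v}\,d\mu_t(u)\,d\mu_t(v),
\]
obtained by Fubini because the integrand is bounded on the compact sphere.

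We then split the product measure into the four blocks $S_i\times S_j$. On the diagonal blocks we use only the trivial bound $\ip{u}{v}\le1$, which contributes at most $p^2+(1-p)^2$. On the two off-diagonal blocks the hypothesis gives $\ip{u}{v}\le\cos\theta_0$, contributing at most $2p(1-p)\cos\theta_0$. Hence
\[
\norm{m}^2\le p^2+(1-p)^2+2p(1-p)\cos\theta_0=1-2p(1-p)(1-\cos\theta_0),
\]
using $p^2+(1-p)^2=1-2p(1-p)$. Rearranging gives Eq.~\eqref{supp:eq:anti}. This is the whole static part, and measurability is automatic since $S_1,S_2$ are Borel.

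For the dynamic statement, we interpret $\mu_t$ as the pushforward of $\mu_0$ under the flow $\varphi_t$, which exists globally by Theorem~\ref{supp:thm:wp} (or by the Lipschitz single-particle field, or Theorem~\ref{thm:mf} in the mean-field setting). Forward invariance gives $\varphi_t(S_k)\subseteq S_k$. Therefore
\[
\mu_t(S_k)=\mu_0(\varphi_t^{-1}(S_k))\ge\mu_0(S_k).
\]
Since the initial masses sum to $1$ and $\mu_t$ is a probability measure, both inequalities are equalities. So $\mu_t(S_1)=p$, $\mu_t(S_2)=1-p$, and $\mu_t$ is supported in $S_1\cup S_2$. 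Applying the static bound at each $t$ completes the argument.

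The only delicate point I expect is this dynamic step. In the McKean--Vlasov setting, the field depends on $\mu_t$ itself, so "forward-invariant" must be read along the actual nonautonomous characteristic flow $u\mapsto\bar u_t$ generated by Eq.~\eqref{eq:mckean}. I would state it that way: the sets $S_k$ are invariant under the time-dependent characteristic map. The pushforward argument then goes through verbatim, because the law at time $t$ is the image of the initial law under the characteristics.
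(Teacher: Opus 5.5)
Your proposal is correct and follows the paper's proof: splitting $\iint\ip{u}{v}\,d\mu_t\,d\mu_t$ into four blocks is the same computation as the paper's expansion of $\norm{p b_1+(1-p)b_2}^2$, where $b_j$ are the conditional means and one uses $\norm{b_j}\le1$ and $\ip{b_1}{b_2}\le\cos\theta_0$. Your pushforward argument for mass preservation spells out what the paper states in one line, and your remark that in the McKean--Vlasov case forward invariance should be read along the nonautonomous characteristic flow is a useful clarification.
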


\begin{theorem}[Mean-field limit; Theorem 2]\label{supp:thm:mf}
Assume normalized time $s_i\equiv1$, homogeneous all-to-all full attention
(including self-interaction) with common
matrices $Q,K,V$, temperature $\tau>0$, and a common Lipschitz field $\Phi$.
Let
\begin{equation}
\begin{split}
\kappa(u,v)&=e^{\ip{Qu}{Kv}/\tau},\\
A[\mu](u)&=\frac{\int Vv\,\kappa(u,v)\,d\mu(v)}
{\int\kappa(u,z)\,d\mu(z)},
\end{split}
\end{equation}
and $b[\mu](u)=\Proj_u(A[\mu](u)+\alpha\Phi(u))$. For every
$\mu_0\in\Prob(\Sph^{d-1})$ there is a unique global solution
$\mu\in C([0,\infty);\Prob(\Sph^{d-1}))$ of
\begin{equation}
\partial_t\mu_t+\divS(\mu_t b[\mu_t])=0,
\quad \mu_{t=0}=\mu_0,
\label{supp:eq:mckean}
\end{equation}
equivalently $\mu_t=\mathrm{Law}(\bar u_t)$ with
$\dot{\bar u}_t=b[\mu_t](\bar u_t)$. If
$\mu_t^L=L^{-1}\sum_{i=1}^L\delta_{u_i^L(t)}$ is the empirical measure of the
corresponding $L$-particle system, then for every $T<\infty$,
\begin{equation}
\sup_{0\le t\le T}W_1(\mu_t^L,\mu_t)
\le e^{(L_u+L_\mu)T}W_1(\mu_0^L,\mu_0),
\label{eq:dobrushin}
\end{equation}
where $L_u,L_\mu$ depend only on the model parameters. Hence
$W_1(\mu_0^L,\mu_0)\to0$ implies uniform convergence on compact time intervals.
In particular, i.i.d.\ initial particles converge almost surely and in
expectation.
\end{theorem}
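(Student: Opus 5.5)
The plan is the classical Dobrushin coupling argument on the compact manifold $\Sph^{d-1}$. The first step is to establish two Lipschitz estimates for the velocity field $b[\mu](u)=\Proj_u(A[\mu](u)+\alpha\Phi(u))$:
\[
\norm{b[\mu](u)-b[\mu](u')}\le L_u\norm{u-u'},\qquad
\sup_u\norm{b[\mu](u)-b[\nu](u)}\le L_\mu W_1(\mu,\nu).
\]
These follow from the structure of the attention kernel. On the sphere, $\kappa(u,v)=e^{\ip{Qu}{Kv}/\tau}$ is smooth and satisfies the two-sided bound $e^{-\norm{Q}\norm{K}/\tau}\le\kappa\le e^{\norm{Q}\norm{K}/\tau}$. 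Consequently the denominator of $A[\mu]$ is bounded below uniformly in $\mu$.

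For fixed $u$, the maps $v\mapsto Vv\,\kappa(u,v)$ and $v\mapsto\kappa(u,v)$ are Lipschitz in $v$ with constants uniform in $u$. Kantorovich--Rubinstein duality then bounds the numerator and denominator differences by constants times $W_1(\mu,\nu)$, and the quotient rule gives the $L_\mu$ estimate. Lipschitz dependence on $u$ follows in the same way from smoothness of $\kappa$ in $u$. The term $\Proj_u\Phi(u)$ is Lipschitz because $\Phi$ is Lipschitz and bounded on the compact sphere. The projection $\Proj_u=\id-uu^\top$ is Lipschitz in $u$ and multiplies bounded vectors, so the projected field inherits both estimates.

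The second step builds the mean-field solution. I would extend $b[\mu]$ to a neighborhood of the sphere, or simply note that $\Proj_u$ keeps trajectories on the sphere, as in Theorem~\ref{supp:thm:wp}. For a given curve $\nu\in C([0,T];\Prob(\Sph^{d-1}))$, let $X^\nu_t$ be the flow of $b[\nu_t]$, and define $\Gamma(\nu)_t=(X^\nu_t)_\#\mu_0$. Grönwall gives
\[
\sup_u\norm{X^\nu_t(u)-X^{\nu'}_t(u)}\le L_\mu e^{L_uT}\int_0^t W_1(\nu_s,\nu'_s)\,ds.
\]
Pushing $\mu_0$ forward with the identity coupling shows $W_1(\Gamma(\nu)_t,\Gamma(\nu')_t)$ obeys the same bound. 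Hence $\Gamma$ is a contraction on $C([0,T];\Prob)$, which is complete because $\Prob(\Sph^{d-1})$ is compact under $W_1$, once we use a weighted sup norm $\sup_t e^{-\lambda t}W_1$ with $\lambda$ large. This gives existence and uniqueness on every $[0,T]$, and patching gives a global solution. The fixed point is a weak solution of the continuity equation \eqref{supp:eq:mckean}. Uniqueness among all weak solutions follows because any weak solution with this Lipschitz velocity is the pushforward along its own characteristics (superposition/uniqueness for linear continuity equations with Lipschitz fields), and is therefore a fixed point of $\Gamma$.

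The third step is stability, which yields \eqref{eq:dobrushin}. The empirical measure $\mu^L_t$ is itself $(X^{\mu^L}_t)_\#\mu^L_0$. This identity is exactly the particle ODE with self-interaction included, which is where the hypotheses of all-to-all attention, self-interaction and shared parameters enter. So for any two initial data $\mu_0,\nu_0$ with solutions $\mu_t,\nu_t$, take an optimal coupling $\pi_0$ of $(\mu_0,\nu_0)$ and transport it by the pair of flows. This gives
\[
\frac{d}{dt}\int\norm{X_t(u)-Y_t(v)}\,d\pi_0\le L_u\int\norm{X_t-Y_t}\,d\pi_0+L_\mu W_1(\mu_t,\nu_t).
\]
Since $W_1(\mu_t,\nu_t)\le\int\norm{X_t-Y_t}\,d\pi_0$, Grönwall gives $W_1(\mu_t,\nu_t)\le e^{(L_u+L_\mu)t}W_1(\mu_0,\nu_0)$. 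To make the differentiation rigorous, I would use the integral form together with absolute continuity of the trajectories. Finally, for i.i.d.\ initial particles, Varadarajan's theorem gives $\mu_0^L\to\mu_0$ weakly almost surely, which is equivalent to $W_1$ convergence on a compact space. Convergence in expectation follows by dominated convergence, since $W_1\le2$.

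I expect the main obstacle to be the $L_\mu$ estimate for the softmax quotient: one must verify that the Lipschitz constants of the test functions are uniform in $u$ and control the denominator from below. The second delicate point is justifying uniqueness of weak solutions of \eqref{supp:eq:mckean} beyond the fixed-point class. For that I would invoke the standard uniqueness result for linear continuity equations with bounded Lipschitz velocity on a compact manifold.
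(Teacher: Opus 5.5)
Your proposal is correct and follows essentially the same route as the paper. Both arguments derive Lipschitz bounds for $b$ from the two-sided kernel bound, Kantorovich--Rubinstein duality and the quotient rule; both obtain existence from a contraction for the characteristic pushforward map; and both get stability from the optimal-coupling Gr\"onwall estimate, applied to the empirical measure viewed as a characteristic solution, with almost-sure $W_1$ convergence plus dominated convergence for the i.i.d.\ case. Your weighted-norm contraction and your explicit appeal to uniqueness for the linear continuity equation (which rules out weak solutions outside the fixed-point class) are minor refinements of steps the paper states more briefly.
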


\begin{proposition}[Exact-flow commutator and residual-map defect; Proposition 2]
\label{supp:prop:comm}
Let $A,\Phi$ be $C^2$ vector fields on an open subset of $\R^d$, with bounded
first and second derivatives on the compact set under consideration, and let
$\varphi^X_\tau$ denote the time-$\tau$ flow of $X$. With the convention
$[A,\Phi]=D\Phi\,A-DA\,\Phi$,
\begin{align}
\varphi^\Phi_\tau\circ\varphi^A_\tau
&=\varphi^{A+\Phi}_\tau+\frac{\tau^2}{2}[A,\Phi]+O(\tau^3),
\label{supp:eq:comm}\\
\varphi^\Phi_\tau\circ\varphi^A_\tau-
\varphi^A_\tau\circ\varphi^\Phi_\tau
&=\tau^2[A,\Phi]+O(\tau^3).
\label{eq:order-defect}
\end{align}
For the residual maps
\begin{align}
S_\tau(x)&=x+\tau A(x)+\tau\Phi(x+\tau A(x)),\\
P_\tau(x)&=x+\tau A(x)+\tau\Phi(x),\\
R_\tau(x)&=x+\tau\Phi(x)+\tau A(x+\tau\Phi(x)),
\end{align}
one instead has
\begin{align}
(S_\tau-P_\tau)(x)&=\tau^2D\Phi(x)A(x)+O(\tau^3),
\label{eq:parallel-defect}\\
(R_\tau-P_\tau)(x)&=\tau^2DA(x)\Phi(x)+O(\tau^3),\\
(S_\tau-R_\tau)(x)&=\tau^2[A,\Phi](x)+O(\tau^3).
\end{align}
Thus the Lie bracket controls order reversal, whereas the direct
sequential-to-parallel defect is governed to leading order by $D\Phi\,A$ and is
not determined by the bracket alone.
\end{proposition}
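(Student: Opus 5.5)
We prove the statement by second-order Taylor expansion in $\tau$, working on a compact set $K$ where $A,\Phi$ and their first two derivatives are bounded. All remainders will then be uniform on $K$.

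\textbf{Exact flows.} For a $C^2$ field $X$ we use $\varphi^X_\tau(x)=x+\tau X(x)+\tfrac{\tau^2}{2}DX(x)X(x)+O(\tau^3)$. This follows from $\ddot x=DX(x)X(x)$ and Taylor's theorem with integral remainder. The flows stay in a slightly larger compact neighborhood for $\tau$ small, so the $O(\tau^3)$ is uniform. Let $y=\varphi^A_\tau(x)=x+\tau A+\tfrac{\tau^2}{2}DA\,A+O(\tau^3)$. Then
\[
\varphi^\Phi_\tau(y)=y+\tau\Phi(y)+\tfrac{\tau^2}{2}D\Phi(y)\Phi(y)+O(\tau^3).
\]
Expand $\Phi(y)=\Phi(x)+\tau D\Phi(x)A(x)+O(\tau^2)$, and replace $D\Phi(y)\Phi(y)$ by its value at $x$ up to $O(\tau)$. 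This gives
\[
\varphi^\Phi_\tau\circ\varphi^A_\tau(x)=x+\tau(A+\Phi)+\tfrac{\tau^2}{2}\big(DA\,A+D\Phi\,\Phi+2D\Phi\,A\big)+O(\tau^3).
\]
For $X=A+\Phi$ we have $DX\,X=DA\,A+DA\,\Phi+D\Phi\,A+D\Phi\,\Phi$. Subtracting, the $\tau^2$ coefficient of the difference is $\tfrac12(D\Phi\,A-DA\,\Phi)=\tfrac12[A,\Phi]$, which is \eqref{supp:eq:comm}. Swapping the roles of $A$ and $\Phi$ gives $\varphi^A_\tau\circ\varphi^\Phi_\tau=\varphi^{A+\Phi}_\tau-\tfrac{\tau^2}{2}[A,\Phi]+O(\tau^3)$, since the bracket is antisymmetric. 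Subtracting the two expansions yields \eqref{eq:order-defect}.

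\textbf{Residual maps.} These are simpler because they are explicit. We have
\[
(S_\tau-P_\tau)(x)=\tau\big[\Phi(x+\tau A(x))-\Phi(x)\big]=\tau\big[\tau D\Phi(x)A(x)+O(\tau^2)\big].
\]
The inner remainder is bounded by $\tfrac12\sup_K\|D^2\Phi\|\,\tau^2\|A\|^2$, which gives \eqref{eq:parallel-defect}. The same computation with $A$ and $\Phi$ interchanged gives $(R_\tau-P_\tau)(x)=\tau^2DA\,\Phi+O(\tau^3)$. Subtracting the two residual-map identities gives $(S_\tau-R_\tau)(x)=\tau^2(D\Phi\,A-DA\,\Phi)+O(\tau^3)=\tau^2[A,\Phi](x)+O(\tau^3)$.

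\textbf{Closing remark.} The final sentence of the statement follows by comparing the leading coefficients. The defect $S_\tau-P_\tau$ has leading coefficient $D\Phi\,A$, while the bracket $[A,\Phi]$ is this coefficient minus $DA\,\Phi$. For example, take $A$ constant and $\Phi$ nonlinear: then $[A,\Phi]=D\Phi\,A$. Take instead $A=\Phi$: then $[A,\Phi]=0$ while $D\Phi\,A\neq0$ in general. So the bracket alone does not determine the defect.

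\textbf{Main obstacle.} The only real care is uniformity of the $O(\tau^3)$ terms in the flow expansions. We need $\tau_0$ small enough that all intermediate points $\varphi^A_\tau(x)$, $x+\tau A(x)$, and so on stay in a compact neighborhood $K'\supset K$ where the derivative bounds hold. Once that holds, the remainders are controlled by $\sup_{K'}\|D^2\|$ of the fields via the integral form of Taylor's theorem.
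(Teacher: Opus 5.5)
Your proposal is correct and follows essentially the same route as the paper. Both arguments use the second-order flow expansion $\varphi^X_\tau(x)=x+\tau X+\tfrac{\tau^2}{2}DX\,X+O(\tau^3)$, compose the two flows to obtain the cross term $2D\Phi\,A$, compare the result with the combined field, and Taylor-expand the inner argument to first order for the residual maps. Your care about keeping intermediate points inside a compact neighborhood $K'$, and your $A=\Phi$ example showing a vanishing bracket with a nonzero defect, are slightly more explicit than the paper's write-up but do not change the argument.
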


\begin{theorem}[First-order steering; Theorem 3]\label{supp:thm:steer}
Assume $\Phi$ is $C^2$ near a hyperbolic critical direction $u_0$ of the
attention-only field, and let $J_0$ be its tangent Jacobian. Then there are
$\alpha_0>0$ and a unique $C^2$ branch of critical directions $u_\alpha$ near $u_0$ for
$|\alpha|<\alpha_0$, satisfying
\begin{equation}
u_\alpha=u_0-\alpha J_0^{-1}\Proj_{u_0}\Phi(u_0)+O(\alpha^2)
\label{supp:eq:steer}
\end{equation}
in ambient Euclidean norm. If $u_0$ is a residual attractor, then $u_\alpha$
remains a residual attractor for all sufficiently small $|\alpha|$.
\end{theorem}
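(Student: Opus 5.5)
The plan is to apply the implicit function theorem to a map on the fixed tangent space $T_0:=T_{u_0}\Sph^{d-1}$, which avoids working intrinsically on the sphere. Near $u_0$, parametrize the sphere by the chart $\psi(v)=(u_0+v)/\norm{u_0+v}$ for $v\in T_0$ small. Define
\[
G(v,\alpha)=\Proj_{u_0}\,\Proj_{\psi(v)}F_\alpha(\psi(v)),\qquad F_\alpha(u)=Vu+\alpha\Phi(u).
\]
For $\psi(v)$ close to $u_0$, the map $\Proj_{u_0}$ restricted to $T_{\psi(v)}\Sph^{d-1}$ is an isomorphism onto $T_0$. Hence $G(v,\alpha)=0$ if and only if $g_\alpha(\psi(v))=0$, that is, if and only if $\psi(v)$ is a critical residual direction.

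The map $G$ is $C^2$ jointly in $(v,\alpha)$, because $\Phi$ is $C^2$ near $u_0$, $\alpha$ enters affinely, and $\psi$ and $u\mapsto \Proj_u$ are smooth. At the base point, $G(0,0)=\Proj_{u_0}g_0(u_0)=0$. Since $g_0(u_0)=0$, the ambient derivative of $g_0$ at $u_0$ already maps $T_0$ into $T_0$, and $D\psi(0)=\id_{T_0}$. Therefore $D_vG(0,0)=J_0$, the tangent Jacobian given by Proposition~\ref{supp:prop:stab} with $\alpha=0$, namely $J_0=\Proj_{u_0}(V-\rho_A\id)|_{T_0}$. This operator is invertible because $u_0$ is hyperbolic.

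The implicit function theorem then gives $\alpha_0>0$, a neighborhood $N$ of $0$ in $T_0$, and a unique $C^2$ map $\alpha\mapsto v_\alpha$ with $v_0=0$ and $G(v_\alpha,\alpha)=0$, whose zeros in $N$ are exactly these. Setting $u_\alpha=\psi(v_\alpha)$ yields the unique $C^2$ branch near $u_0$.

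Differentiating $G(v_\alpha,\alpha)=0$ at $\alpha=0$ gives
\[
J_0\dot v_0+\partial_\alpha G(0,0)=0,\qquad \partial_\alpha G(0,0)=\Proj_{u_0}\Phi(u_0),
\]
so $\dot v_0=-J_0^{-1}\Proj_{u_0}\Phi(u_0)$. Since $\psi(v)=u_0+v+O(\norm v^2)$ and $v_\alpha=\alpha\dot v_0+O(\alpha^2)$ by $C^2$ regularity, Taylor's theorem gives the expansion~\eqref{supp:eq:steer} in the ambient norm, with a uniform constant on $|\alpha|\le\alpha_0/2$.

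For persistence of stability, write $\rho_\alpha=\ip{u_\alpha}{F_\alpha(u_\alpha)}$, which is continuous in $\alpha$. The intrinsic Jacobian $J_\alpha=\Proj_{u_\alpha}(V+\alpha D\Phi(u_\alpha)-\rho_\alpha\id)|_{T_{u_\alpha}}$ from Proposition~\ref{supp:prop:stab} acts on a moving tangent space. To compare it with $J_0$, I conjugate it to $T_0$ using the isomorphism $\Proj_{u_0}|_{T_{u_\alpha}}$, which depends continuously on $\alpha$. The conjugated operator is a continuous family on the fixed space $T_0$ and equals $J_0$ at $\alpha=0$. By continuity of eigenvalues, $\max\RePart\Spec$ of this family is $<0$ for small $|\alpha|$, and conjugation preserves the spectrum. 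So $u_\alpha$ is hyperbolic with spectrum in the open left half-plane, and Proposition~\ref{supp:prop:stab} makes it a residual attractor.

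I expect the main technical care to be in the chart bookkeeping. One must check that the zeros of $G$ coincide exactly with the zeros of $g_\alpha$ near $u_0$, and that $D_vG(0,0)$ equals the intrinsic $J_0$ rather than an ambient derivative carrying a radial part. Both facts rest on $g_0(u_0)=0$: the derivative of the projection $\Proj_u$ contributes only terms multiplied by $F_0(u_0)=\rho_Au_0$, and these yield exactly the $-\rho_A\id$ shift.
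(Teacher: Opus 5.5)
Your proposal is correct and follows the paper's proof essentially step for step: the same map $G(v,\alpha)=\Proj_{u_0}g_\alpha(\psi(v))$, the same implicit-function argument, and the same computation $\partial_\alpha G(0,0)=\Proj_{u_0}\Phi(u_0)$. The only difference is that the paper uses the exponential chart instead of your normalization chart $\psi(v)=(u_0+v)/\norm{u_0+v}$, but both have $D\psi(0)=\id_{T_0}$, which is all the argument uses; your conjugation by $\Proj_{u_0}|_{T_{u_\alpha}}$ simply spells out the paper's ``identify nearby tangent spaces'' step for persistence of stability.
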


\subsection{Proofs of Main Results}\label{app:proofs}
Throughout, $M=(\Sph^{d-1})^L$, $\Proj_u=\id-uu^\top$, and
$F(u)=Vu+\alpha\Phi(u)$ with $\Phi$ as in \eqref{supp:eq:phi}. For all $u\in\Sph^{d-1}$,
$y\in\R^d$,
\begin{equation}
\ip{u}{\Proj_u y}=\ip{u}{y}-\ip{u}{y}\norm{u}^2=0,
\label{eq:tangent}
\end{equation}
so $\Proj_u y\in T_u\Sph^{d-1}$.

\subsubsection{Proof of Theorem 1 (well-posedness)}
\begin{proof}
For $j\in\mathcal N(i)$ write
\begin{equation}
\begin{split}
a_{ij}(U)&=
\frac{\exp(\ip{Qu_i}{Ku_j}/\tau)}
{\sum_{k\in\mathcal N(i)}\exp(\ip{Qu_i}{Ku_k}/\tau)},
\\
A_i(U)&=\sum_{j\in\mathcal N(i)}a_{ij}(U)Vu_j.
\label{eq:attention-weights}
\end{split}
\end{equation}
On the compact manifold $M$, the scores satisfy
$|\ip{Qu_i}{Ku_j}|\le\norm Q\norm K$. Since each mask is nonempty,
\begin{equation}
\sum_{k\in\mathcal N(i)}\exp(\ip{Qu_i}{Ku_k}/\tau)
\ge |\mathcal N(i)|e^{-\norm Q\norm K/\tau}.
\end{equation}
Hence every $a_{ij}$ is smooth and Lipschitz on $M$, and so is
$U\mapsto A_i(U)$. The map $u\mapsto\Proj_u=\id-uu^\top$ is smooth. For the
proxy in Eq.~\eqref{supp:eq:phi}, for example,
\begin{equation}
\begin{aligned}
\norm{\Phi(u)-\Phi(v)}
&\le \mathrm{Lip}(\sigma)
\sum_p|\gamma_p|\norm{w_p}\norm{k_p}\\
&\qquad\times\norm{u-v}.
\end{aligned}
\label{eq:phi-lipschitz}
\end{equation}
Thus the spatial vector field
\begin{equation}
X_i(U)=\Proj_{u_i}\big(A_i(U)+\Phi_i(u_i)\big)
\end{equation}
is globally Lipschitz and bounded on $M$: for some $L_X,C_X<\infty$,
\begin{equation}
\begin{aligned}
\norm{X(U)-X(V)}&\le L_X\norm{U-V},\\
\norm{X(U)}&\le C_X.
\end{aligned}
\label{eq:spatial-bounds}
\end{equation}

Set $\beta_i(t)=s_i(t)^{-1}$. Then $0<\beta_i(t)\le s_-^{-1}$ almost everywhere,
and the full right-hand side
$f_i(t,U)=\beta_i(t)X_i(U)$ is measurable in $t$, continuous and Lipschitz in
$U$, with a time-uniform Lipschitz bound. In local charts on $M$, the standard
Carath\'eodory theorem therefore gives a unique absolutely continuous local
solution.

For almost every $t$, Eq.~\eqref{eq:tangent} gives
\begin{equation}
\frac{d}{dt}\norm{u_i(t)}^2
=0.
\end{equation}
Hence $M$ is invariant. Moreover,
$\norm{\dot U(t)}\le C_X/s_-$ almost everywhere. The solution remains in the
compact set $M$ and has bounded speed on every finite interval, so the standard
continuation theorem extends it globally. Uniqueness and continuous dependence
follow from Gr\"onwall's inequality. ReLU requires no separate argument because
it is globally Lipschitz. Continuity of the coefficients in time gives a
classical solution, and the autonomous case gives a global flow.
\end{proof}

\subsubsection{Proof of Proposition 1 (stability test and spectral decomposition)}
\begin{proof}
Let
$F_\alpha(u)=Vu+\alpha\Phi(u)$ and
$g_\alpha(u)=\Proj_uF_\alpha(u)=F_\alpha(u)-\ip{u}{F_\alpha(u)}u$. For any
$\delta\in\R^d$,
\begin{multline}
Dg_\alpha(u_\star)\delta
=DF_\alpha(u_\star)\delta
-\ip{\delta}{F_\alpha(u_\star)}u_\star\\
-\ip{u_\star}{DF_\alpha(u_\star)\delta}u_\star
-\ip{u_\star}{F_\alpha(u_\star)}\delta.
\label{eq:Dg}
\end{multline}
At a critical direction,
$F_\alpha(u_\star)=\rho_\star u_\star$. If
$\delta\in T_{u_\star}\Sph^{d-1}$, then $\ip{\delta}{u_\star}=0$, and
Eq.~\eqref{eq:Dg} reduces to
\begin{align}
Dg_\alpha(u_\star)\delta
&=\Proj_{u_\star}DF_\alpha(u_\star)\delta-\rho_\star\delta\\
&=\Proj_{u_\star}
\big(V+\alpha D\Phi(u_\star)-\rho_\star\id\big)\delta.
\end{align}
The right-hand side is tangent, so the restriction to
$T_{u_\star}\Sph^{d-1}$ is exactly the operator in Eq.~\eqref{supp:eq:jac}.
The linearization theorem on a finite-dimensional manifold gives local
exponential asymptotic stability when all eigenvalues have negative real part.
Conversely, if the equilibrium is hyperbolic and an eigenvalue has positive real
part, its local unstable manifold is nontrivial, so it is not an attractor.

For Eq.~\eqref{eq:spectral-decomp}, the criticality relation implies
\begin{equation}
\rho_\star=\ip{u_\star}{F_\alpha(u_\star)}
=\rho_A+\alpha\rho_\Phi.
\end{equation}
Substitution into Eq.~\eqref{supp:eq:jac} gives
$J_\star=J_A(u_\star)+\alpha J_\Phi(u_\star)$. Along a moving branch
$u_\alpha$, both summands vary with $\alpha$. If
$Vu_\star=\rho_Au_\star$ and
$\Phi(u_\star)=\rho_\Phi u_\star$, then $u_\star$ is critical for every
$\alpha$. Under simultaneous diagonalizability, a common eigenbasis makes the
diagonal entries of the sum equal to
$\lambda_k^A+\alpha\lambda_k^\Phi$.
\end{proof}

\subsubsection{Proof of Lemma 1 and Corollary 1}
\begin{proof}
\emph{Forward invariance.}
Let $r_p(u)=\angle(u,w_p)=\arccos\ip{u}{w_p}$. On $0<r_p<\pi$, its Riemannian
gradient, which is the outward unit normal to a level set of $r_p$, is
\begin{equation}
\begin{split}
\nu_p(u)=\nabla r_p(u)
=-\frac{\Proj_uw_p}{\norm{\Proj_uw_p}},
\quad
\norm{\Proj_uw_p}=\sin r_p(u).
\end{split}
\label{eq:normal}
\end{equation}
On $\partial B_p$, the contribution of the $p$-th FFN term is
\begin{equation}
\begin{split}
\ip{\nu_p(u)}
{\alpha\gamma_p\sigma(\ip{k_p}{u})\Proj_uw_p} =\\
-\alpha\gamma_p\sigma(\ip{k_p}{u})\sin\Delta \le-\alpha m_p\sin\Delta.
\end{split}
\end{equation}
The transport term satisfies
$\ip{\nu_p}{\Proj_uVu}\le\norm{\Proj_uVu}\le M_p$. For $q\ne p$, since
$\norm{\nu_p}=\norm{w_q}=1$ and $\Proj_u$ is a contraction,
\begin{equation}
\ip{\nu_p}
{\alpha\gamma_q\sigma(\ip{k_q}{u})\Proj_uw_q}
\le\alpha C_{pq}.
\end{equation}
Thus, on the boundary,
\begin{equation}
\begin{split}
\dot r_p(u)=\ip{\nu_p(u)}{g_\alpha(u)}
\le \\ M_p+\alpha\sum_{q\ne p}C_{pq}-\alpha m_p\sin\Delta<0.
\end{split}
\label{eq:strict-inward}
\end{equation}
At a first exit time from $B_p$, the outward derivative of $r_p$ would have to be
nonnegative, contradicting Eq.~\eqref{eq:strict-inward}. Hence $B_p$ is
forward-invariant.

\emph{Existence of a zero.}
The reduced field is Lipschitz on the sphere, so let $\varphi_t$ be its global
flow. For every $t>0$, forward invariance makes
$\varphi_t:B_p\to B_p$ a continuous self-map. Since
$0<\Delta<\pi/2$, $B_p$ is homeomorphic to a closed Euclidean
$(d-1)$-ball. For any sequence $t_n\downarrow0$, Brouwer's fixed-point theorem
gives $u_n\in B_p$ such that $\varphi_{t_n}(u_n)=u_n$. Passing to a subsequence,
$u_n\to u_\star\in B_p$. Boundedness of $g_\alpha$ implies
\begin{equation}
\sup_{0\le s\le t_n}\norm{\varphi_s(u_n)-u_n}\longrightarrow0.
\end{equation}
Using the integral form of the ODE,
\begin{equation}
\begin{aligned}
0
&=\frac{\varphi_{t_n}(u_n)-u_n}{t_n}\\
&=\frac1{t_n}\int_0^{t_n}
  g_\alpha(\varphi_s(u_n))\,ds\\
&\longrightarrow g_\alpha(u_\star).
\end{aligned}
\end{equation}
Therefore $g_\alpha(u_\star)=0$. Strict inwardness excludes a boundary zero, so
$u_\star$ lies in the interior. This argument does not imply attraction.

\emph{Anti-collapse.}
The cases $p=0$ or $p=1$ are immediate, so assume $0<p<1$. Let
\begin{equation}
b_j=\frac{1}{\mu_t(S_j)}\int_{S_j}u\,d\mu_t(u),
\quad \norm{b_j}\le1.
\end{equation}
Then $\int u\,d\mu_t=p b_1+(1-p)b_2$. If $U$ and $V$ are independent draws from
the conditional laws on $S_1$ and $S_2$, respectively, then
\begin{equation}
\ip{b_1}{b_2}=\E\ip{U}{V}\le\cos\theta_0.
\end{equation}
Consequently,
\begin{align}
\norm{\int u\,d\mu_t}^2
&\le p^2+(1-p)^2+2p(1-p)\cos\theta_0,
\end{align}
which is equivalent to Eq.~\eqref{supp:eq:anti}. If the regions are
forward-invariant and the initial law is supported in their union, their masses
are preserved and the estimate holds at every later time.
\end{proof}

\subsubsection{Proof of Theorem 2 (mean-field limit)}
\begin{proof}
Set
\begin{equation}
\begin{split}
N_\mu(u)=\int Vv\,\kappa(u,v)\,d\mu(v),
\\
D_\mu(u)=\int\kappa(u,v)\,d\mu(v),
\end{split}
\end{equation}
so $A[\mu]=N_\mu/D_\mu$. On the compact sphere,
\begin{equation}
\begin{split}
0<\kappa_-:=e^{-\norm Q\norm K/\tau}
\le\\\kappa(u,v)\le
\kappa_+:=e^{\norm Q\norm K/\tau}.
\end{split}
\end{equation}
Thus $D_\mu(u)\ge\kappa_-$ for every probability measure $\mu$.

For fixed $u$, the functions
$v\mapsto Vv\,\kappa(u,v)$ and $v\mapsto\kappa(u,v)$ are bounded and Lipschitz
with constants uniform in $u$. The Kantorovich-Rubinstein inequality gives
\begin{equation}
\begin{aligned}
&\norm{N_\mu(u)-N_\nu(u)}
 + |D_\mu(u)-D_\nu(u)|\\
&\qquad\le C_0W_1(\mu,\nu).
\end{aligned}
\end{equation}
Using the denominator lower bound and
\begin{equation}
\frac{N_\mu}{D_\mu}-\frac{N_\nu}{D_\nu}
=\frac{N_\mu-N_\nu}{D_\mu}
+\frac{N_\nu(D_\nu-D_\mu)}{D_\mu D_\nu}
\end{equation}
yields
\begin{equation}
\norm{A[\mu](u)-A[\nu](u)}\le L_\mu W_1(\mu,\nu).
\label{eq:measure-lip}
\end{equation}
The same bounded-derivative calculation in the first argument of $\kappa$,
together with
$\norm{\Proj_u-\Proj_v}_{\mathrm{op}}\le2\norm{u-v}$ and the Lipschitz property
of $\Phi$, gives
\begin{equation}
\norm{b[\mu](u)-b[\mu](v)}\le L_u\norm{u-v}
\label{eq:state-lip}
\end{equation}
uniformly in $\mu$.
Combining Eqs.~\eqref{eq:measure-lip} and \eqref{eq:state-lip} gives
\begin{equation}
\norm{b[\mu](u)-b[\nu](v)}
\le L_u\norm{u-v}+L_\mu W_1(\mu,\nu).
\label{eq:combined-lip}
\end{equation}

For any continuous curve $m_\cdot$ of probability measures, the
characteristic equation
\begin{equation}
\dot X_t=b[m_t](X_t)
\end{equation}
has a unique global flow on the compact sphere. The map
$m_\cdot\mapsto(X_t^m)_\#\mu_0$ is a contraction on a sufficiently short
time interval by Eq.~\eqref{eq:combined-lip} and Gr\"onwall's inequality.
Iteration gives a unique global fixed point $\mu_t$, which is the
characteristic solution of Eq.~\eqref{supp:eq:mckean}; the standard identity along
characteristics gives the continuity equation in the weak sense.

Let $\mu_t$ and $\nu_t$ be two solutions and choose an optimal initial coupling
$\pi_0$ for $W_1(\mu_0,\nu_0)$. Evolve each pair $(x,y)$ by
\begin{equation}
\dot X_t=b[\mu_t](X_t),
\quad
\dot Y_t=b[\nu_t](Y_t),
\end{equation}
and define
\begin{equation}
\pi_t=(X_t,Y_t)_\#\pi_0,
\quad
q(t)=\int\norm{X_t-Y_t}\,d\pi_0.
\end{equation}
Then $\pi_t$ is a coupling of $\mu_t$ and $\nu_t$, and the upper right
derivative of $q$ satisfies
\begin{align}
D^+q(t)
&\le\int\norm{b[\mu_t](X_t)-b[\nu_t](Y_t)}\,d\pi_0\\
&\le L_uq(t)+L_\mu W_1(\mu_t,\nu_t)\\
&\le(L_u+L_\mu)q(t).
\end{align}
Since $q(0)=W_1(\mu_0,\nu_0)$ and $W_1(\mu_t,\nu_t)\le q(t)$, Gr\"onwall gives
\begin{equation}
W_1(\mu_t,\nu_t)
\le e^{(L_u+L_\mu)t}W_1(\mu_0,\nu_0).
\label{eq:dobrushin-proof}
\end{equation}

The empirical measure of the full-attention particle system is itself a
characteristic solution of Eq.~\eqref{supp:eq:mckean} with initial datum $\mu_0^L$:
the factors $1/L$ in the empirical numerator and denominator cancel. Applying
Eq.~\eqref{eq:dobrushin-proof} with $\nu_t=\mu_t^L$ proves
Eq.~\eqref{eq:dobrushin}. If the initial particles are i.i.d.\ with law $\mu_0$,
then $W_1(\mu_0^L,\mu_0)\to0$ almost surely on the compact sphere. Since $W_1$
is uniformly bounded there, dominated convergence also gives convergence in
expectation.
\end{proof}

\subsubsection{Proof of Proposition 2 (exact-flow and residual-map defects)}
\begin{proof}
Fix $x$ and abbreviate $a=A(x)$ and $b=\Phi(x)$. For a $C^2$ vector field $X$,
Taylor expansion of its flow gives, uniformly on compact sets,
\begin{equation}
\varphi^X_\tau(x)
=x+\tau X(x)+\frac{\tau^2}{2}DX(x)X(x)+O(\tau^3).
\label{eq:flow-taylor}
\end{equation}
Applying Eq.~\eqref{eq:flow-taylor} first to $A$ and then to $\Phi$ yields
\begin{equation}
\begin{aligned}
\varphi^\Phi_\tau\circ\varphi^A_\tau(x)
&=x+\tau(a+b)\\
&\quad+\frac{\tau^2}{2}
\big(DA\,a+D\Phi\,(b+2a)\big)\\
&\quad+O(\tau^3).
\end{aligned}
\label{eq:composition-expansion}
\end{equation}
The combined field has expansion
\begin{equation}
\begin{aligned}
\varphi^{A+\Phi}_\tau(x)
&=x+\tau(a+b)\\
&\quad+\frac{\tau^2}{2}
\big(DA+D\Phi\big)(a+b)\\
&\quad+O(\tau^3).
\end{aligned}
\end{equation}
Subtracting gives Eq.~\eqref{supp:eq:comm}. Reversing the order changes the cross
term from $2D\Phi\,a$ to $2DA\,b$, which proves
Eq.~\eqref{eq:order-defect} with the stated sign convention.

For the residual maps, Taylor expansion gives
\begin{align}
\Phi(x+\tau A(x))
&=\Phi(x)+\tau D\Phi(x)A(x)+O(\tau^2),\\
A(x+\tau\Phi(x))
&=A(x)+\tau DA(x)\Phi(x)+O(\tau^2).
\end{align}
Multiplication by the outer factor $\tau$ yields
\begin{align}
(S_\tau-P_\tau)(x)&=\tau^2D\Phi(x)A(x)+O(\tau^3),\\
(R_\tau-P_\tau)(x)&=\tau^2DA(x)\Phi(x)+O(\tau^3),
\end{align}
and subtraction gives
$(S_\tau-R_\tau)(x)=\tau^2[A,\Phi](x)+O(\tau^3)$. This shows directly why the
sequential-to-parallel defect can be nonzero even when the bracket vanishes. The
exact-flow calculation is the standard Lie--Trotter local-error expansion
\citep{hairer2006geometric}.
\end{proof}

\subsubsection{Proof of Theorem 3 (first-order steering)}
\begin{proof}
Let $T_0=T_{u_0}\Sph^{d-1}$ and let
$\psi:T_0\supset\mathcal U\to\Sph^{d-1}$ be the exponential chart centered at
$u_0$, with $\psi(0)=u_0$ and $D\psi(0)=\id_{T_0}$. Define
\begin{equation}
G(v,\alpha)=\Proj_{u_0}g_\alpha(\psi(v))\in T_0.
\label{eq:steering-G}
\end{equation}
At $v=0$, the restriction of $\Proj_{u_0}$ to
$T_{\psi(v)}\Sph^{d-1}$ is the identity. Invertibility is open, so this
restriction remains an isomorphism for all sufficiently small $v$. Hence
$G(v,\alpha)=0$ iff $g_\alpha(\psi(v))=0$ near $(0,0)$.

Because $\Phi$ is $C^2$, $G$ is $C^2$. Proposition~\ref{supp:prop:stab} gives
\begin{equation}
G(0,0)=0,
\quad
D_vG(0,0)=Dg_0(u_0)|_{T_0}=J_0.
\end{equation}
Hyperbolicity implies that $J_0$ is invertible. The $C^2$ implicit function
theorem yields $\alpha_0>0$ and a unique $C^2$ curve $v(\alpha)$ with
$v(0)=0$ and $G(v(\alpha),\alpha)=0$. Put
$u_\alpha=\psi(v(\alpha))$. Differentiating at $\alpha=0$ gives
\begin{equation}
J_0v'(0)+\partial_\alpha G(0,0)=0.
\end{equation}
Since $\partial_\alpha g_\alpha(u)=\Proj_u\Phi(u)$,
\begin{equation}
\begin{aligned}
\partial_\alpha G(0,0)
&=\Proj_{u_0}\Phi(u_0),\\
v'(0)
&=-J_0^{-1}\Proj_{u_0}\Phi(u_0).
\end{aligned}
\end{equation}
Taylor's theorem and $D\psi(0)=\id$ now give
\begin{equation}
u_\alpha
=u_0-\alpha J_0^{-1}\Proj_{u_0}\Phi(u_0)+O(\alpha^2).
\end{equation}
After identifying nearby tangent spaces through the chart, the tangent Jacobian
depends continuously on $\alpha$. Therefore hyperbolicity, and the property that
all eigenvalues have negative real part, persist for sufficiently small
$|\alpha|$.
\end{proof}

\section{Experimental Details}\label{app:experimental-details}
\sloppy
This subsection reports the details needed to reproduce every reported number. All statistics are
computed on OpenWebText \citep{gokaslan2019openwebtext} over GPT-2 and GPT-2-large
\citep{radford2019language}, Pythia-410M and Pythia-1.4B \citep{biderman2023pythia},
Mistral-7B \citep{jiang2023mistral}, and Llama-3-8B \citep{dubey2024llama}. The main
text reports four experiment groups: (i)~one-step angular prediction, (ii)~radial and
tangential FFN ablations, (iii)~diversity under aggregation pressure, and
(iv)~commutator-guided approximate parallelization.
 
\paragraph{Data and pre-processing.}
Text is drawn from OpenWebText \citep{gokaslan2019openwebtext}, loaded through the
HuggingFace hub (\texttt{Skylion007/\allowbreak openwebtext}) by streaming. For each
model we concatenate non-empty documents and tokenize a fixed contiguous window with
the model's native tokenizer; no additional cleaning, normalization, or filtering is
applied. The loader prints the resolved data source at run time so that a silent
fallback to any placeholder corpus cannot occur. Weight-only diagnostics use no data.
All pre-processing code is included in the released code appendix.
 
\paragraph{Models.}
The six checkpoints span scale and architecture: GPT-2 (124M) and GPT-2-large (774M)
(learned-absolute positions, LayerNorm, dense attention); Pythia-410M and Pythia-1.4B
(RoPE, LayerNorm, parallel attention/FFN blocks); and Mistral-7B and Llama-3-8B
(RoPE, RMSNorm, grouped-query attention). Checkpoints are loaded through
TransformerLens/HuggingFace; the $7$--$8$B models are loaded without weight processing
and evaluated in bfloat16.

\paragraph{Hyperparameters: ranges and selection criterion.}
The interventions have a small number of controls, listed with the range swept in
Table~\ref{tab:hparams}. We do \emph{not} tune these to maximize a score: the trained
operating point $(\beta_A,\alpha_F)=(1,1)$ is fixed by the pretrained model and used
as the reference against which all other settings are compared, and the remaining
grids (edit sizes, ablation sizes, layer depth) are swept to trace a response curve
rather than to select a best value. Where a single value is needed for a headline
number we use the reference point $(1,1)$, the largest ablation set $k{=}512$, and the
mid-range edit size; these choices are stated with each result.

\begin{table}[t]\centering\small
\caption{Controls varied in the experiments and the ranges swept. The trained point
$(\beta_A,\alpha_F){=}(1,1)$ is the reference, not a tuned selection.}
\label{tab:hparams}
\begin{tabular}{@{}lll@{}}
\toprule
Control & Symbol & Values swept \\
\midrule
Attention gain      & $\beta_A$   & $\{0,0.5,1,1.5,2\}$ \\
FFN gain            & $\alpha_F$  & $\{0,0.5,1,1.5,2\}$ \\
Edit size           & $s$         & $\{0.5,1,2,4,8\}$ \\
Ablation set size   & $k$         & $\{32,128,512\}$ \\
Neurons per edit    & $n$         & $32$ \\
Intervention depth  & $\ell/L$    & $\{0,\tfrac1{16},\dots,1\}$ \\
Softmax temperature & $\tau$      & model default \\
\bottomrule
\end{tabular}
\end{table}

\paragraph{Computing infrastructure.}
Experiments run on a single multi-GPU node with NVIDIA GPUs. Models up to
$1.4$B run on one GPU in float32; the $7$--$8$B models are sharded across two GPUs
and run in bfloat16. All pretrained-model runs are inference-only (no fine-tuning),
except the optional anti-collapse regularizer probe. The software stack is
Python~$3.10{+}$ with PyTorch, TransformerLens, and HuggingFace
\texttt{transformers} and \texttt{datasets}; the exact versions are pinned in the
released \texttt{requirements.txt} and an \texttt{environment.yml}. A full pass over
the six models for the core diagnostics completes within minutes to a few hours per
model, dominated by the $7$--$8$B forward passes.

\paragraph{Randomness and seeds.}
Weight-only diagnostics (OV skew/sym ratio, dominant-direction fraction) are
deterministic functions of the checkpoint and independent of seed and data.
Data-dependent diagnostics use a single fixed evaluation window per model. Wherever a
random baseline appears -- random FFN-channel ablation and random edit
directions -- the draws come from \texttt{numpy.random.\allowbreak default\_rng(seed)} with a fixed
seed, and the reported random baseline is the mean over five seeds. PyTorch global
seeds are set before the forward passes; with fixed data and seeds the pipeline is
deterministic up to the nondeterminism of GPU reductions.

\paragraph{Evaluation metrics.}
All metrics are defined formally in the main text and appendix; we restate them and
their motivation here.
\begin{itemize}\setlength\itemsep{1pt}
\item \emph{One-step faithfulness} -- the gap
$\cos(A{+}\Phi,\text{actual})-\cos(A,\text{actual})$ between the cosine of the true
angular update with the attention-plus-FFN prediction and with attention alone;
it measures whether the FFN reaction term is needed to explain the layer's motion.
\item \emph{Edit transfer} -- the $R^2$ of the induced target-logit change against edit
size $s$ and the sign of the slope, testing the linearity and direction predicted by
the steering law.
\item \emph{Editing efficiency and locality} -- the target-logit change per unit edit
norm, and the held-out next-token KL divergence, comparing the FFN value path to
equal-norm query/key and random edits.
\item \emph{Commutator defect} -- $C_\ell=\lVert[a,b]\rVert/(\lVert a\rVert\,\lVert b\rVert)$,
the normalized Lie bracket of the tangential attention and FFN fields, measuring
whether their order matters.
\item \emph{Directional diversity} -- $D=1-\lVert\mathbb{E}\,u\rVert^2$, the spread of
token directions.
\item \emph{Norm concentration} -- the within-layer coefficient of variation of the
residual norm on the bulk of tokens, justifying the directional description.
\item \emph{OV non-symmetry} -- the ratio
$\lVert\mathrm{skew}(V)\rVert_F/\lVert\mathrm{sym}(V)\rVert_F$ and the fraction of
heads with a positive dominant eigenvalue.
\end{itemize}

\paragraph{Number of runs.}
Because the weight-only and single-window diagnostics are deterministic, each such
number is a single run. Random baselines (random ablation, random edits) are reported
as the mean over five seeds. Every diagnostic is computed independently for all six
models, so each cross-model claim aggregates six runs (one per model), and each
per-layer curve aggregates all layers of a model.

\paragraph{Analysis beyond point summaries.}
We report distributions rather than single scalars wherever possible: per-layer
curves across the full depth of each model (one-step gap, $C_\ell$, norm, diversity,
norm CV), per-head statistics for the OV analysis (fraction of heads over all heads
and layers), and per-model small multiples so that variation across scale and
architecture is visible directly. For the editing transfer we report the full
logit-vs-$s$ response with its linear fit rather than a single slope.

\paragraph{Statistical significance.}
Our primary robustness argument is qualitative consistency: the sign of the effect is
identical across all six models and, for the OV analysis, across essentially $100\%$
of heads. We do not yet apply formal significance tests (e.g.\ Wilcoxon signed-rank)
to the cross-model comparisons; because the effects are directionally uniform across
independent models and architectures, we treat this uniformity as the evidence of
robustness and note formal testing as future work.

\paragraph{Final hyperparameters.}
The settings used for the headline numbers are: operating point
$(\beta_A,\alpha_F)=(1,1)$; edit sizes $s\in\{0.5,1,2,4,8\}$ with efficiency reported
at the smallest $s$ and transfer fit over the full range; $n=32$ value neurons per
edit; ablation size $k=512$ for the headline localization numbers (with $k\in\{32,128,512\}$
swept); intervention applied per layer across the full depth; and the model-default
softmax temperature.

\section{Supplementary Experimental Results}\label{app:diagnostics}\label{app:supplementary-results}
This appendix reports diagnostics that support the aggregation--adjustment
interpretation but are not part of the four main experiments.  The main text
focuses on angular velocity prediction, tangential/radial FFN ablation,
diversity under aggregation pressure, and commutator-guided parallelization.
Here we collect additional controls: FFN gain and gate calibration, hard-value
aggregation, critical-geometry ablations, phase-portrait diagnostics, prefix and
anchor interventions, editing locality, OV non-symmetry, and radial norm
statistics.

%  --  --  --  --  --  --  --  --  --  --  --  --  --  --  --  --  --  --  --  --  --  --  -- 
\subsection{Gain corridor, FFN gates, and practical controls}\label{app:gain-gates-controls}
Scaling the FFN output tests whether pretrained checkpoints operate near a
stable aggregation--adjustment corridor.  We evaluate updates of the form
\begin{equation}
x_{\ell+1}
=
x_\ell
+
\beta_A A_\ell(x_\ell)
+
\alpha_F\Phi_\ell(x_\ell),
\end{equation}
with $\beta_A=1$ for the gain sweep.  Removing the FFN sharply increases loss,
while the best operating point remains close to the pretrained scale
$\alpha_F=1$.  This supports the view that the FFN scale is calibrated during
pretraining rather than arbitrary.

We also freeze the checkpoint and learn only scalar layerwise FFN gates
$\alpha_\ell$ in
\begin{equation}
x_{\ell+1}
=
x_\ell
+
A_\ell(x_\ell)
+
\alpha_\ell\Phi_\ell(x_\ell).
\end{equation}
The learned gates are nontrivial and architecture-dependent, but the gains over
the natural $\alpha_\ell=1$ baseline are small and not universal.  We therefore
use them as sensitivity diagnostics rather than as a competitive adaptation
method.

\begin{table}[H]
\centering
\small
\caption{\textbf{Layerwise FFN gate diagnostics.}  All checkpoint weights are
frozen and only scalar FFN gates $\alpha_\ell$ are learned.  The table reports
mean learned gate, layerwise range, and loss difference relative to the natural
baseline $\alpha_\ell=1$.}
\label{tab:supp-ffn-gates}
\begin{tabular}{lccc}
\toprule
Model & Mean $\alpha_\ell$ & Range of $\alpha_\ell$ & $\Delta L$ vs. $\alpha_\ell=1$ \\
\midrule
GPT-2        & $0.852$ & $0.667$--$0.950$ & $-0.028$ \\
GPT-2-large  & $0.917$ & $0.655$--$1.119$ & $-0.005$ \\
Pythia-410M  & $1.063$ & $0.709$--$1.412$ & $+0.077$ \\
Pythia-1.4B  & $1.048$ & $0.760$--$1.207$ & $+0.017$ \\
Mistral-7B   & $1.009$ & $0.861$--$1.098$ & $+0.006$ \\
\bottomrule
\end{tabular}
\end{table}

\begin{figure*}[t]
\centering
\includegraphics[width=0.90\textwidth]{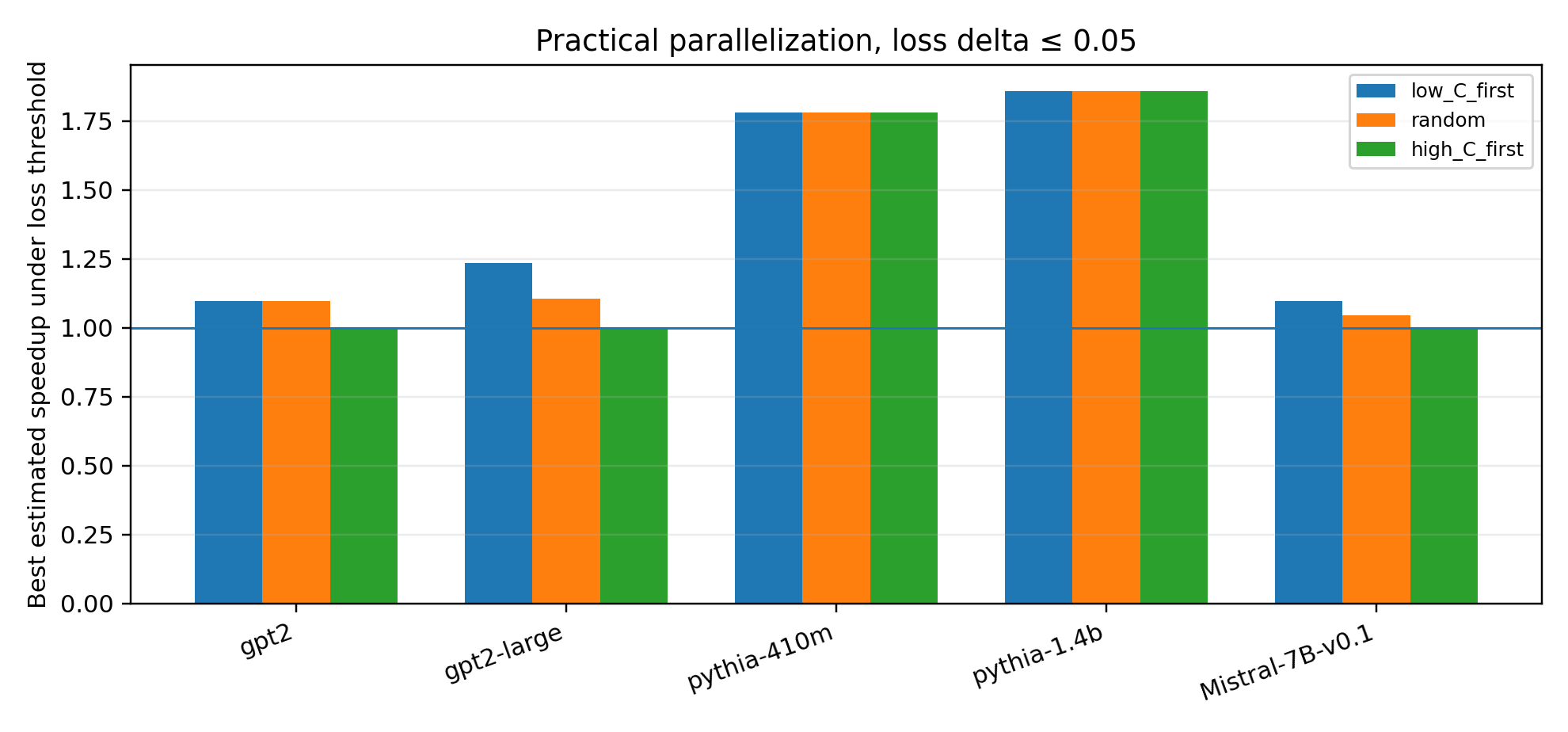}
\caption{\textbf{Safe speedup at loss threshold $0.05$.} For GPT-2,
GPT-2-large and Mistral, low-$C_\ell$ selection gives the best safe speedup.
Pythia curves are flat, consistent with its parallel-residual design: attention
and FFN already read the same input.}
\label{fig:supp-safe-speedup}
\end{figure*}

\begin{figure*}[t]
\centering
\includegraphics[width=0.90\textwidth]{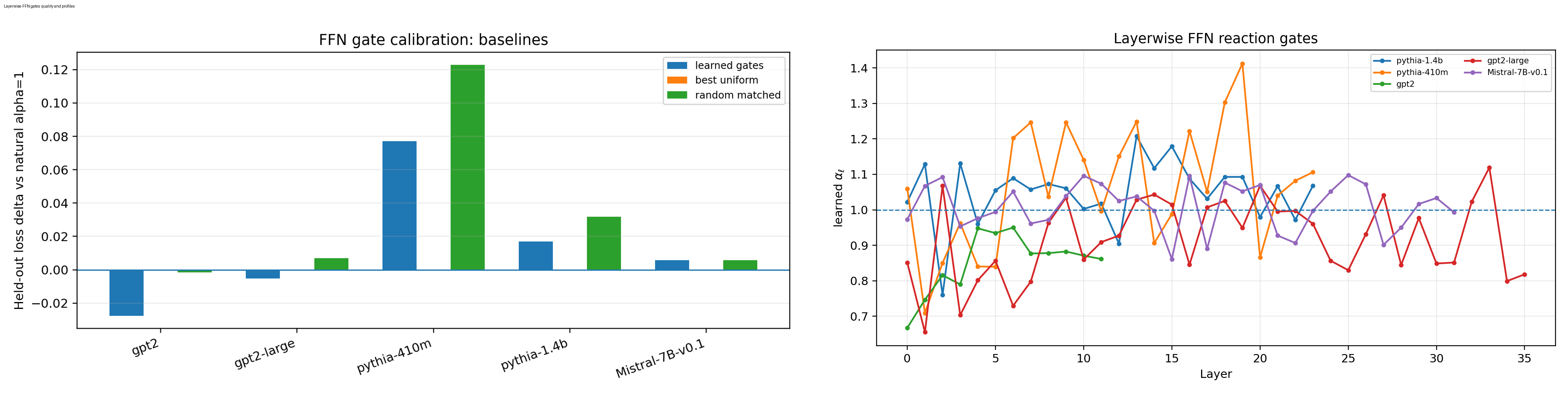}
\caption{\textbf{FFN gate calibration.} Learned layerwise FFN gates are
nontrivial and architecture-dependent, but improvements over the natural
pretrained scale are small.}
\label{fig:supp-practical-gates}
\end{figure*}

\begin{figure*}[t]
\centering
\includegraphics[width=0.94\textwidth]{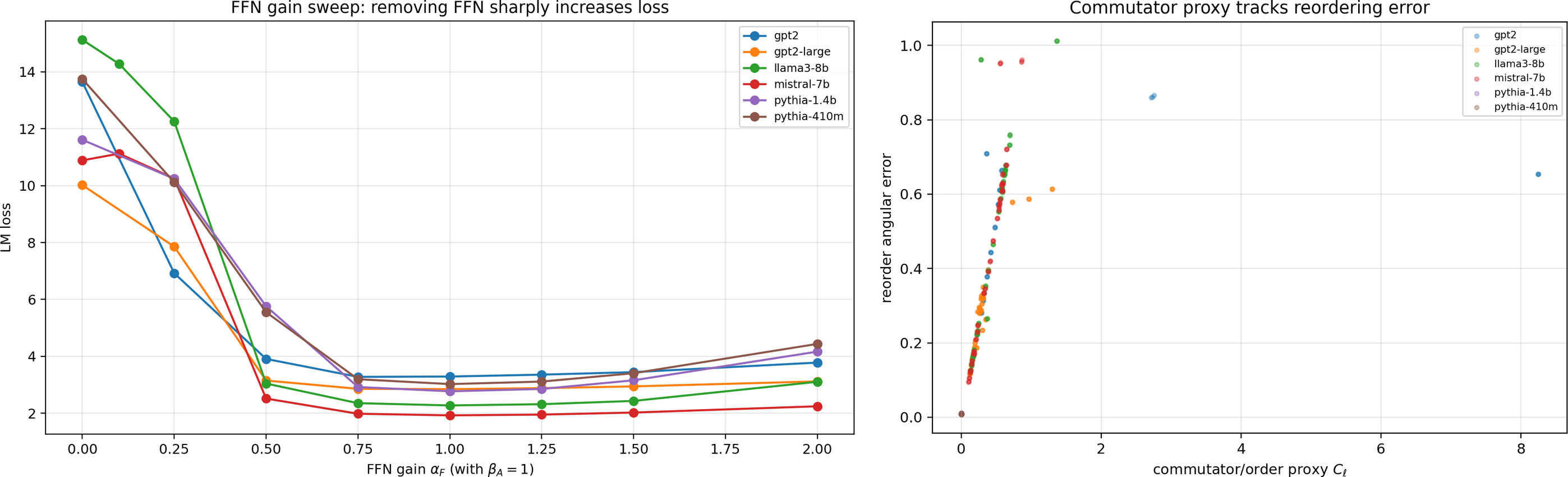}
\caption{\textbf{Gain and commutator controls.} Left: FFN gain sweep with
attention fixed. Right: commutator/order proxy versus reordering angular error.}
\label{fig:supp-control-results}
\end{figure*}

%  --  --  --  --  --  --  --  --  --  --  --  --  --  --  --  --  --  --  --  --  --  --  -- 
\subsection{Hard-value and \texorpdfstring{top-$k$}{top-k} aggregation diagnostics}\label{app:hard-value}
The main attention field uses the full soft aggregation
\begin{equation}
A_i^{\mathrm{soft}}
=
\sum_j a_{ij}Vu_j.
\end{equation}
As a sparsity diagnostic, we replace this by the top-$k$ value contributions,
where
\begin{equation}
S_i^{(k)}
=
\operatorname{TopK}_j
\left\|a_{ij}Vu_j\right\|,
\end{equation}
and
\begin{equation}
A_i^{\mathrm{top}k}
=
\sum_{j\in S_i^{(k)}} a_{ij}Vu_j.
\end{equation}
This tests whether the soft attention update is genuinely distributed or already
dominated by a small number of value anchors.

\begin{table*}[t]
\centering
\small
\caption{\textbf{Top-$k$ value aggregation diagnostic.}  Cosine is measured
between the top-$k$ reconstruction plus FFN and the full soft attention plus FFN
field.  Loss increase is measured relative to full soft attention.  Top-$16$
nearly recovers both direction and loss in all valid GPT/Pythia models.}
\label{tab:supp-hard-value-topk}
\begin{tabular}{lcccccccc}
\toprule
 & \multicolumn{4}{c}{Cosine to full soft direction} & \multicolumn{4}{c}{Loss increase} \\
\cmidrule(lr){2-5}\cmidrule(lr){6-9}
Model & top-1 & top-4 & top-16 & top-32 & top-1 & top-4 & top-16 & top-32 \\
\midrule
GPT-2        & $0.902$ & $0.954$ & $0.984$ & $0.990$ & $+0.0713$ & $+0.0295$ & $+0.0070$ & $+0.0036$ \\
GPT-2-large  & $0.909$ & $0.957$ & $0.988$ & $0.995$ & $+0.0125$ & $+0.0058$ & $+0.0011$ & $+0.0001$ \\
Pythia-410M  & $0.933$ & $0.972$ & $0.994$ & $0.998$ & $+0.0696$ & $+0.0247$ & $+0.0041$ & $+0.0013$ \\
Pythia-1.4B  & $0.917$ & $0.967$ & $0.994$ & $0.998$ & $+0.0858$ & $+0.0274$ & $+0.0042$ & $+0.0011$ \\
\bottomrule
\end{tabular}
\end{table*}

The rapid saturation indicates that residual attention updates in these models
are often value-anchor dominated rather than fully distributed.  This does not
replace the soft attention model, but identifies an important sparse regime of
the same aggregation mechanism.

%  --  --  --  --  --  --  --  --  --  --  --  --  --  --  --  --  --  --  --  --  --  --  -- 
\subsection{Component ablation of critical residual geometry}\label{app:component-geometry}
To isolate how the FFN reorganizes residual directions, we compare three reduced
fields:
\begin{equation}
g_{\mathrm{attn}}(u)=P_uVu,
\end{equation}
\begin{equation}
g_{\mathrm{FFN}}(u)=P_u\Phi(u),
\end{equation}
and
\begin{equation}
g_{\mathrm{full}}(u)=P_u\left(Vu+\Phi(u)\right).
\end{equation}
For each field, we search for candidate critical residual directions
$u_\star$ satisfying $g(u_\star)\approx0$, keeping only candidates with
\begin{equation}
\frac{\|g(u_\star)\|}{\|F(u_\star)\|+\varepsilon}
\leq
0.05.
\end{equation}
We classify candidates by the tangent spectrum of the local Jacobian.

\begin{table*}[t]
\centering
\small
\caption{\textbf{Component ablation of critical residual geometry.}
We compare Attention-only $P_uVu$, FFN-only $P_u\Phi(u)$, and the full field
$P_u(Vu+\Phi(u))$.  Each entry reports A/S/R, the number of attractor, saddle,
and repelling candidates, followed by the saddle fraction in parentheses.}
\label{tab:supp-component-ablation-wide}
\begin{tabular}{lccc}
\toprule
Model & Attention-only & FFN-only & Full \\
\midrule
GPT-2
& $6/25/3$ \;($73.5\%$), $n=34$
& $3/14/4$ \;($66.7\%$), $n=21$
& $3/13/5$ \;($61.9\%$), $n=21$ \\
GPT-2-large
& $2/17/3$ \;($77.3\%$), $n=22$
& $2/32/3$ \;($86.5\%$), $n=37$
& $2/30/4$ \;($83.3\%$), $n=36$ \\
Pythia-410M
& $2/21/3$ \;($80.8\%$), $n=26$
& $1/30/5$ \;($83.3\%$), $n=36$
& $1/29/6$ \;($80.6\%$), $n=36$ \\
Pythia-1.4B
& $5/10/3$ \;($55.6\%$), $n=18$
& $2/42/2$ \;($91.3\%$), $n=46$
& $2/44/1$ \;($93.6\%$), $n=47$ \\
\midrule
All models
& $15/73/12$ \;($73.0\%$), $n=100$
& $8/118/14$ \;($84.3\%$), $n=140$
& $8/116/16$ \;($82.9\%$), $n=140$ \\
\bottomrule
\end{tabular}
\end{table*}

The FFN-only field is already strongly saddle-dominated: aggregated across the
four models, $118/140$ FFN-only candidates are saddle-like.  The full field
remains similarly saddle-dominated, with $116/140$ saddle-like candidates.  Thus
FFN adjustment should not be interpreted as merely creating stable residual
attractors; it reorganizes the local phase portrait through saddle/transport-like
geometry.

%  --  --  --  --  --  --  --  --  --  --  --  --  --  --  --  --  --  --  --  --  --  --  -- 
\subsection{Phase reconstruction, saddle spectra, and basin maps}\label{app:phase-saddle}
We reconstruct an empirical field
$\hat G_\ell(u)\approx P_u(A_\ell+\Phi_\ell)$ in sampled PCA subspaces.  The goal
is not to identify exact global attractors, but to diagnose whether local fields
look convergent, saddle-like, or rotational.  Inward scores are close to
balanced, while estimated spectra often have positive real and nonzero imaginary
parts.  This suggests that real residual flows are transient and transport-like
rather than simple autonomous convergence to stable residual attractors.

\begin{figure*}[t]
\centering
\includegraphics[width=0.86\textwidth]{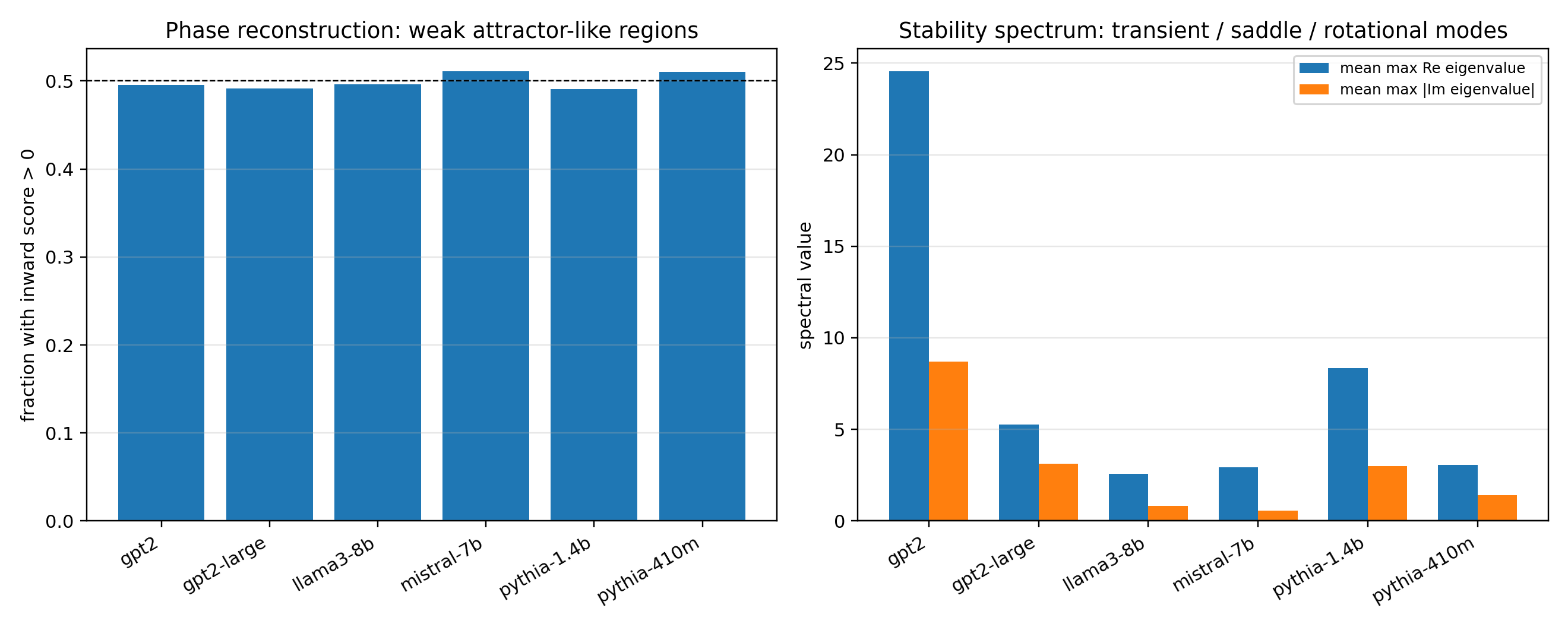}
\caption{\textbf{Phase reconstruction and local stability.} The empirical phase
portrait shows weak attractor-like regions, but the stability spectrum is often
saddle/rotational.}
\label{fig:supp-phase-stability}
\end{figure*}

For the original full field $g_\alpha(u)=P_u(Vu+\alpha\Phi(u))$, saddle residual
directions dominate the recovered critical set at $\alpha=1$: across GPT-2,
GPT-2-large, Pythia-410M, and Pythia-1.4B, $91/111$ candidates are saddle-like,
$11$ are repelling, and $9$ are residual attractors.  This supports the view that
FFN adjustment reshapes saddle and basin geometry rather than simply
manufacturing stable residual attractors.

\begin{figure*}[t]
\centering
\includegraphics[width=0.86\textwidth]{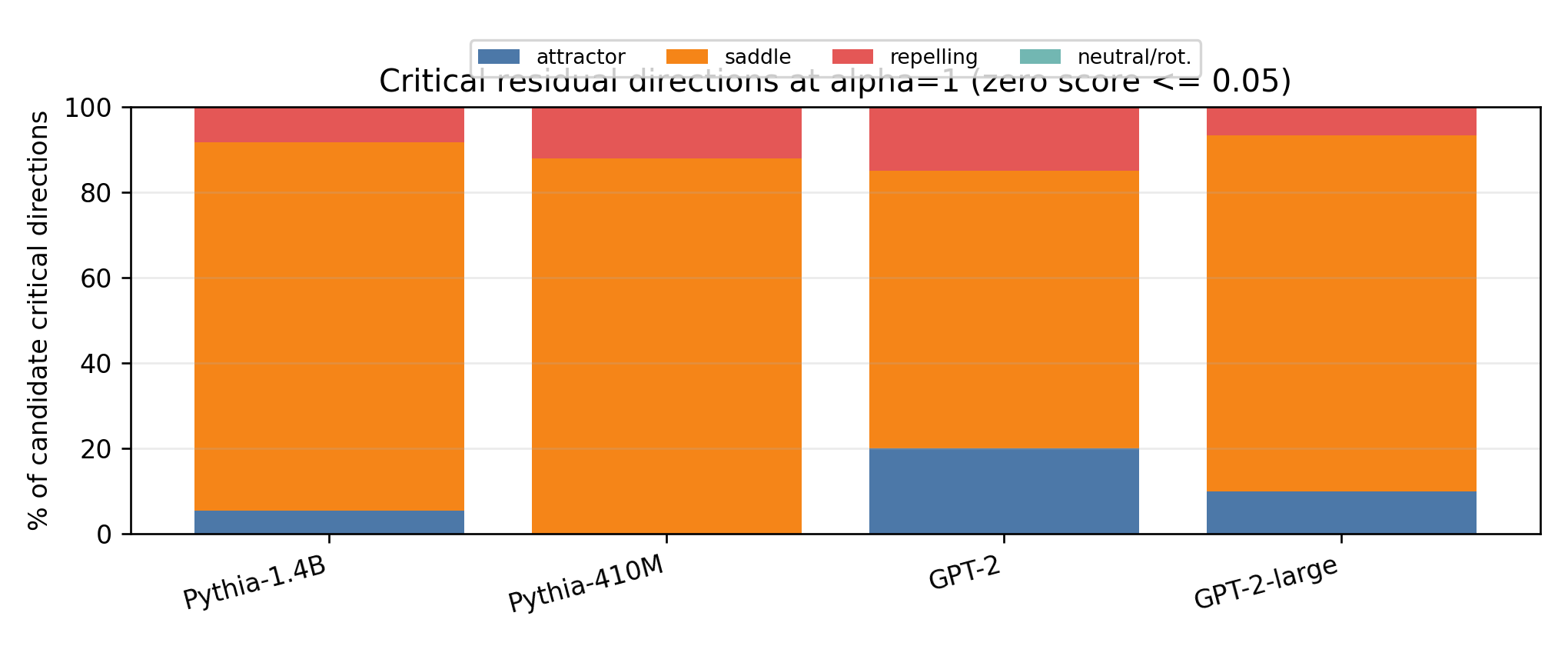}
\caption{\textbf{Critical residual direction classes.} Among candidate critical
residual directions at $\alpha=1$ and zero score $\leq0.05$, saddle residual
directions are the dominant class in all tested accessible models.}
\label{fig:supp-saddle-class}
\end{figure*}

Varying $\alpha$ shifts the tangent spectra: maximal real parts, saddle index,
and imaginary components change with FFN gain.  Two-dimensional basin slices
further show that turning on the FFN can merge, shift, or fragment local basin
regions.  These maps are qualitative PCA-sphere diagnostics, not global phase
portraits of the full model.

\begin{figure*}[t]
\centering
\includegraphics[width=0.96\textwidth]{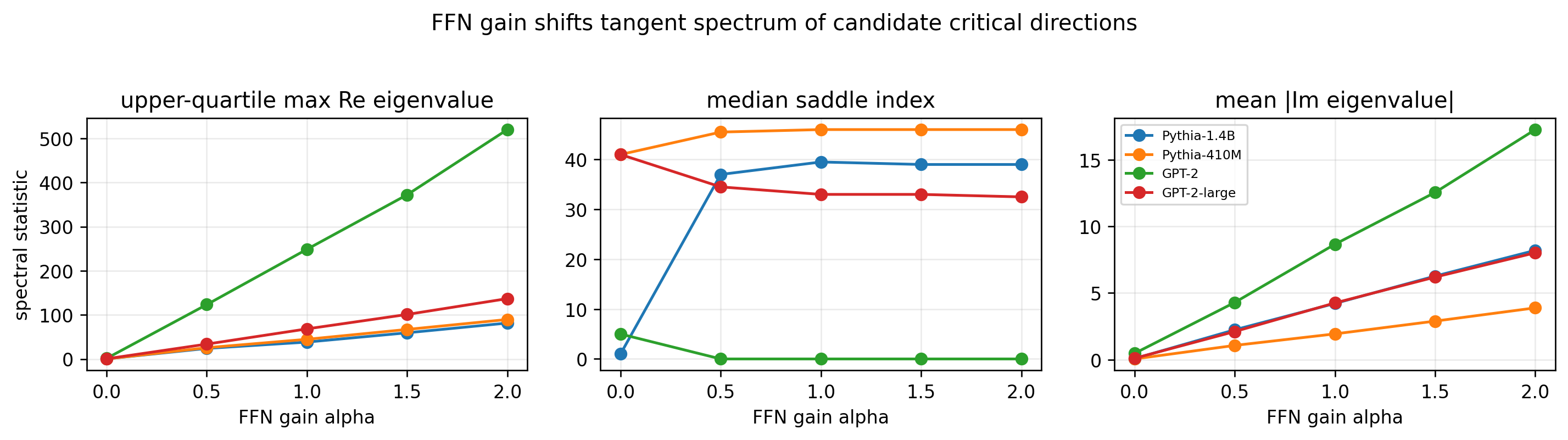}
\caption{\textbf{FFN gain shifts tangent spectra.} Continuation in $\alpha$
changes maximal real parts, saddle index, and imaginary components of candidate
critical residual directions.}
\label{fig:supp-saddle-spectral}
\end{figure*}

\begin{figure*}[t]
\centering
\includegraphics[width=0.90\textwidth,height=0.78\textheight,keepaspectratio]{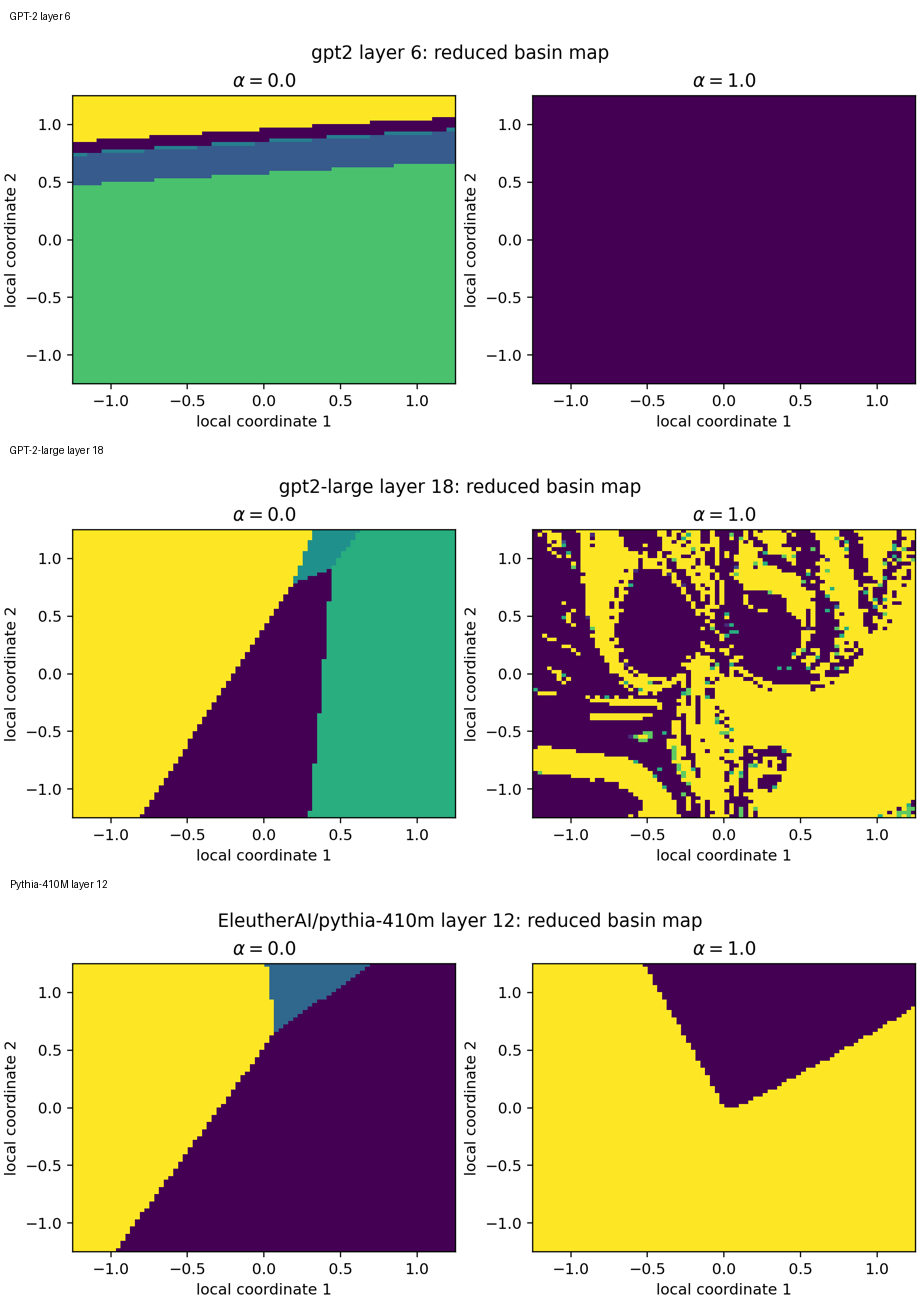}
\caption{\textbf{Basin maps under FFN gain.} Comparing $\alpha=0$ and $\alpha=1$
in two-dimensional sphere slices shows that FFN adjustment can move basin
boundaries and change local attraction regions.}
\label{fig:supp-saddle-basins}
\end{figure*}

%  --  --  --  --  --  --  --  --  --  --  --  --  --  --  --  --  --  --  --  --  --  --  -- 
\subsection{Causal prefix, anchor, and editing diagnostics}\label{app:prefix-anchor-editing}
Decoder-only attention is causal, so residual geometry should depend on the
prefix rather than only on an unordered token distribution.  We test this by
fixing a suffix and varying the prefix, then measuring the angular spread of the
last-token residual direction.  Large spread supports a prefix-indexed,
Volterra-like view of decoder dynamics.

Anchor interventions provide a local controllability diagnostic.  We identify
prefix tokens with large tangential FFN adjustment norm $\|P_u\Phi(u)\|$, replace
or perturb them, and measure the downstream last-token angular shift.  The shift
is measurable but moderate, indicating that high-reaction prefix tokens influence
but do not determine the downstream basin.

\begin{figure*}[t]
\centering
\includegraphics[width=0.88\textwidth]{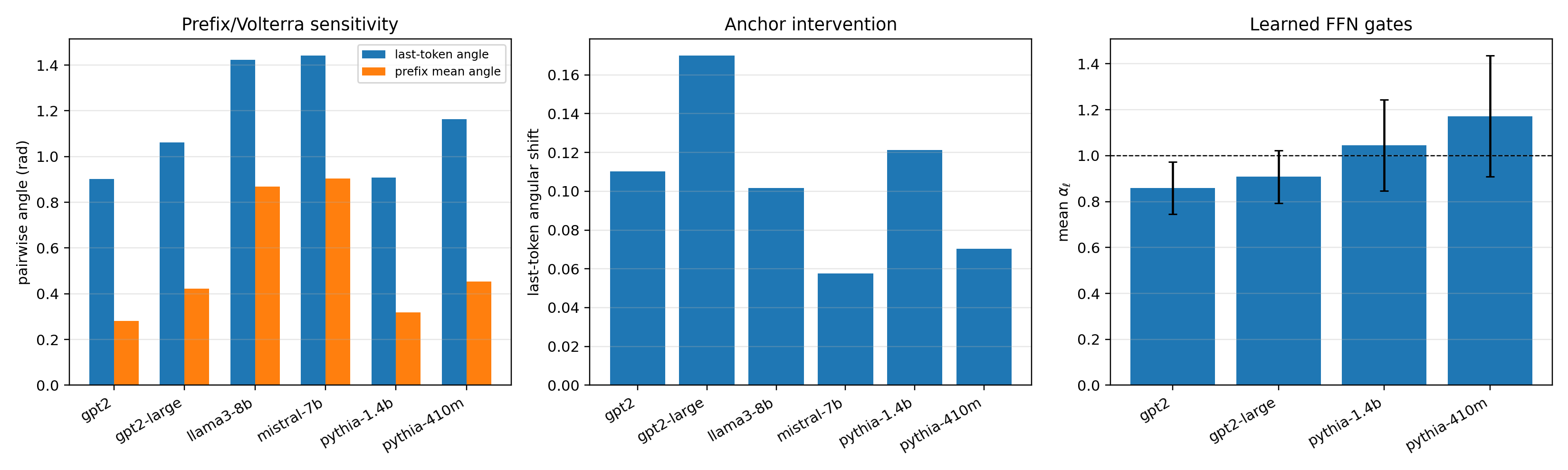}\\[-1mm]
\includegraphics[width=0.48\textwidth]{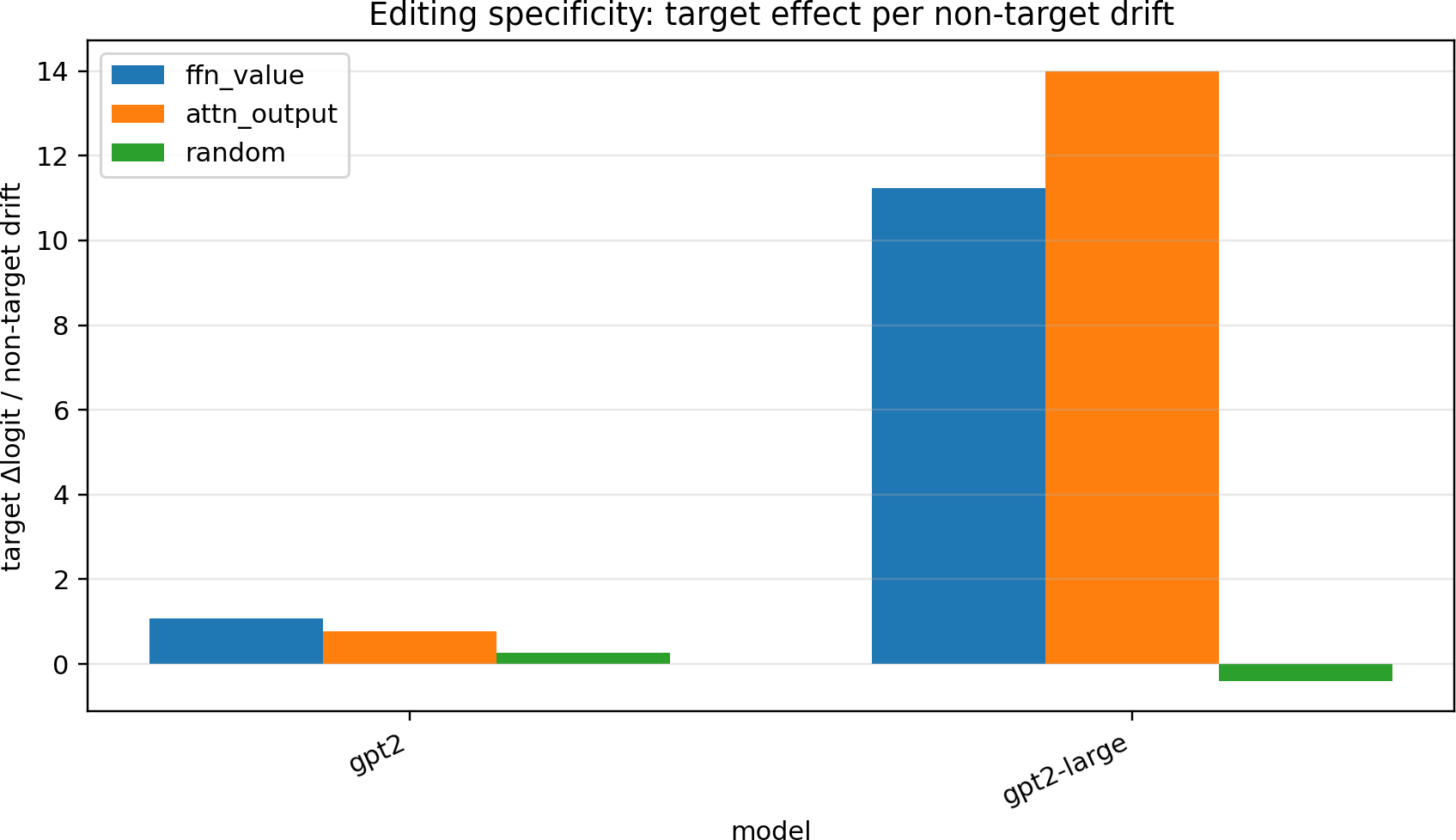}
\caption{\textbf{Prefix, anchor, gates, and editing diagnostics.}
Top: different prefixes send the same suffix to different last-token residual
directions; replacing an FFN-reactive anchor token causes a measurable downstream
angular shift; learned FFN gates are layer- and architecture-dependent. Bottom:
editing specificity, measured as target movement normalized by non-target logit
drift. FFN-value edits are local and target-aligned, while random directions are
ineffective.}
\label{fig:supp-prefix-anchor-editing}
\end{figure*}

FFN-value edits are nearly linear and low-drift on GPT-2/GPT-2-large.  Broad
attention-output edits can induce larger raw logit movement but also produce
larger distributional drift.  Editing is therefore treated as supporting
evidence for local steering, not as the main practical application.

%  --  --  --  --  --  --  --  --  --  --  --  --  --  --  --  --  --  --  --  --  --  --  -- 
\subsection{Radial norms, OV non-symmetry, and architectural diagnostics}\label{app:radial-ov-atlas}
The angular dynamics are speed-regulated by residual magnitude, so radial
statistics are complementary to the tangential results.  We report layerwise
coefficients of variation
\begin{equation}
\operatorname{CV}_\ell
=
\frac{\operatorname{std}(\|x^\ell\|)}{\operatorname{mean}(\|x^\ell\|)},
\end{equation}
after removing the largest $5\%$ norm outliers.  The x-axis is the relative
layer index $\ell/L$.

Real OV maps are far from symmetric, which motivates using non-symmetric
transport and tangent spectra rather than a purely symmetric energy picture.
Magnitude/diversity atlas measurements show that residual norms grow across
depth and that bulk diversity remains architecture-dependent.

\begin{figure*}[t]
\centering
\includegraphics[width=0.80\textwidth]{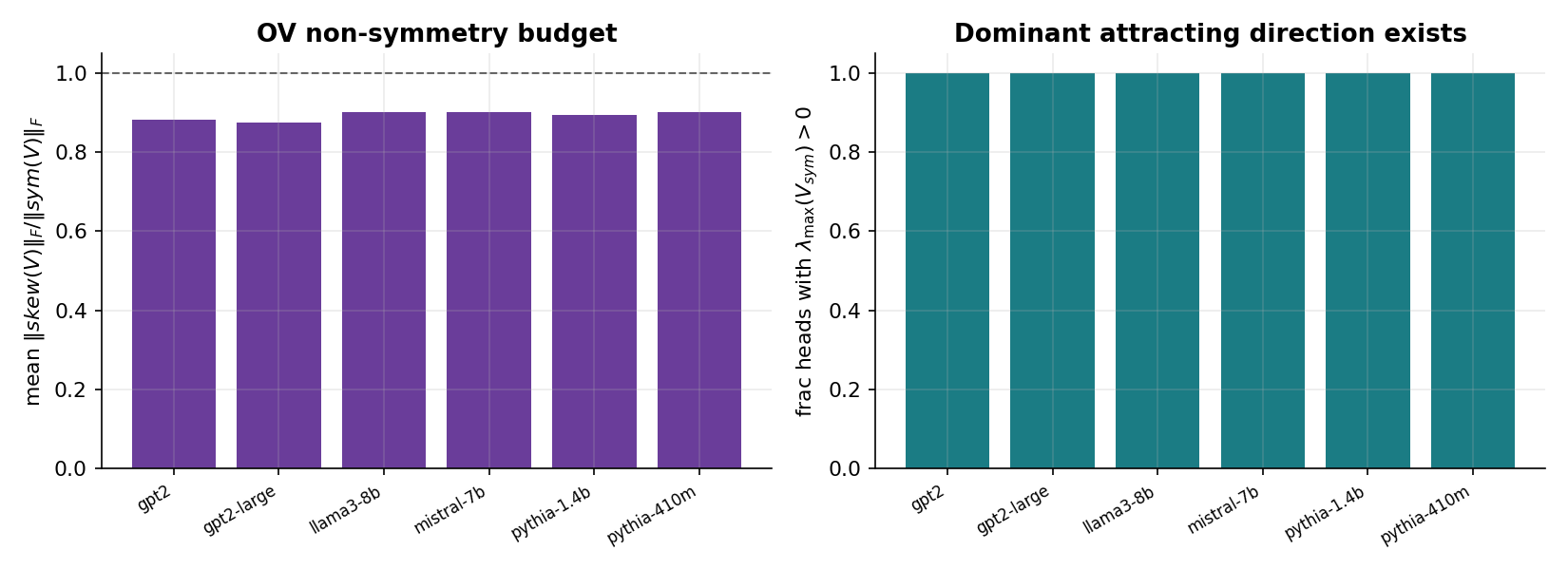}
\caption{\textbf{OV non-symmetry and dominant directions.} Real OV maps are far
from symmetric, while dominant directions are present essentially everywhere.}
\label{fig:supp-ov}
\end{figure*}

\begin{figure*}[t]
\centering
\includegraphics[width=0.84\textwidth]{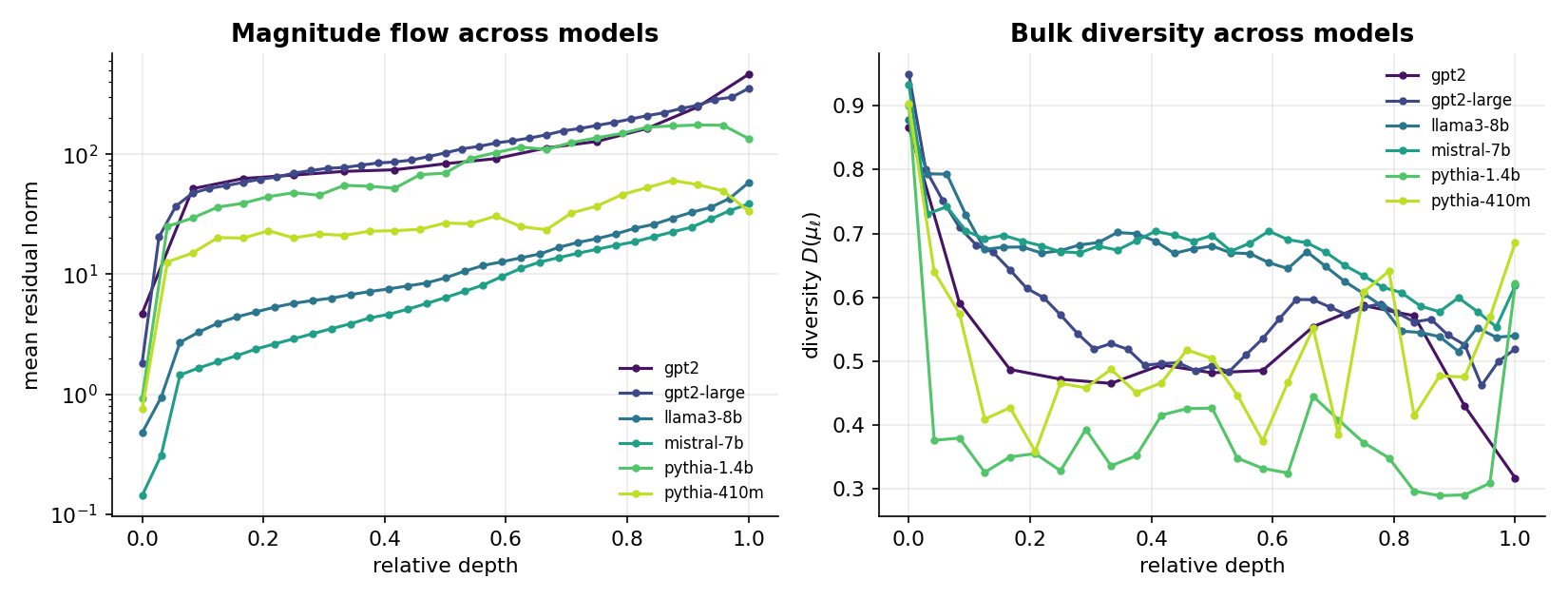}
\caption{\textbf{Magnitude and diversity atlas.} Residual norms grow across
depth, justifying speed-regulated angular dynamics. Bulk diversity remains
nonzero but is architecture-dependent.}
\label{fig:supp-atlas}
\end{figure*}

\begin{figure*}[t]
\centering
\includegraphics[width=0.92\textwidth]{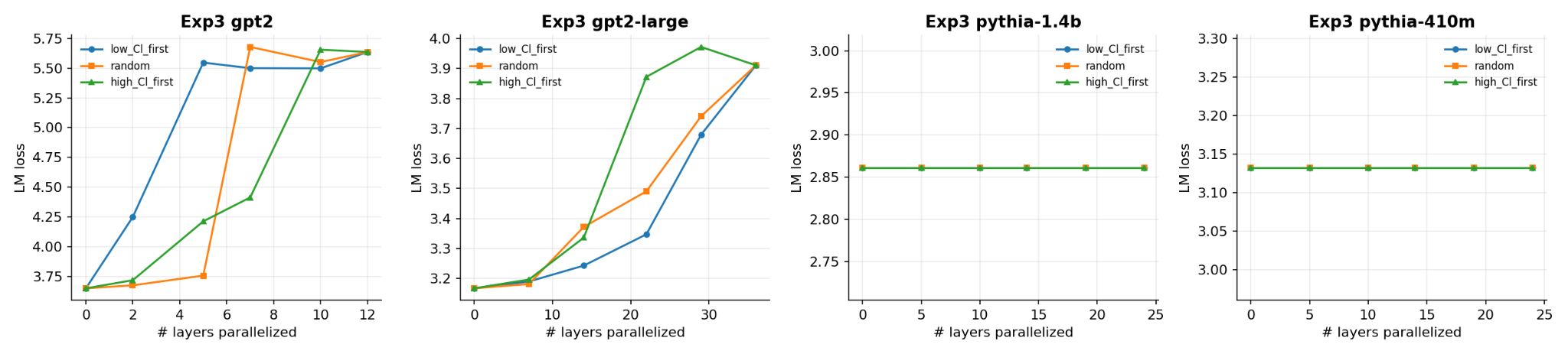}
\caption{\textbf{Commutator-guided parallelization.} Layers are parallelized in
low-$C_\ell$-first, random, or high-$C_\ell$-first order. The ordering matters
most when the model has a nontrivial commutator profile; flat curves indicate
layers or architectures insensitive to this intervention.}
\label{fig:supp-parallel}
\end{figure*}
\FloatBarrier

\bibliographystyle{plainnat}
\bibliography{references}

\end{document}